\theoremstyle{plain}
\newtheorem{theorem}{Theorem}[section]
\newtheorem{proposition}[theorem]{Proposition}
\newtheorem{lemma}[theorem]{Lemma}
\newtheorem{corollary}[theorem]{Corollary}
\theoremstyle{definition}
\newtheorem{definition}[theorem]{Definition}
\newtheorem{assumption}[theorem]{Assumption}
\theoremstyle{remark}
\setlist[enumerate]{leftmargin=*, label= {\arabic*.}, itemsep=0em, topsep=0pt}
  \let\Cref\crtCref
  \let\cref\crtcref
\newcommand{\define}{\coloneqq}
\DeclareMathOperator*{\E}{\mathbb{E}} 
\newcommand{\Indicator}{\mathds{1}}
\DeclareMathOperator*{\argmax}{argmax}
\DeclarePairedDelimiter\ceil{\lceil}{\rceil}
\DeclarePairedDelimiter\floor{\lfloor}{\rfloor}
\newcommand{\RR}{\mathbf{R}} 
\newcommand{\vs}{\emph{vs.}\xspace}   
\newcommand{\eg}{\emph{e.g.}\xspace}  
\newcommand{\ie}{\emph{i.e.}\xspace}  
\newcommand{\etc}{\emph{etc.}\xspace} 
\newcommand{\viz}{\emph{viz.}\xspace} 
\newcommand{\DD}{\mathds{D}}
\newcommand{\OO}{\mathds{O}}
\newcommand{\GG}{\mathds{G}}
\def\DP{{\textsf{DP}}\xspace}
\def\EO{{\textsf{EO}}\xspace}
\def\EOp{{\textsf{EOp}}\xspace}
\def\D{{\mathcal D}}
\def\S{{\mathcal S}}
\def\R{{\mathcal T}}
\def\H{{\mathcal H}}
\def\TP{{1,1}}
\def\FP{{0,1}}
\def\TN{{0,0}}
\def\FN{{1,0}}
\def\var{{\mathrm{Var}}}
\def\cov{{\mathrm{Cov}}}
\def\disp{{\Delta^{\star}}}
\def\dispdp{{\Delta^{\star}_{\DP}}}
\renewcommand{\cov}{\operatorname*{Cov}}
\renewcommand{\var}{\operatorname*{Var}}
\newcommand{\rr}[1]{#1}
\newcommand{\rev}[1]{#1}
\title{Fairness Transferability \\ Subject to Bounded Distribution Shift}
\def\@fnsymbol#1{\ensuremath{\ifcase#1\or \dagger\or *\or \ddagger\or
   \mathsection\or \mathparagraph\or \|\or **\or \dagger\dagger
   \or \ddagger\ddagger \else\@ctrerr\fi}}
\author{Yatong Chen\thanks{These authors contributed equally to this work.}\ , Reilly Raab\footnotemark[1]\ , Jialu Wang, Yang Liu\thanks{Corresponding author: \href{mailto:me@yangliu@ucsc.edu}{yangliu@ucsc.edu}}
  \\
  \\
  \normalsize{University of California, Santa Cruz}
  \\
    {\small\texttt{\{ychen592, reilly, faldict, yangliu\}@ucsc.edu}}\\
}
\begin{document}

\maketitle

  \begin{abstract}
Given an algorithmic predictor that is ``fair'' on some \emph{source}
distribution, will it still be fair on an unknown \emph{target} distribution
that differs from the source within some bound? In this paper, we study the
\emph{transferability of statistical group fairness} for machine learning
predictors (\ie, classifiers or regressors) subject to bounded distribution
shift\rr{. Such shifts may be introduced by initial training data uncertainties,} user adaptation to a deployed \rr{predictor}, dynamic environments, or \rr{the use of pre-trained models in new settings}. Herein, we develop a bound \rr{that} characteriz\rr{es} such
transferability\rr{,} flagging potentially inappropriate deployments of machine
learning for socially consequential tasks. We first develop a framework for
bounding violations of statistical fairness subject to distribution shift,
formulating a generic upper bound for transferred fairness violation\rr{s} as our
primary result.  We then develop bounds for specific worked examples, \rr{focusing on}
two commonly used fairness definitions (\ie, demographic parity and equalized
odds) \rr{and} two classes of distribution shift (\ie, covariate shift and label
shift). Finally, we compare our theoretical bounds to deterministic models of
distribution shift and against real-world data, finding that we are able to
estimate fairness violation bounds in practice, even when simplifying
assumptions are only approximately satisfied.
\end{abstract}

\section{Introduction}

\emph{Distribution shift} is a common, real-world phenomenon that affects
machine learning deployments when the \emph{target} distribution of examples
(features and labels) \rr{ultimately} encountered by a \rr{data-driven} policy diverges from the
\emph{source} distribution it was trained for. For socially consequential
decisions guided by machine learning, such shifts in the underlying distribution
can invalidate fairness guarantees and cause harm by exacerbating social
disparities.
Unfortunately, distribution shift can be technically difficult or impossible to
model at training time \rr{(\eg, when depending on complex social dynamics or
unrealized world events)}.
Nonetheless, we still wish to
certify the robustness of fairness \rr{metrics for} a policy on possible target
distributions.

In this paper, we provide a general framework for quantifying the robustness of statistical group fairness
guarantees.
We assume that the target distribution is \rr{adversarially} drawn from a bounded
domain, thus reducing the hard problem of modelling distribution shift dynamics
to a more tractable, \rr{static} problem. \rr{With this framework, we
  can detect potentially inappropriate policy applications, prior to deployment, when
  fairness violation bounds are not sufficiently small}.

This work bridges a gap between recent literature on \emph{domain adaptation},
which has largely focused on the transferability of prediction accuracy (rather
than fairness), and \emph{algorithmic fairness}, which has typically
considered static distributions or prescribed models of
distribution shift. Our work is the first to systematically bound quantifiable
violations of statistical group fairness while remaining agnostic to (1) the
mechanisms responsible for distribution shift, (2) \rr{how group-specific
distribution shifts are quantified}, and (3) the specific statistical
definition of \rr{group} fairness applied.

\begin{wrapfigure}{r}{.44\linewidth}
    \centering
    \includegraphics[width=\linewidth]{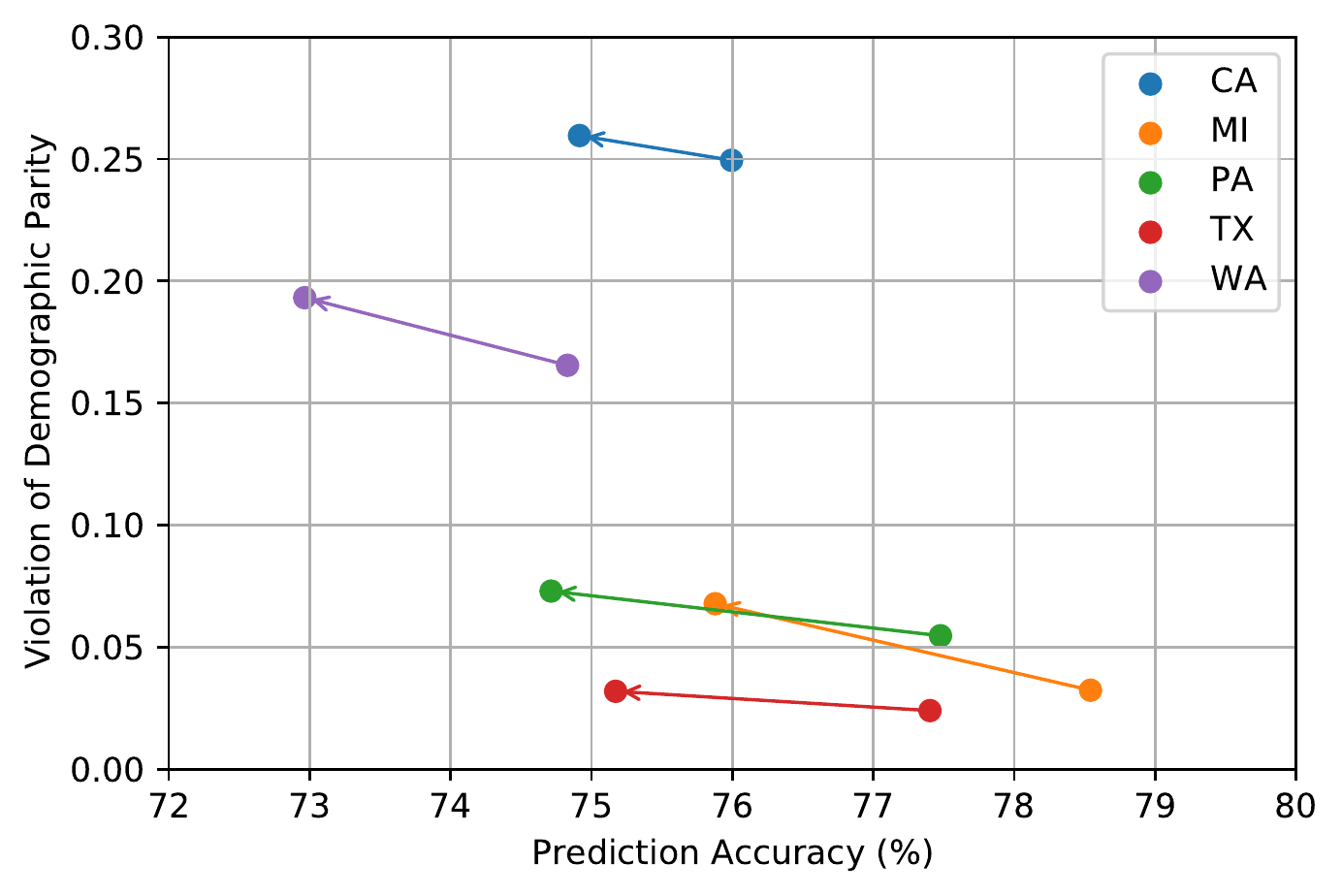}
    \caption{In \cref{sec:realcomparison}, we evaluate our bounds against historical, temporal distribution shifts in demographics and income recorded by the US Census Bureau~\cite{ding2021retiring}. The above figure depicts changes to \rr{income-prediction} accuracy and \rr{demographic parity violation} when a classifier initially trained on US state-specific \rr{demographic} data for 2014 is reused on 2018 data, thus exemplifying the negative potential effects of distribution shift.}
    \label{example:forecast-income}
\end{wrapfigure}
Our primary result is a bound on a policy's potential ``violation of statistical
group fairness''\rr{---}defined in terms of the differences in policy outcomes between
groups\rr{---}when applied to a target distribution shifted relative to the source
distribution within known constraints.
\rr{Such settings naturally arise whenever training data represents a random
sample of a target population with different statistics or a sample from dynamic
environments, when a policy is reused on a new distribution without retraining, or
whenever policy deployment itself induces a distribution shift}.
\rr{As an example of this last case, strategic individuals
seeking loans might change their features or abstain from future application (thus shifting the distribution of
examples) in response to policies trained on historical data} \cite{hardt2016strategic, ustun2019actionable, zhang2020fair}. 
\rr{Beyond policy selection, exogenous pressure such as economic trends and noise
may also drive distribution shift in this example.}

In \cref{example:forecast-income}, we show how a real-world
distribution shift in demographic and income data for US states between 2014 and
2018 may increase fairness violations while decreasing accuracy for a
hypothetical classifier trained on the 2014 distribution. In such settings, it
is useful to quantify how fairness guarantees transfer across distributions
shifted within some bound, thus allowing the deployment of unfair machine
learning policies to be avoided.

\vspace{-0.1in}

\vspace{-0.05in}
\subsection{Related Work}
\vspace{-0.1in}

Our work \rr{considers} a setting similar \rr{to} recent \rr{studies of}
\emph{domain adaptation}, which have largely focused on characterizing the
effects of distribution shift on prediction performance rather than fairness.
\rr{Our work also} builds on \rr{efforts in} \emph{algorithmic fairness},
\rr{especially \emph{dynamical} treatments of distribution shift in response to deployed machine learning policies} \cite{liu2018delayed,creager2020causal, raab2021unintended}. 
We reference specific prior work in these domains in \cref{secA:related-work}\rr{, and here discuss existing work}
 that focuses on how certain measures of fairness are affected when policies are subject to
specific distribution shift.

\textbf{Fairness subject to Distribution Shift:}
A number of recent studies have considered specific examples of
fairness transferability subject to distribution shift
\cite{schumann2019transfer, Coston2019fair, Singh2021fairness,rezaei2021robust, kang2022certifying}. 
In particular, \citet{schumann2019transfer} examine \emph{equality of
opportunity} and \emph{equalized odds} as definitions of group fairness subject
to distribution shifts quantified by an $\H$-divergence function; \citet{Coston2019fair}
consider \emph{demographic parity} subject to a \emph{covariate shift} assumption
while group identification remains unavailable to the classifier;
\citet{Singh2021fairness} focus on common group fairness definitions for binary
classifiers subject to a class of distribution shift that generalizes covariate
shift and label shift by preserving some conditional probability between
variables; and
\citet{rezaei2021robust} similarly consider common binary classification
fairness definitions \rr{such as equalized odds} subject to covariate
shift.
While we address similar settings to these works as special cases of our bound,
we propose a unifying formulation for a broader class of statistical group fairness
definitions and distribution shifts. In doing so, we recognize that particular
settings recommend themselves to more natural measures of distribution shift,
providing examples in \cref{sec:covariateDP}, \cref{sec:covariateEO}, and
\cref{sec:labelDP}). 

\rr{Another thread in existing literature is the development of robust models with the goal of guaranteeing fairness on a modelled target distribution} (e.g., \cite{an2022transferring,Roh2021sample,mandal2020ensuring, biswas2021ensuring, kang2022certifying})
, for example, by assuming covariate shift
and the availability of some unlabelled target data \cite{Coston2019fair,
  Singh2021fairness, rezaei2021robust}.
In particular, \citet{Singh2021fairness}
focus on learning stable models that will preserve prediction accuracy and
fairness, utilizing a causal graph to describe anticipated distribution
shifts. \citet{rezaei2021robust} takes a robust optimization
approach, and \citet{Coston2019fair} develops prevalence-constrained and
target-fair covariate shift method for getting the robust model. In contrast, our goal is to quantify fairness violations after an \emph{adversarial} distribution shift for \rr{\emph{any} given policy, including those not trained with robustness in mind}.

\vspace{-0.1in}
\subsection{Our Contributions}
\vspace{-0.1in}

Our primary contribution is formulating a general, \rev{worst-case} upper bound for a \rev{given} policy's
violation of statistical group fairness subject to group-dependent distribution
shifts \rev{within presupposed bounds} (i.e., \cref{eq:mainresult}).
Bounding
violations of fairness subject to distribution shift allows us to recognize and avoid potentially
inappropriate deployments of machine learning \rev{when the potential disparities of a prospective policy eclipse a given threshold within bounded distribution shifts of the training distribution.}

We first characterize the space of statistical
group fairness definitions and possible distribution shifts by appeal to
premetric functions (\cref{def:premetric}). \rr{After formulating the worst-case upper bound, we explore} common sets of
simplifying assumptions for this bound as special cases, yielding tractable
calculations for several familiar combinations of fairness definitions and
subcases of distribution shift (\cref{thm:dp-covariate}, \cref{thm:dp-label}) with
readily interpretable results. \rr{Finally,} we compare our theoretical bounds to
prescribed models of distribution shift in \cref{sec:syntheticcomparison} and to
real-world data in \cref{sec:realcomparison}. \rr{The details for reproducing our experimental results can be found at \\ {\small \texttt{\url{https://github.com/UCSC-REAL/Fairness_Transferability}}}.} 


\section{Formulation}\label{sec:formulation}
\emph{The \rr{appendices} include a table of notation (\cref{secA:notation}) and all proofs (\cref{app:proofs}).}
\vspace{-0.1in}
\subsection{Algorithmic Prediction}
\vspace{-0.05in}

We consider two distributions, \(\S\) (\emph{source}) and \(\R\)
(\emph{target}), each defined as a probability distribution for
\emph{examples}, where each example defines values for three random variables:
\(X\), a \textbf{feature} (\eg, \(x\)) with arbitrary domain \(\mathcal{X}\); \(Y\), a \textbf{label} (\eg, \(y\)) with arbitrary domain \(\mathcal{Y}\); and \(G\), a \textbf{group} (\eg, \(g\) or \(h\)) with finite, countable domain \(\mathcal{G}\). The predictor's policy \(\pi\), intended for \(\S\) but used on \(\R\), defines a fourth variable for each example: \viz, \(\hat{Y}\), a \textbf{predicted label} (\eg, \(\hat{y}\)) with domain \(\hat{\mathcal{Y}}
    = \mathcal{Y}\).


Using \(\mathcal{P}(\cdot)\) to denote the space of probability distributions
over some domain, we denote the space of distributions over examples
as \( \DD \define \mathcal{P}(\mathcal{X} \times \mathcal{Y} \times \mathcal{G}) \), such that \(\S, \R \in \DD\).
It will also be useful for us to notate the space of distributions over example
\emph{outcomes} associated with a given policy as
\( \OO \define \mathcal{P}(\mathcal{X} \times \mathcal{Y} \times \hat{\mathcal{Y}}) \)
and the space of distributions over of \emph{group-specific examples} as
\(\GG \define \mathcal{P}(\mathcal{X} \times \mathcal{Y})\).


Without loss of generality, we allow the prediction policy \(\pi\) to be
stochastic, such that, for any combination \((x, g)\), the predictor effectively
samples \(\hat{Y}\) from a corresponding probability distribution \(\pi(x, g)\). 
Stochastic classifiers arise in various constrained optimization problems and \rr{proven useful} for making problems with custom losses or fairness constraints tractable \cite{cotter2019stochastic,grgichlaca2017fairness, narasimhan2018complex,wu2022metric}.

We denote the space of nondeterministic policies as
\(\Pi \define (\mathcal{X} \times \mathcal{G} \to \mathcal{P}(\hat{\mathcal{Y}}))\)
(\eg, \(\pi \in \Pi\)) and utilize the natural transformations that relate the spaces of distributions \(\DD\), policies \(\Pi\), and outcomes \(\OO\):
\begin{equation}\label{eq:pr-notation}
  \begin{aligned}
    \Pr_{\pi, \R}(\hat{Y} {=} \hat{y}, X {=} x, G {=} g) =
    \Pr_{\hat{Y} \sim \pi(x, g)}(\hat{Y} {=} \hat{y}) \cdot  \Pr_{X, G \sim \R}&(X {=} x, G {=} g)
  \end{aligned}
\end{equation}
We abuse the $\Pr$ notation for both probability
\emph{density} and probability \emph{mass} functions as appropriate.
\vspace{-0.1in}
\subsection{Statistical Group-Fairness}
\vspace{-0.05in}

We next define a broad class of \emph{disparity} functions
\(
\disp \colon \Pi \times \DD \to \RR
\)
representing how ``unfair'' a given policy is for a given distribution
(\eg, writing \(\disp(\pi, \mathcal{T})\))\rr{, noting that this notion of fairness is} limited to capturing statistical discrepancies of outcomes between groups.
\begin{definition} \label{def:premetric} We define a \emph{premetric}\footnote{\rr{Despite use on Wikipedia, this is not a standard term in the literature. In general, the axioms of a premetric as defined in \cref{def:premetric} are a subset (thus ``pre'') of those that define a metric.}} \(\Psi\) on
  the space of distributions \(p\) with respect to \(q\)
  by the properties \(\Psi(p \parallel q) \geq 0\) and \(\Psi(p \parallel p) = 0\)
  for all \(p, q\), and refer to the value of \(\Psi\) as a ``shift''.
\end{definition}
\begin{definition} \label{def:disparity} 
  We define \rr{a} \emph{statistical group disparity} \(\disp\) for
  policy \(\pi\) and distribution \(\R\) in terms of the
  symmetrized shifts between group-specific outcome distributions. We measure shifts between outcome
  distributions with a given premetric
  \(
\Psi \colon \OO^2 \to \RR
  \).
  \begin{equation} \label{eq:premetric}
    \disp(\pi, \R) \define \sum_{g, h \in \mathcal{G}}
    \Psi \Big(
    \Pr_{\pi, \R}(X, Y, \hat{Y} \mid G {=} g) ~\big\|~
    \Pr_{\pi, \R}(X, Y, \hat{Y} \mid G {=} h)
    \Big)
  \end{equation}
\end{definition}
In \cref{def:disparity},
\(\Psi\) quantifies the \rr{specific} statistical differences in outcomes between groups that
are "unfair", where a value of 0 implies perfect fairness. In this work,
we assume that \(\Psi\) is the same for all \(g,h\) and that \(\disp\) is insensitive to relative group size \(\Pr(G)\).
\vspace{-0.1in}
\paragraph{Examples}
Familiar applications of \cref{def:disparity} include \emph{demographic parity}
(\DP) and \emph{equalized odds} (\EO). A policy satisfying \DP\rr{,
  in expectation, assigns
  a given binary classification \(y \in \{0, 1\}\) to the same fraction of examples in each group}. We may measure the
violation of \DP as
\begin{equation} \label{eq:def-demographic-parity}
\begin{multlined}
      \disp_\DP(\pi, \R)
      \define \sum_{g, h \in \mathcal{G}} \Big|
      \Pr_{\pi, \R}( \hat{Y} {=} 1 \mid G {=} g) - \Pr_{\pi, \R}( \hat{Y} {=} 1 \mid G {=} h)
      \Big|
\end{multlined}
\end{equation}
The associated \emph{premetric} \(\Psi_\DP\) \rr{for} \(p, q \in \OO\) is
$
  \Psi_\DP(p \parallel q) =
  \Big|
    \Pr_p(\hat{Y} {=} 1) - \Pr_q(\hat{Y} {=} 1)
  \Big|.
$

To satisfy \EO, for binary \(\mathcal{Y} = \{0, 1\}\), \(\pi\) must maintain
group-invariant true positive and false positive classification rates.
We may measure the violation of \(\EO\) as
\begin{align}\label{eq:def-equalized-odds}
  \disp_\EO(\pi, \R)
  \define \sum_{g, h \in \mathcal{G}}
  \sum_{y \in \mathcal{Y}} \Big|
    \Pr_{\pi, \R}(\hat{Y} {=} 1 \mid G {=} g, Y {=} y) -
    \Pr_{\pi, \R}(\hat{Y} {=} 1 \mid G {=} h, Y {=} y)
  \Big|
\end{align}
The associated premetric is
$
  \Psi_\EO(p \parallel q) = \sum_{y \in \mathcal{Y}} \Big|
  \Pr_p(\hat{Y} {=} 1 \mid Y {=} y) - \Pr_q(\hat{Y} {=} 1 \mid Y {=} y)
  \Big|.
$
Note that the restriction of \EO to the \((Y = 1)\) case is known as \emph{Equal
Opportunity} (\EOp).

We remark that \cref{def:disparity} provides a unifying representation for a wide array of
statistical group ``unfairness'' definitions and may be used with inequality constraints.
That is, we may recover many working definitions of fairness that
effectively specify a maximum value of \rr{disparity}:
\begin{definition}
  A policy \(\pi\) is \textbf{\(\boldsymbol{\epsilon}\)-fair} with respect to \(\disp\)
  on distribution \(\R\) iff \(\disp(\pi, \R) \leq \epsilon\).
\end{definition}
\vspace{-0.1in}
\subsection{Vector-Bounded Distribution Shift}
Suppose, after developing policy \(\pi\) for distribution \(\S\), we realize
some new distribution \(\R\) on which the policy is actually operating. This
realization may be the consequence of sampling errors during the learning
process, strategic feedback to our policy, random processes, or the reuse of our
policy on a new distribution for which retraining is impractical.
Our goal is to bound \(\disp(\pi, \R)\) given knowledge of \(\disp(\pi, \S)\)
and some notion of how much \(\R\) \rr{possibly} differs from \(S\).
\begin{definition} \label{def:divergence}
  \(K(p \parallel q)\) is a \emph{divergence} if and only if for all \(p\) and \(q\), \(K(p \parallel q) \geq 0\) and \(K(p \parallel q) = 0 \iff q = p\).
\end{definition}
\begin{definition} \label{def:vector-divergence}
  Define the group-vectorized shift \(\mathbf{D}\), as \(\S\) mutates into \(\R\), as
  \begin{equation}
    \mathbf{D}(\R \parallel \S) \define \sum_g \mathbf{e}_g D_{g}\big(
    \Pr_{\R}(X, Y \mid G {=} g)
    \parallel
    \Pr_{\S}(X, Y \mid G {=} g)
    \big)
\end{equation}

where \(\mathbf{e}_{g}\) represents a unit vector indexed by \(g\), and each
\(D_{g} \colon \GG^{2} \to \RR\) is a divergence \rr{(\cref{def:divergence})}. Note that each \(D_g\) also defines a premetric (but not necessarily a divergence) on \(\DD\).
\end{definition}
\begin{assumption} \label{asm:budget}
Let there exist some vector \(\mathbf{B} \succeq 0\) bounding \(\mathbf{D}(\R \parallel \S) \preceq \mathbf{B}\), where \(\preceq\)
and \(\succeq\) denote element-wise inequalities.
\end{assumption}
In \cref{asm:budget}, \(\mathbf{B}\) limits the possible distribution shift as
\(\S\) mutates into \(\R\), without requiring us to specify a model for how
distributions evolve. When modelling distribution shift requires complex dynamics (\eg, when agents learn and respond to classifier policy), we reduce a potentially difficult dynamical problem to a more tractable, adversarial problem to achieve a bound.
\begin{lemma} \label{lem:v-of-B0-is-R}
  For all \(\pi\), \(\disp\), and \(\mathbf{D}\), when \(\mathbf{B} = 0\),
\(\disp(\pi, \S) = \disp(\pi, \R)\).
\end{lemma}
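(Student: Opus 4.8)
The plan is to reduce the hypothesis $\mathbf{B} = 0$ to the equality of the two group-conditional feature-label distributions, and then push this equality through the fixed policy $\pi$ to the group-conditional outcome distributions appearing in \cref{def:disparity}.

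First I would pin $\mathbf{D}(\R \parallel \S)$ to zero. By \cref{def:vector-divergence}, each component $D_g$ is a divergence, so $\mathbf{D}(\R \parallel \S) \succeq 0$ automatically. Combined with the bound $\mathbf{D}(\R \parallel \S) \preceq \mathbf{B} = 0$ supplied by \cref{asm:budget}, an element-wise squeeze gives $\mathbf{D}(\R \parallel \S) = 0$, i.e., $D_g\big(\Pr_\R(X, Y \mid G {=} g) \parallel \Pr_\S(X, Y \mid G {=} g)\big) = 0$ for every $g$. Applying the defining property of a divergence from \cref{def:divergence}, namely $K(p \parallel q) = 0 \iff p = q$, then yields the key equality $\Pr_\R(X, Y \mid G {=} g) = \Pr_\S(X, Y \mid G {=} g)$ for all $g$.

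Next I would lift this to outcome distributions. Because $\pi$ depends only on $(X, G)$, the prediction $\hat{Y}$ is conditionally independent of $Y$ given $(X, G)$, so the factorization in \cref{eq:pr-notation} extends to $\Pr_{\pi, \R}(X {=} x, Y {=} y, \hat{Y} {=} \hat{y} \mid G {=} g) = \Pr_{\hat{Y} \sim \pi(x, g)}(\hat{Y} {=} \hat{y}) \cdot \Pr_\R(X {=} x, Y {=} y \mid G {=} g)$, and identically for $\S$. The policy factor is the same in both, and the feature-label factors coincide by the previous step, so $\Pr_{\pi, \R}(X, Y, \hat{Y} \mid G {=} g) = \Pr_{\pi, \S}(X, Y, \hat{Y} \mid G {=} g)$ for every $g$. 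Substituting these equal arguments into \cref{eq:premetric} makes the two double sums over $g, h \in \mathcal{G}$ identical term by term, giving $\disp(\pi, \R) = \disp(\pi, \S)$ for any premetric $\Psi$, hence for every $\disp$.

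The argument is short, and the step deserving the most care is the first: $\mathbf{B} = 0$ does not by itself assert $\mathbf{D} = 0$ but only $\mathbf{D} \preceq 0$, so it is the nonnegativity guaranteed by the divergence axiom that forces $\mathbf{D}$ to vanish. The remainder is bookkeeping — verifying that the held-fixed policy, which reads only $(X, G)$, contributes an identical factor on both distributions, exactly as \cref{eq:pr-notation} encodes. Notably, we never need the group marginals $\Pr(G)$ to agree, which is consistent with $\disp$ being insensitive to relative group size and with $\mathbf{D}$ measuring only group-conditional shifts.
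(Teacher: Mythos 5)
Your proposal is correct and follows the same route as the paper's own proof: force \(\mathbf{D}(\R \parallel \S) = 0\) from \(\mathbf{B} = 0\) via the divergence axioms, conclude equality of the group-conditional \((X, Y)\) distributions, lift this to the outcome distributions using the conditional independence of \(\hat{Y}\) and \(Y\) given \((X, G)\) encoded in \cref{eq:pr-notation}, and substitute into \cref{def:disparity}. Your version is in fact slightly more careful than the paper's, since it makes explicit the nonnegativity-plus-upper-bound squeeze that the paper leaves implicit.
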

\cref{lem:v-of-B0-is-R} indicates that, for a fixed policy \(\pi\), a change
in disparity requires a measurable shift in distributions \rr{from $\S$ to $\R$}, confirming intuition.
\vspace{-0.1in}
\paragraph{Restricted Distribution Shift}
Common assumptions that restrict the set of  distribution shifts include \emph{covariate shift} and \emph{label shift}.
For covariate shift, the distribution of \emph{labels} conditioned on \emph{features} is preserved across distributions for all groups, while for label shift, the distributions of \emph{features} conditioned on \emph{labels} is preserved across distributions for all groups.
\begin{align} \label{eq:def-covariate-shift}
\text{\textbf{Covariate shift} implies }&\Pr_{\R}(Y \mid X, G) = \Pr_{\S}(Y \mid X, G) \\
\label{eq:def-label-shift}
\text{\textbf{Label shift} implies }& \Pr_{\R}(X \mid Y, G) = \Pr_{\S}(X \mid Y, G)
\end{align}
In \cref{sec:covariate-shift}, we explore a deterministic model of a population's response to classification \rr{as an example of} covariate shift. We do the same in \cref{sec:labelDP} for label shift.

\vspace{-0.1in}
\section{General Bounds}\label{sec:bounds}
\vspace{-0.1in}
We first define a primary bound in \cref{def:v} before considering simplifying special cases.

Given an element-wise bound \(\mathbf{B}\) on the vector-valued shift
\(\mathbf{D}(\R \parallel \S)\) (\cref{asm:budget}) we may bound the disparity
\(\disp\) of policy \(\pi\) on any realizable target distribution \(\R\) by its supremum value.
\begin{definition} \label{def:v}
  Define the supremum value \(v\) for \(\disp\) subject to
\(\mathbf{D}(\R \parallel \S) \preceq \mathbf{B}\) as
\begin{align} \label{eq:v}
v(\disp, \mathbf{D}, \pi, \S, \mathbf{B})
  &\define
  \sup_{ \mathbf{D}(\R \parallel \S) \preceq \mathbf{B}} \disp(\pi, \R)\\
 \label{eq:mainresult}
  \mathbf{D}(\R \parallel \S) \preceq \mathbf{B}
  &\implies
  \disp(\pi, \R)
  \leq
  v(\disp, \mathbf{D}, \pi, \S, \mathbf{B})
\end{align}
\end{definition}


In general, our strategy is to exploit the mathematical structure of the setting
encoded by \(\disp\) (\ie, \(\Psi\)) and \(\mathbf{D}\) to obtain an
upper bound for \(v\) defined in \cref{eq:v}. We first explore general cases of
simplifying assumptions before presenting worked special examples for frequently
encountered settings. Finally, we compare the resulting theoretical bounds to
numerical results and simulations.
\vspace{-0.1in}
\subsection{Lipshitz Conditions}
\label{sec:lipshitz}
The value of \(v\) defines a scalar field in \rr{\(\mathbf{B}\) and therefore} a conservative vector field \(\mathbf{F} = \nabla_{\mathbf{B}} v\). 

\begin{wrapfigure}{r}{.4\linewidth}
   \centering \includegraphics[width=\linewidth]{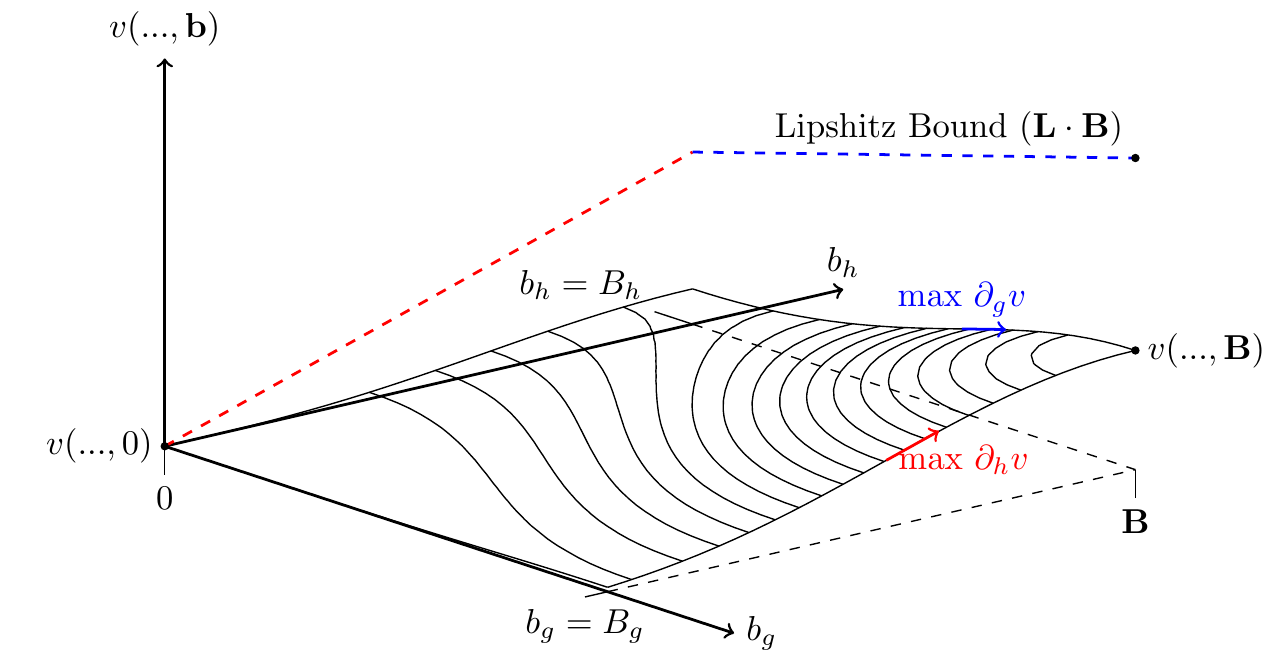}
   \caption{A Lipshitz bound for all curves \rr{parameterized by distribution shift bound \(\mathbf{b}\)} in the \((0, \mathbf{B})\) \(\DD\)-hyperrectangle \rev{on the surface \(v\). In the figure, for groups \(i \in \{g, h\}\), \(\max \partial_i v = L_i\), and the colored dotted lines corresponds to \(L_i  b_i\), which, when summed, equal \(\mathbf{L} \cdot \mathbf{B}\). }}
 \label{fig:lipshitz}
\end{wrapfigure}
For any curve in \(\DD\) from \(\S\) to \(\R\), bounds of
the form \(\mathbf{F} \preceq \mathbf{L}\) for some constant \(\mathbf{L}\) along the curve imply a Lipshitz bound on \(\disp\). We visualize a bound in \cref{fig:lipshitz} for all possible curves in the region \(\mathbf{D}(\R \parallel \S) \preceq \mathbf{B}\).
\begin{theorem}[Lipshitz Upper Bound] \label{thm:lipshitz} If there exists an \(\mathbf{L}\) such that
\(
   \nabla_{\mathbf{b}} v(\disp, \mathbf{D}, \pi, \S, \mathbf{b}) \preceq \mathbf{L}
\),
everywhere along some curve \rr{as \(\mathbf{b}\) varies} from \(0\) to \(\mathbf{B}\), then
\begin{equation}
  \begin{aligned} \label{eq:lipshitz-upper}
  \disp(\pi, \R) \leq \disp(\pi, \S) + \mathbf{L} \cdot \mathbf{B}
\end{aligned}
\end{equation}
\end{theorem}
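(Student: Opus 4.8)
The plan is to recast the claim as a one-dimensional integration of the scalar field $v$ between its endpoints $\mathbf{b} = 0$ and $\mathbf{b} = \mathbf{B}$, abbreviating $v(\mathbf{b})$ for $v(\disp, \mathbf{D}, \pi, \S, \mathbf{b})$ with the first four arguments held fixed. First I would invoke \eqref{eq:mainresult}: since $\mathbf{D}(\R \parallel \S) \preceq \mathbf{B}$, we immediately have $\disp(\pi, \R) \leq v(\mathbf{B})$, so it suffices to bound $v$ at the corner $\mathbf{b} = \mathbf{B}$. Next I would pin down the base value $v(0) = \disp(\pi, \S)$: when the budget vanishes, \cref{def:vector-divergence} together with the defining property of a divergence $D_g(p \parallel q) = 0 \iff p = q$ (\cref{def:divergence}) forces every feasible target to agree with $\S$ group-wise, so by \cref{lem:v-of-B0-is-R} the supremum defining $v(0)$ collapses to $\disp(\pi, \S)$.

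The core step is the gradient theorem (the fundamental theorem of calculus for line integrals). Parameterizing the assumed curve in the budget variable as $\gamma : [0,1] \to \RR^{|\mathcal{G}|}$ with $\gamma(0) = 0$ and $\gamma(1) = \mathbf{B}$ — concretely, the straight segment $\gamma(t) = t\mathbf{B}$ — and using that $\mathbf{F} = \nabla_{\mathbf{b}} v$ is conservative (being itself a gradient), I would write
\begin{equation*}
  v(\mathbf{B}) - v(0) = \int_0^1 \nabla_{\mathbf{b}} v(\gamma(t)) \cdot \gamma'(t)\, dt.
\end{equation*}
The hypothesis $\nabla_{\mathbf{b}} v \preceq \mathbf{L}$ holds pointwise along the curve, and on the straight segment $\gamma'(t) = \mathbf{B} \succeq 0$ by \cref{asm:budget}, so the integrand obeys $\nabla_{\mathbf{b}} v(\gamma(t)) \cdot \gamma'(t) \leq \mathbf{L} \cdot \gamma'(t)$ at every $t$. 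Integrating and using linearity together with $\int_0^1 \gamma'(t)\, dt = \gamma(1) - \gamma(0) = \mathbf{B}$ gives
\begin{equation*}
  v(\mathbf{B}) - v(0) \leq \int_0^1 \mathbf{L} \cdot \gamma'(t)\, dt = \mathbf{L} \cdot \mathbf{B}.
\end{equation*}
Combining this with $v(0) = \disp(\pi, \S)$ and $\disp(\pi, \R) \leq v(\mathbf{B})$ yields exactly \eqref{eq:lipshitz-upper}.

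The main obstacle is the sign condition needed to pass from the componentwise bound $\nabla_{\mathbf{b}} v \preceq \mathbf{L}$ to the scalar inequality on the dot product: this requires the tangent $\gamma'(t)$ to be componentwise nonnegative, which is why choosing a coordinate-monotone curve — such as the straight segment, whose tangent is the fixed vector $\mathbf{B} \succeq 0$ — is essential, since an arbitrary curve could backtrack in some coordinate and flip the estimate on that component. A secondary technical point is the regularity of $v$: the gradient notation presupposes that $v$ is differentiable along the curve, and here I would remark that $v$ is monotone nondecreasing in each coordinate of $\mathbf{b}$ (the feasible set $\{\R : \mathbf{D}(\R \parallel \S) \preceq \mathbf{b}\}$ only grows as the budget increases), so $\nabla_{\mathbf{b}} v$ exists almost everywhere and is componentwise nonnegative, which is enough to make the line integral and the pointwise estimate legitimate.
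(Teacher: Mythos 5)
Your proof is correct and follows essentially the same route as the paper's: both treat \(v\) as a scalar potential, apply the gradient theorem along a curve in \(\mathbf{b}\) from \(0\) to \(\mathbf{B}\), pin the endpoints via \cref{lem:v-of-B0-is-R} (giving \(v(\ldots,0)=\disp(\pi,\S)\)) and the supremum definition \cref{def:v} (giving \(\disp(\pi,\R)\leq v(\ldots,\mathbf{B})\)), and bound the resulting line integral by \(\mathbf{L}\cdot\mathbf{B}\). If anything, your write-up is more careful than the paper's on one point: the paper asserts the integral bound for an arbitrary curve, whereas you correctly note that passing from the componentwise bound \(\nabla_{\mathbf{b}} v \preceq \mathbf{L}\) to the scalar inequality on the integrand requires a componentwise nonnegative tangent, which your choice of the coordinate-monotone segment \(\gamma(t)=t\mathbf{B}\) guarantees.
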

\rev{Succinctly, if we are guaranteed that disparity can never increase faster than a certain rate in some measure of distribution shift, then, given a maximum distribution shift, this rate bounds the maximum possible disparity.}
The utility of \cref{thm:lipshitz} arises when a Lipshitz condition \(\mathbf{L}\) is
known, but direct computation of \(v\)
is difficult. We provide an example of a Lipshitz bound in \cref{sec:labelDP}.


\subsection{Subadditivity Conditions}
\label{sec:subadditive}
\begin{definition} \label{def:w}
  Define $w$ as the maximum \emph{increase} in disparity
subject to \(\mathbf{D}(\R \parallel \S) \preceq \mathbf{B}\), \ie,
\(w(\disp, \mathbf{D}, \pi, \S, \mathbf{B}) \define v(\disp, \mathbf{D}, \pi, \S, \mathbf{B}) - \disp(\pi,\S)\).
\end{definition}

\begin{theorem} \label{thm:subadditive}
  Suppose, in the region \(\mathbf{D}(\R \parallel \S) \preceq \mathbf{B}\), that
  \(w\) is subadditive in its last argument.
That is,
\(
    w(..., \mathbf{a}) + w(..., \mathbf{c})
    \geq w(..., \mathbf{a} + \mathbf{c})
    \) for \(\mathbf{a}, \mathbf{c} \succeq 0\) and \(\mathbf{a} + \mathbf{c} \preceq \mathbf{B}\).
If \(w\) is also locally differentiable, then a first-order approximation of \(w(..., \mathbf{b})\) evaluated at
\(0\), \ie,
\begin{equation}
  \mathbf{L} =
  \nabla_{\mathbf{b}} w(..., \mathbf{b}) \big|_{\mathbf{b} = 0} =
  \nabla_{\mathbf{b}} v(..., \mathbf{b}) \big|_{\mathbf{b} = 0}
 \end{equation}
 provides an upper bound for \(v(..., \mathbf{B})\), \ie,
 \begin{equation}
   v(\disp, \mathbf{D}, \pi, \S, \mathbf{B})
   \leq
   \disp(\pi, \S) + \mathbf{L} \cdot \mathbf{B}
\end{equation}
\end{theorem}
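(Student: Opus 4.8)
The plan is to reduce the statement to the single inequality \(w(\dots,\mathbf{B}) \le \mathbf{L}\cdot\mathbf{B}\), after which adding \(\disp(\pi,\S)\) to both sides and invoking \cref{def:w} yields the claimed bound on \(v\). The identity \(\mathbf{L} = \nabla_{\mathbf{b}} w\big|_{0} = \nabla_{\mathbf{b}} v\big|_{0}\) is immediate, since \(w\) and \(v\) differ only by \(\disp(\pi,\S)\), which does not depend on the last argument. I would first record that \(w(\dots,0)=0\): by \cref{lem:v-of-B0-is-R} we have \(v(\dots,0)=\disp(\pi,\S)\), so \(w(\dots,0)=0\). The feasible region is never empty, since \(\R=\S\) gives \(\mathbf{D}(\S\parallel\S)=0\preceq\mathbf{b}\) for every \(\mathbf{b}\succeq 0\) by the divergence axiom of \cref{def:divergence}, so \(w\) is well-defined on the whole box \(\{\mathbf{b} : 0\preceq \mathbf{b}\preceq\mathbf{B}\}\).

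The core of the argument is a Fekete-style subdivision. For a fixed positive integer \(n\), write \(\mathbf{B} = n\cdot(\mathbf{B}/n)\) and apply subadditivity to the partial sums \(k(\mathbf{B}/n)\) for \(k=1,\dots,n\). Each intermediate point satisfies \(k(\mathbf{B}/n)\preceq\mathbf{B}\), so the hypothesis \(\mathbf{a}+\mathbf{c}\preceq\mathbf{B}\) is met at every step; an easy induction on \(k\) then gives
\[
  w(\dots,\mathbf{B}) \;\le\; n\,w(\dots,\mathbf{B}/n).
\]

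Finally I would let \(n\to\infty\). Local differentiability of \(w\) at \(0\) means \(w(\dots,\mathbf{b}) = w(\dots,0) + \mathbf{L}\cdot\mathbf{b} + o(\lVert\mathbf{b}\rVert) = \mathbf{L}\cdot\mathbf{b} + o(\lVert\mathbf{b}\rVert)\); evaluating at \(\mathbf{b}=\mathbf{B}/n\) and multiplying by \(n\) gives \(n\,w(\dots,\mathbf{B}/n) = \mathbf{L}\cdot\mathbf{B} + n\cdot o(1/n) \to \mathbf{L}\cdot\mathbf{B}\). Since the displayed inequality holds for every \(n\), passing to the limit on the right yields \(w(\dots,\mathbf{B})\le\mathbf{L}\cdot\mathbf{B}\), completing the reduction.

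The main obstacle I anticipate is a matter of care rather than depth: making sure subadditivity can be chained along the whole subdivision (which requires every partial sum to remain inside the box \(\preceq\mathbf{B}\), as checked above) and that the first-order remainder really is controlled uniformly enough that \(n\cdot o(1/n)\to 0\). Differentiability only at the single point \(0\) is exactly what licenses the last limit, so I would be careful to use the definition of the gradient at \(0\) rather than any stronger smoothness assumption. Notably, no convexity or monotonicity of \(w\) is needed: subadditivity together with \(w(\dots,0)=0\) is precisely what forces the origin-slope \(\mathbf{L}\) to dominate the increase over the entire budget \(\mathbf{B}\).
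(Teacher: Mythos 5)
Your proof is correct, and it takes a genuinely different (and in fact more robust) route than the paper's. The paper decomposes \(\mathbf{B}\) along coordinate axes: it writes \(\mathbf{B}\cdot\mathbf{L}=\sum_g B_g \,\frac{d}{dx}v(\ldots,x\mathbf{e}_g)\big|_{x=0}\), bounds each axis term via the subdivision limit \(\lim_{N} N\, w(\ldots,\tfrac{1}{N}\mathbf{e}_g)\geq w(\ldots,\mathbf{e}_g)\), then applies the scaling step \(B_g\, w(\ldots,\mathbf{e}_g)\geq w(\ldots,B_g\mathbf{e}_g)\) (\cref{eq:subadditive2}) and finally recombines across groups via \(\sum_g w(\ldots,B_g\mathbf{e}_g)\geq w(\ldots,\mathbf{B})\). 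You instead perform the Fekete-style subdivision once, directly along the ray from \(0\) to \(\mathbf{B}\), obtaining \(w(\ldots,\mathbf{B})\leq n\,w(\ldots,\mathbf{B}/n)\to\mathbf{L}\cdot\mathbf{B}\). This buys two concrete advantages. First, the paper's evaluation of \(w\) at the unit vectors \(\mathbf{e}_g\) is only licensed when \(\mathbf{e}_g\) lies in the region where subadditivity is assumed (\ie, \(\mathbf{e}_g\preceq\mathbf{B}\)), which fails whenever some \(B_g<1\); your partial sums \(k\mathbf{B}/n\) always remain inside the box. Second, the scaling step \(B_g\, w(\ldots,\mathbf{e}_g)\geq w(\ldots,B_g\mathbf{e}_g)\) is not a consequence of subadditivity for non-integer \(B_g\)---for concave \(w\) (which is subadditive) with \(w(\ldots,0)=0\) and \(B_g<1\) the inequality actually reverses---whereas your argument never rescales by a non-integer factor. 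The one trade-off is that your diagonal expansion uses full (Fr\'echet) differentiability at \(0\), \ie, \(w(\ldots,\mathbf{b})=\mathbf{L}\cdot\mathbf{b}+o(\|\mathbf{b}\|)\), while the paper's axis-wise limits need only one-sided partial derivatives; but the theorem's ``locally differentiable'' hypothesis reasonably grants what you use, and your careful handling of the remainder term (so that \(n\cdot o(1/n)\to 0\)) is exactly right.
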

\cref{thm:subadditive} notes that ``diminishing returns'' in the change of \(\disp\) as the
difference of \(\R\) with respect to \(\S\) is increased implies a bound on \(\disp\) in terms of its local sensitivity to \(\mathbf{D}\) at \(\S\) (\ie, using a first-order Taylor approximation). Note that, if \(w\) is concave in the bounded region, it is also subadditive in the bounded region, but the converse is not true, nor does the converse imply Lipshitzness.


\vspace{-0.1in}
\subsection{Geometric Structure} \label{sec:metricbounds}

It may happen that \(\Psi \colon \OO^2 \to \RR \) and each
\(D_{g} \colon \GG^2 \to \RR \) share structure that permits a geometric
interpretation of distribution shift. While the utility of this observation
depends on the specific properties of \(\Psi\) and \(\mathbf{D}\), we
demonstrate a worked example building on \cref{sec:covariateEO} in
\cref{app:geometric}, in which we allow ourselves to select a suitable
\(\mathbf{D}\) for ease of interpretation. We proceed to consider worked examples that adopt common  assumptions
limiting the form of distribution shift and apply common definitions of statistical
group fairness.

\section{Covariate Shift}
\label{sec:covariate-shift}
\vspace{-0.1in}
We now present our fairness transferability results subject to covariate
shift for both demongraphic parity (\Cref{sec:covariateDP}) and equalized opportunity (\Cref{sec:covariateEO}) as fairness criteria.
\subsection{Demographic Parity} \label{sec:covariateDP}

The simplest way to work with \cref{eq:v} is to bound the supremum \(v\). We first
consider demographic parity (\cref{eq:def-demographic-parity}) for
\(\mathcal{Y} = \{0,1\}\) and \(\mathcal{G} = \{g, h\}\), subject to covariate
shift (\cref{eq:def-covariate-shift}). We find that the form of \(\dispdp\)
subject to covariate shift recommends itself to a natural choice of vector divergence, \(\mathbf{D}\). \rr{First,} define a \emph{re-weighting coefficient} \(
 \omega_g(\R, \S, x) \define \frac{\Pr_{\R}(X {=} x \mid G {=} g)}{\Pr_{\S}(X {=} x \mid  G {=} g)}
\)\rr{.}
\begin{theorem}\label{thm:dp-covariate}
  For demographic parity between two groups under covariate shift (denoting, for
each \(g\), \(\beta_{g} \define \Pr_{\pi,\S}(\hat{Y} {=} 1 \mid G {=} g)\)), 
  \begin{equation}
    \disp_{\DP}(\pi, \R) \leq \disp_{\DP}(\pi, \S) + \sum_g
    \big(\beta_{g}(1 - \beta_{g}) \cdot \var_{\S}[\omega_g(\R, \S, x)]\big)^{\nicefrac{1}{2}}
  \end{equation}
\vspace{-0.1in}
\end{theorem}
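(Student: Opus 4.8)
The plan is to exploit the fact that, under covariate shift, the policy's conditional acceptance probability $\Pr_{\hat{Y}\sim\pi(x,g)}(\hat{Y}{=}1)$ is fixed by $\pi$ alone and hence identical on $\S$ and $\R$; the demographic-parity rates therefore depend on the distributions only through the group-conditional feature marginals $\Pr(X\mid G{=}g)$, so the entire change in each group's acceptance rate is carried by the feature reweighting $\omega_g$. Throughout, fix a group $g$ and let all expectations, variances, and covariances be taken under $\S$ conditioned on $G{=}g$; write $f_g(x)\define\Pr_{\hat{Y}\sim\pi(x,g)}(\hat{Y}{=}1)$ and $r_g\define\Pr_{\pi,\R}(\hat{Y}{=}1\mid G{=}g)$. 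First I would rewrite the target acceptance rate as an importance-weighted source expectation, $r_g=\E_{\S}[f_g(X)\,\omega_g]$, which follows by inserting the ratio $\Pr_{\S}(X\mid G{=}g)/\Pr_{\S}(X\mid G{=}g)$ into the expectation defining $r_g$. Since $\E_{\S}[\omega_g]=1$ and $\E_{\S}[f_g(X)]=\beta_g$, this exhibits the per-group drift as a covariance: $r_g-\beta_g=\cov_{\S}(f_g(X),\omega_g)$.

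The second step bounds this covariance. By the Cauchy--Schwarz inequality for covariances, $|r_g-\beta_g|=|\cov_{\S}(f_g(X),\omega_g)|\le\sqrt{\var_{\S}[f_g(X)]}\sqrt{\var_{\S}[\omega_g]}$. The inequality that makes the bound interpretable is the variance bound on the prediction function: since $f_g(x)\in[0,1]$ we have $f_g(x)^2\le f_g(x)$ pointwise, so $\var_{\S}[f_g(X)]=\E_{\S}[f_g(X)^2]-\beta_g^2\le\E_{\S}[f_g(X)]-\beta_g^2=\beta_g(1-\beta_g)$. Hence each group's drift satisfies $|r_g-\beta_g|\le(\beta_g(1-\beta_g)\,\var_{\S}[\omega_g])^{1/2}$. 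I would also remark that $\var_{\S}[\omega_g]=\chi^2(\Pr_{\R}(X\mid G{=}g)\parallel\Pr_{\S}(X\mid G{=}g))$, which is exactly the natural choice of divergence $D_g$ alluded to before the theorem, connecting the result back to \cref{def:vector-divergence}.

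Finally I would assemble the disparity bound by the triangle inequality. Writing $r_g-r_h=(\beta_g-\beta_h)+(r_g-\beta_g)-(r_h-\beta_h)$ gives $|r_g-r_h|\le|\beta_g-\beta_h|+|r_g-\beta_g|+|r_h-\beta_h|$; recognizing $|\beta_g-\beta_h|$ as the source disparity term and inserting the per-group bound from the second step yields $\disp_{\DP}(\pi,\R)\le\disp_{\DP}(\pi,\S)+\sum_g(\beta_g(1-\beta_g)\var_{\S}[\omega_g])^{1/2}$. I expect the only substantive step to be the second one, namely recognizing the acceptance-rate drift as a covariance and controlling $\var_{\S}[f_g(X)]$ by $\beta_g(1-\beta_g)$; the triangle-inequality assembly is routine, the single delicate point being the summation convention over $\mathcal{G}=\{g,h\}$ in \cref{eq:def-demographic-parity}, which must be read so that each group's term enters the final sum with coefficient one.
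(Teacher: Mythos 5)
Your proposal is correct and follows essentially the same route as the paper's own proof: the covariance identity $r_g-\beta_g=\cov_{\S}(f_g(X),\omega_g)$ is the paper's Lemma~\ref{lem:cs-positive-prediction-rate-difference}, the bound $\var_{\S}[f_g(X)]\le\beta_g(1-\beta_g)$ is its Lemma~\ref{lemma:01rv-variance-expecation-bound}, and the telescoping triangle-inequality assembly matches the paper's final computation. The added observation that $\var_{\S}[\omega_g]$ is the $\chi^2$-divergence is a nice touch but not needed for the result.
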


We notice
that $\var_{\S}[\omega_g(\R, \S, x)]$ recommends itself as a suitable divergence
\(D_{g}\) from \(\S\) to \(\R\). Using basis vectors \(\mathbf{e}_g\), for this example, we could
define
\(
    \mathbf{D}(\mathcal{T} \parallel \S) = \sum_{g} \mathbf{e}_g
    \var_{\S}[\omega_g(\mathcal{T}, \S, x)]
\). 
When \(\var_{\S}[\omega_g(\R, \S, x)] \leq B_g\), it follows
   $
    \disp_{\DP}(\pi, \R) \leq \disp_{\DP}(\pi, \S) + \sum_g
    \big(\beta_{g}(1 - \beta_{g}) \cdot B_g\big)^{\nicefrac{1}{2}}
$.
Comparing the inequality in
\cref{thm:dp-covariate}
and the consequent of \cref{eq:lipshitz-upper}, we can interpret
\(\Pr_{\pi, \S}(\hat{Y} {=} 1)\) in \cref{thm:dp-covariate} as an upper bound for
the \emph{average} value of \(\nabla_{\mathbf{b}} v(\disp_\DP, \mathbf{D}, \pi, \R, \mathbf{b})\) along
any curve from \(\S\) to \(\R\). \rr{Interpreting} this result, the closer \({\rm Pr}(\hat{Y} {=} 1)\) is to 0.5 for
any group, the more potentially sensitive the fairness of the policy is to
distribution shifts for that group. We can further generalize the results to multi-class and multi-group setting:

\begin{corollary}\label{cor:dp-covariate-multi}
\cref{thm:dp-covariate} may be generalized to multiple classes
\(\mathcal{Y} = \{1, 2, ..., m\}\)
 and multiple groups \(\mathcal{G}\in\{1, 2, ..., n\}\), where \(\beta_{g,y} =\Pr(\hat{Y}{=}y \mid G{=}g)\) and assuming \(\var_{\S}[\omega_g(\R, \S, x)] \leq B_g\):
\begin{align}
  \dispdp(\pi, \R) &\define \sum_{y \in \mathcal{Y}} \sum_{g,h \in \mathcal{G}} \Big|
  \Pr_{\pi, \R}(\hat{Y} {=} y \mid G {=} g) -
  \Pr_{\pi, \R}(\hat{Y} {=} y \mid G {=} h)
  \Big| \\
  \disp_{\DP}(\pi, \R) &\leq \disp_{\DP}(\pi, \S) + \sum_{y} \sum_{g}
  \big(\beta_{g,y}(1 - \beta_{g,y}) \cdot B_g \big)^{\nicefrac{1}{2}}
\end{align}
\end{corollary}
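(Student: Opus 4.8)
The plan is to reduce the multi-class, multi-group disparity to the single-class, pairwise quantities already controlled by \cref{thm:dp-covariate}, and then reassemble the pieces. Writing $\pi_y(x,g) \define \Pr_{\hat{Y} \sim \pi(x,g)}(\hat{Y}{=}y)$, the predicted-label marginal factors through the policy as $\Pr_{\pi,\R}(\hat{Y}{=}y \mid G{=}g) = \E_{X \sim \Pr_\R(\cdot \mid G=g)}[\pi_y(X,g)]$, so this quantity depends on $\R$ only through the group-conditional feature marginal $\Pr_\R(X \mid G{=}g)$; under the covariate-shift framing of this section that shift is captured entirely by the reweighting coefficient $\omega_g$. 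The entire argument therefore runs class by class and group by group, with $\hat{Y}{=}y$ playing exactly the role that $\hat{Y}{=}1$ plays in the binary statement.

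First I would establish the per-$(g,y)$ shift identity. Importance reweighting by $\omega_g(\R,\S,x)$ converts the target expectation into a source expectation, and since $\E_\S[\omega_g(\R,\S,X)] = 1$, this yields the covariance form $\delta_{g,y} = \cov_\S\big(\omega_g(\R,\S,X),\, \pi_y(X,g)\big)$ for $\delta_{g,y} \define \Pr_{\pi,\R}(\hat{Y}{=}y \mid G{=}g) - \beta_{g,y}$. Applying Cauchy--Schwarz to this covariance, together with the elementary bound $\var_\S[\pi_y(X,g)] \leq \beta_{g,y}(1-\beta_{g,y})$ valid for any $[0,1]$-valued random variable of mean $\beta_{g,y}$, and the hypothesis $\var_\S[\omega_g(\R,\S,x)] \leq B_g$, gives the per-term estimate $|\delta_{g,y}| \leq \big(\beta_{g,y}(1-\beta_{g,y}) B_g\big)^{\nicefrac{1}{2}}$. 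This is the verbatim generalization of the key step of \cref{thm:dp-covariate}, and it is insensitive to the number of classes $m$ and groups $n$.

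Next I would aggregate. For each class $y$ and each group pair $(g,h)$, the triangle inequality gives $|\Pr_{\pi,\R}(\hat{Y}{=}y\mid g) - \Pr_{\pi,\R}(\hat{Y}{=}y\mid h)| \leq |\beta_{g,y}-\beta_{h,y}| + |\delta_{g,y}| + |\delta_{h,y}|$. Summing the $|\beta_{g,y}-\beta_{h,y}|$ pieces over all classes and pairs reconstructs $\disp_\DP(\pi,\S)$, while the $\delta$ pieces collect into an additive penalty built from the per-term bounds $\big(\beta_{g,y}(1-\beta_{g,y})B_g\big)^{\nicefrac{1}{2}}$, delivering the stated inequality.

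The main obstacle I anticipate is precisely the bookkeeping in this last aggregation step. Each group participates in every pair, so a naive application of the triangle inequality over all pairs counts each $|\delta_{g,y}|$ once for every partner group; reconciling this with the stated $\sum_y\sum_g$ form requires fixing the summation convention for $\sum_{g,h}$ (each distinct pair once, consistent with the binary statement, where this exactly recovers \cref{thm:dp-covariate}) and tracking the resulting multiplicities carefully, since the constant multiplying the penalty is the one place where the count of groups enters. The substantive analytic content — the covariance identity and the two inequalities bounding it — otherwise transfers without modification from the two-group, binary case.
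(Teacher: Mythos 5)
Your proposal is correct and follows essentially the same route as the paper's own proof: generalize the covariance identity of \cref{lem:cs-positive-prediction-rate-difference} to each class \(y\) (\ie, \(\delta_{g,y} = \cov_{\S}[\omega_g(\R,\S,X), \Pr_{\pi(X,g)}(\hat{Y}{=}y)]\)), bound each term via Cauchy--Schwarz together with \(\var_{\S}[\Pr_{\pi(X,g)}(\hat{Y}{=}y)] \leq \beta_{g,y}(1-\beta_{g,y})\) and \(\var_{\S}[\omega_g] \leq B_g\), and then aggregate over classes and group pairs by the triangle inequality. The bookkeeping concern you flag is genuine rather than hypothetical: the paper's appendix proof retains the pair multiplicity (its penalty reads \(\sum_{y}\sum_{g,h}\big(\beta_{g,y}(1-\beta_{g,y})B_g\big)^{1/2}\), so each group's term is counted once per partner group), whereas the main-text corollary statement drops this to \(\sum_{y}\sum_{g}\), and your observation that the per-group multiplicity (of order \(|\mathcal{G}|-1\)) must appear is exactly the point on which the paper's statement and its proof disagree.
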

We remark that in general, binary classification bounds may frequently be
generalized to multi-class bounds by redefining fairness violations as a sum
of binary-class fairness violations (\ie, same-class \vs different-class
labels) and summing the bounds on each.

\subsection{Equal Opportunity} \label{sec:covariateEO}

Consider an example using the \((Y {=} 1)\)-conditioned case
of Equalized Odds---termed \emph{Equal Opportunity} (\EOp). Denoting, for each group \(g\), the true positive rate
\(\beta^{+}_{g} \define \Pr_{\pi, \R}( \hat{Y} {=} 1 \mid Y {=} 1, G {=} g )\)
as an implicit function of \(\pi\) and \(\R\), we define disparity for \EOp as
\(
  \disp_\EOp(\pi, \R)
  \define \sum_{g, h \in \mathcal{G}}
 |
    \beta^{+}_{g} -
    \beta^{+}_{h}
 |
\).

We may bound the realized value of \(\disp_{\EOp}(\pi, \R)\) by bounding \(\beta^{+}_{g}\) for each group:
\begin{theorem} \label{thm:cov-EO-superbound}
 Subject to covariate shift and any given \(\mathbf{D}, \mathbf{B}\), assume extremal values for \(\beta_{g}^{+}\), \ie,
\begin{equation}
\label{eq:cov-EO-requisite}
\forall g, ~~
\big(D_{g}(\R \parallel \S) < B_{g}\big) \implies \big(
l_g \leq \beta^{+}_{g}(\pi, \R) \leq u_g\big)
\end{equation}
it follows that
\begin{equation} \label{eq:cov-eop-bound}
v(\disp_{\EOp}, \mathbf{D}, \pi, \S, \mathbf{B}) \leq \max_{\substack{{x_g \in \{l_g, u_g\}} \\ {x_h \in \{l_h, u_h\}}}} \sum_{g, h}\Big|
  x_g - x_h
\Big|
\end{equation}
\end{theorem}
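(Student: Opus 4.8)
The plan is to reduce the supremum over feasible target distributions to a finite maximization over the corners of a box, exploiting two structural facts: that $\disp_\EOp$ depends on $\R$ only through the vector of group-wise true positive rates, and that the resulting objective is convex in that vector. Note at the outset that covariate shift plays no direct role in the core argument; it enters only insofar as it (together with the data) yields the interval bounds $[l_g, u_g]$ posited in the hypothesis \cref{eq:cov-EO-requisite}, so the inequality \cref{eq:cov-eop-bound} is really a statement about maximizing a convex function under coordinate-wise constraints.

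First I would write $\disp_\EOp(\pi, \R) = f\big((\beta^{+}_g)_{g \in \mathcal{G}}\big)$, where $f(\mathbf{x}) = \sum_{g,h \in \mathcal{G}} |x_g - x_h|$ and $(\beta^{+}_g)_{g \in \mathcal{G}}$ is the vector of group-specific true positive rates; no other feature of $\R$ enters the \EOp disparity. The hypothesis then confines each coordinate: for every feasible $\R$ (those with $\mathbf{D}(\R \parallel \S) \preceq \mathbf{B}$) we have $\beta^{+}_g \in [l_g, u_g]$, so the set of realizable TPR vectors is contained in the box $\mathcal{B} \define \prod_{g} [l_g, u_g]$. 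Consequently,
\[
v(\disp_\EOp, \mathbf{D}, \pi, \S, \mathbf{B}) = \sup_{\mathbf{D}(\R \parallel \S) \preceq \mathbf{B}} f\big((\beta^{+}_g)_{g \in \mathcal{G}}\big) \leq \sup_{\mathbf{x} \in \mathcal{B}} f(\mathbf{x}).
\]
I would stress that the bound does not require every point of $\mathcal{B}$ to be achievable by some $\R$; mere containment of the feasible set in $\mathcal{B}$ suffices.

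Next I would invoke convexity. Each map $\mathbf{x} \mapsto |x_g - x_h|$ is an absolute value of an affine function, hence convex, so $f$ is a finite sum of convex functions and is itself convex. A convex function on a compact convex polytope attains its maximum at an extreme point, and the extreme points of $\mathcal{B}$ are exactly the corners with each $x_g \in \{l_g, u_g\}$. Therefore
\[
\sup_{\mathbf{x} \in \mathcal{B}} f(\mathbf{x}) = \max_{\substack{x_g \in \{l_g, u_g\} \\ x_h \in \{l_h, u_h\}}} \sum_{g,h} |x_g - x_h|,
\]
which combined with the previous display yields \cref{eq:cov-eop-bound}.

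The one delicate point I would handle carefully is that \cref{eq:cov-EO-requisite} supplies the interval bounds only under the strict inequality $D_g(\R \parallel \S) < B_g$, whereas $v$ is a supremum over the closed region $D_g(\R \parallel \S) \leq B_g$. I would resolve this either by reading $l_g$ and $u_g$ as the infimum and supremum of $\beta^{+}_g$ over the full feasible region (so that $\beta^{+}_g \in [l_g, u_g]$ holds for every feasible $\R$ by definition), or, if $\beta^{+}_g$ varies continuously with the distribution, by a limiting argument showing the supremum over the closed region cannot exceed that over the open one. I expect this boundary bookkeeping, rather than the convex-maximization core, to be the only subtle step: once the reduction to $f$ and its convexity are in hand, the extreme-point conclusion is routine.
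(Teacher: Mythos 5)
Your proof is correct and takes essentially the same route as the paper's: both reduce \(\disp_{\EOp}\) to the convex function \(\sum_{g,h}|x_g - x_h|\) of the group-wise true positive rates and conclude that its maximum over the box \(\prod_g [l_g, u_g]\) is attained at a corner. Your careful handling of the strict inequality in \cref{eq:cov-EO-requisite} versus the closed feasible region defining \(v\) is a detail the paper's proof glosses over, but it does not alter the substance of the argument.
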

\begin{corollary} \label{cor:max-cov-EO}
\rr{The disparity measurement} \(\disp_\EOp\) cannot exceed \(\frac{|\mathcal{G}|^2}{4}\).
\end{corollary}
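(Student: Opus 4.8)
The plan is to invoke the super-bound of \cref{thm:cov-EO-superbound} with the \emph{trivial} extremal values and then reduce the resulting expression to an elementary combinatorial maximization. The key observation is that each $\beta^{+}_{g} = \Pr_{\pi,\R}(\hat{Y}{=}1 \mid Y{=}1, G{=}g)$ is a conditional probability, so $\beta^{+}_{g} \in [0,1]$ always. Consequently, for \emph{any} choice of $\mathbf{D}$ and $\mathbf{B}$, the bounds $l_g = 0$ and $u_g = 1$ satisfy the requisite \cref{eq:cov-EO-requisite}: the consequent $0 \le \beta^{+}_{g} \le 1$ holds unconditionally, independent of the antecedent on $D_g(\R \parallel \S)$. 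Substituting $l_g = 0$, $u_g = 1$ into \cref{eq:cov-eop-bound} then gives
\[
v(\disp_{\EOp}, \mathbf{D}, \pi, \S, \mathbf{B}) \le \max_{x_g \in \{0,1\}} \sum_{g, h \in \mathcal{G}} \big| x_g - x_h \big|,
\]
so it suffices to bound this right-hand side.

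Next I would evaluate the maximum combinatorially. Writing $n = |\mathcal{G}|$ and letting $k$ be the number of groups assigned $x_g = 1$ (the remaining $n-k$ assigned $0$), the term $|x_g - x_h|$ equals $1$ exactly when the two groups receive different assignments and $0$ otherwise. Hence $\sum_{g,h} |x_g - x_h|$ reduces to counting the pairs of distinct groups that straddle the split, namely $k(n-k)$. The final step is the elementary maximization $k(n-k) \le n^2/4$ (by AM--GM, or by completing the square), with equality at $k = n/2$. Chaining this with \cref{eq:mainresult} yields $\disp_{\EOp}(\pi, \R) \le v \le |\mathcal{G}|^2/4$.

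I do not anticipate a genuine obstacle: once \cref{thm:cov-EO-superbound} is in hand, the argument is a short counting exercise. The only points requiring care are (i) justifying that $l_g = 0$, $u_g = 1$ is always admissible, which rests solely on $\beta^{+}_{g}$ being a probability rather than on any property of the distribution shift, and (ii) the parity of $n$ in the integer optimization, since for odd $n$ the true maximum $\lfloor n/2 \rfloor \lceil n/2 \rceil$ is strictly below $n^2/4$, so $|\mathcal{G}|^2/4$ is a clean (and, for even $n$, tight) upper bound. As an alternative route that bypasses \cref{thm:cov-EO-superbound} entirely, one could observe that $\disp_{\EOp}$ is convex in the vector $(\beta^{+}_g)_{g \in \mathcal{G}}$, which ranges over a subset of the cube $[0,1]^{|\mathcal{G}|}$; its maximum over the cube is attained at a $\{0,1\}$-vertex, reducing to the same count and giving the bound unconditionally for every policy and distribution.
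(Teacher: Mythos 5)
Your proposal is correct and matches the paper's own argument: the paper likewise notes that each \(\beta^{+}_{g}\) lies in \([0,1]\), invokes the convexity/boundary reasoning of \cref{thm:cov-EO-superbound} to restrict to \(\beta^{+}_{g} \in \{0,1\}\), and counts the straddling pairs to get \(\floor{|\mathcal{G}|/2}\ceil{|\mathcal{G}|/2} \leq |\mathcal{G}|^{2}/4\), exactly your \(k(n-k)\) maximization. Your remark on the parity of \(|\mathcal{G}|\) is a nice clarification of why the stated bound is slightly loose for odd \(|\mathcal{G}|\), but it does not change the substance.
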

In \cref{app:geometric}, we bound the extremal values of \(\beta_g^+\) by geometrically interpreting this quantity as an inner product on an appropriate vector space, utilizing the
freedom to select an appropriate \(\mathbf{D}\). 

\section{Label Shift} \label{sec:labelDP}
\vspace{-0.1in}
Under label shift (\(\Pr_\S(X|Y) = \Pr_\R(X|Y)\)), violations of \EO and \EOp are invariant, because
the independence of \(\hat{Y}\) and \(Y\) given \(X\)
implies
$\Pr_{\pi,\R}(\hat{Y}|Y) = \Pr_{\pi,\S}(\hat{Y}|Y)$. We therefore focus on the violation of
demographic parity (\DP) (\cref{eq:def-demographic-parity}) subject to the label shift condition, treating a binary
classification task over two groups for simplicity.

In this setting, we choose to measure group-specific distribution shifts \rr{from}
\(\S\) \rr{to} \(\R\) by the change in proportion of \rr{ground-truth} positive labels, which we
refer to as the group \emph{qualification rate}
\(Q_{g}(\R) \define  \Pr_\R(Y = 1 \mid G = g)\):
\begin{align} \label{eq:defDlabelDP}
  D_{g}(\R \parallel \S) \define \Big| Q_{g}(\mathcal{S}) - Q_{g}(\mathcal{\R}) \Big| \leq B_{g}
\end{align}
\begin{theorem} \label{thm:yangsthm}
A Lipshitz condition bounds \(\nabla_\mathbf{b} v(\disp_\DP, \mathbf{D}, \pi, \S, \mathbf{b})\) when
\begin{equation}
    D_{g}(\R \parallel \S) \define \Big| Q_{g}(\mathcal{S}) - Q_{g}(\mathcal{\R}) \Big| \leq B_{g}
\end{equation}
Specifically,
\begin{equation}
  \pdv{}{b_{g}} v(\disp_\DP, \mathbf{D}, \pi, \S, \mathbf{b}) \leq
  (|\mathcal{G}| - 1)
  \Big| \beta^{+}_{g} - \beta^{-}_{g} \Big|
\end{equation}
for true positive rates \(\beta^{+}_{g}\) and false positive rates \(\beta^{-}_{g}\):
\begin{equation}
    \beta^{+}_{g}
\define \Pr_{\pi}( \hat{Y} {=} 1 \mid Y {=} 1, G {=} g ); \quad \beta^{-}_{g} \define \Pr_{\pi}( \hat{Y} {=} 1 \mid Y {=} 0, G {=} g )
\end{equation} 
\end{theorem}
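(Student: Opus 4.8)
The plan is to reduce the problem to a one-dimensional \emph{affine} relationship between each group's acceptance rate and its qualification rate, and then control the sensitivity of the supremum to the budget by a piecewise-linear envelope argument.

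First I would establish the structural fact that, under label shift, the group acceptance rate $r_g \define \Pr_{\pi,\R}(\hat{Y}{=}1 \mid G{=}g)$ is affine in the qualification rate $Q_g(\R)$. Marginalizing over $X$ after conditioning on $Y$ gives
\begin{equation*}
r_g = \sum_{y \in \{0,1\}} \Pr_{\pi,\R}(\hat{Y}{=}1 \mid Y{=}y, G{=}g)\, \Pr_\R(Y{=}y \mid G{=}g).
\end{equation*}
Since $\pi$ depends only on $(X,G)$ and label shift preserves $\Pr(X \mid Y, G)$, the conditional rates $\Pr_{\pi}(\hat{Y}{=}1 \mid Y{=}y, G{=}g)$ equal $\beta_g^+$ (for $y{=}1$) and $\beta_g^-$ (for $y{=}0$) and are \emph{invariant} between $\S$ and $\R$ --- the same invariance that makes \EO and \EOp transfer exactly. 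Writing $q_g \define Q_g(\R)$, this yields $r_g = \beta_g^- + q_g(\beta_g^+ - \beta_g^-)$, so $r_g$ moves at rate $|\beta_g^+ - \beta_g^-|$ as $q_g$ varies.

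Next I would translate the budget into this coordinate and locate the supremum. The constraint $D_g(\R \parallel \S) = |Q_g(\S) - q_g| \leq b_g$ confines each $q_g$ to an interval of half-width $b_g$ about $Q_g(\S)$ (clamped to $[0,1]$), and through the affine map this confines $r_g$ to an interval of half-width $b_g\,|\beta_g^+ - \beta_g^-|$. Because $\disp_\DP = \sum_{g,h} |r_g - r_h|$ is a sum of absolute values of affine functions, it is convex and piecewise-linear in the vector $(q_g)$; hence its supremum over the box $\prod_g [\,Q_g(\S)-b_g, Q_g(\S)+b_g\,]$ is attained at a vertex, with each $q_g$ at an endpoint of its interval. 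I would then differentiate $v$ in $b_g$ by an envelope argument: at a maximizer, relaxing $b_g$ extends the reachable interval for $r_g$ by $|\beta_g^+ - \beta_g^-|\,\dd b_g$ in whichever direction increases the objective, so the marginal gain is
\begin{equation*}
\pdv{}{b_g} v = \Big|\, \pdv{\disp_\DP}{r_g} \,\Big| \cdot |\beta_g^+ - \beta_g^-|, \qquad \text{with } \pdv{\disp_\DP}{r_g} = \sum_{h \neq g} \sign(r_g - r_h).
\end{equation*}
The sign-sum ranges over the $|\mathcal{G}|-1$ groups $h \neq g$ and consists of terms in $\{-1,0,1\}$, so its magnitude is at most $|\mathcal{G}|-1$; combining gives $\pdv{}{b_g} v \leq (|\mathcal{G}|-1)\,|\beta_g^+ - \beta_g^-|$, as claimed.

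The main obstacle is making the differentiation rigorous where $v$ is not smooth: the objective has kinks at ties $r_g = r_h$, and the optimal $q_g$ may sit on the $[0,1]$ boundary, so $\pdv{}{b_g} v$ must be read as an upper (one-sided Dini) derivative. The sign-sum bound still controls this one-sided rate --- at a kink the relevant subdifferential is a convex combination of adjacent sign-sums, each bounded by $|\mathcal{G}|-1$ --- which is exactly the quantity the Lipshitz framework of \cref{thm:lipshitz} requires along any curve from $\S$ to $\R$.
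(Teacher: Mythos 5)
Your proposal is correct and takes essentially the same route as the paper's own proof: both rest on the label-shift invariance of \(\beta_g^{+}, \beta_g^{-}\), the resulting affine relation \(\beta_g = \beta_g^{+} Q_g + \beta_g^{-}(1 - Q_g)\), and the slope bound \(\big|\partial \disp_\DP / \partial \beta_g\big| \leq |\mathcal{G}| - 1\), combined by the chain rule. If anything, your vertex-attainment and envelope/Dini-derivative treatment of differentiating the supremum is more careful than the paper's, which passes from the pointwise derivative bound over all \(\R\) to the bound on \(\partial v / \partial B_g\) by simply asserting that the inequality ``must also hold when evaluated at \(v\).''
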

Because \(\beta^{+}_g\) and \(\beta^{-}_g\) are invariant under label shift
given a constant policy \(\pi\), we elide their explicit dependence on the underlying distribution.

\begin{theorem}\label{thm:dp-label}
For \DP under the bounded label-shift assumption $\forall g,  | Q_{g}(\S) - Q_{g}(\R) | \leq B_{g}$,
\begin{equation}
\begin{aligned} \label{eq:dplabelbound}
 \dispdp(\pi, \R)  &\leq  \dispdp(\pi, \S) + (|\mathcal{G}| - 1)
\sum_g B_{g} \ \Big|\beta^{+}_g - \beta^{-}_g \Big|
\end{aligned}
\end{equation}
\end{theorem}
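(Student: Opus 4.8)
The plan is to obtain \cref{thm:dp-label} as a direct consequence of the derivative bound in \cref{thm:yangsthm} together with the Lipshitz integration result of \cref{thm:lipshitz}. First I would record the one structural fact that drives everything: under label shift with a fixed policy, the group-conditional positive-prediction rate is an \emph{affine} function of the qualification rate. Writing \(p_g \define \Pr_{\pi,\R}(\hat{Y}{=}1 \mid G{=}g)\) and conditioning on \(Y\),
\begin{equation}
  p_g = \beta^{+}_g \, Q_g(\R) + \beta^{-}_g\,(1 - Q_g(\R)) = \beta^{-}_g + (\beta^{+}_g - \beta^{-}_g)\, Q_g(\R),
\end{equation}
where the label-shift invariance of \(\beta^{+}_g, \beta^{-}_g\) (the independence of \(\hat{Y}\) and \(Y\) given \(X\), noted after \cref{thm:yangsthm}) is exactly what removes any dependence on \(\R\) from the slope \((\beta^{+}_g - \beta^{-}_g)\).

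Next I would invoke \cref{thm:yangsthm}, which for the divergence \(D_g\) of \cref{eq:defDlabelDP} gives, at every \(\mathbf{b}\), the bound \(\partial_{b_g} v(\disp_{\DP}, \mathbf{D}, \pi, \S, \mathbf{b}) \leq (|\mathcal{G}| - 1)\,|\beta^{+}_g - \beta^{-}_g|\). Setting \(\mathbf{L} \define \sum_g \mathbf{e}_g\,(|\mathcal{G}| - 1)\,|\beta^{+}_g - \beta^{-}_g|\), the key point is that this bound does not vary with \(\mathbf{b}\): because \(\beta^{+}_g\) and \(\beta^{-}_g\) are label-shift invariant, the Lipshitz constant is genuinely constant over the whole region \(\mathbf{D}(\R \parallel \S) \preceq \mathbf{B}\). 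Hence \(\nabla_{\mathbf{b}} v \preceq \mathbf{L}\) holds along any curve from \(0\) to \(\mathbf{B}\), which is precisely the hypothesis of \cref{thm:lipshitz}. Applying \cref{eq:lipshitz-upper} then yields
\begin{equation}
  \disp_{\DP}(\pi, \R) \leq \disp_{\DP}(\pi, \S) + \mathbf{L}\cdot\mathbf{B} = \disp_{\DP}(\pi, \S) + (|\mathcal{G}|-1)\sum_g B_g\,|\beta^{+}_g - \beta^{-}_g|.
\end{equation}

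As a sanity check, and to expose where the factor \((|\mathcal{G}|-1)\) comes from, I would also keep in mind a fully self-contained route that essentially re-derives \cref{thm:yangsthm} inline. From the affine form, \(|p_g(\R) - p_g(\S)| = |\beta^{+}_g - \beta^{-}_g|\,|Q_g(\R) - Q_g(\S)| \leq |\beta^{+}_g - \beta^{-}_g|\,B_g\) by the budget of \cref{eq:defDlabelDP}. Expanding \(\disp_{\DP}\) (\cref{eq:def-demographic-parity}) as a sum over group pairs and applying the reverse triangle inequality termwise bounds each \(\big||p_g(\R)-p_h(\R)| - |p_g(\S)-p_h(\S)|\big|\) by \(|p_g(\R)-p_g(\S)| + |p_h(\R)-p_h(\S)|\); summing, each index \(g\) appears in exactly \(|\mathcal{G}|-1\) pairwise terms, which collects the claimed coefficient.

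The substantive content lives entirely in \cref{thm:yangsthm}; granted that, the only remaining work for \cref{thm:dp-label} is to verify that its derivative bound is uniform in \(\mathbf{b}\) so that \cref{thm:lipshitz} genuinely applies. I expect this uniformity check to be the sole delicate point: the Lipshitz machinery requires \(\nabla_{\mathbf{b}} v \preceq \mathbf{L}\) everywhere along a curve, and here that requirement reduces exactly to the label-shift invariance of \(\beta^{+}_g\) and \(\beta^{-}_g\). Were the slopes \(\beta^{+}_g - \beta^{-}_g\) to vary with the shift, the integral of the derivative along the curve would not collapse to the clean product \(\mathbf{L}\cdot\mathbf{B}\) and one would be forced to bound a genuine path integral; the affine structure is what makes this unnecessary.
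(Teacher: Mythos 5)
Your proof is correct and takes essentially the same route as the paper: the paper's own proof of \cref{thm:dp-label} is exactly the combination of \cref{thm:yangsthm} with \cref{thm:lipshitz}, and your explicit check that the Lipshitz constant is uniform in \(\mathbf{b}\) (via the label-shift invariance of \(\beta^{+}_g, \beta^{-}_g\)) spells out a point the paper only notes in passing. Your supplementary triangle-inequality derivation is also valid (under the paper's convention, evident in its proof of \cref{thm:dp-covariate}, that the pairwise sum in \(\dispdp\) counts each unordered group pair once, so each group appears in \(|\mathcal{G}|-1\) terms) and recovers the same coefficient.
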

\vspace{-0.15in}
Intuitively, the change in \(\disp_\DP\) subject to label shift \rr{depends on} \(|\beta^{+}_{g} - \beta^{-}_{g}|\), the marginal change in acceptance rates as agents change their qualifications \(Y\).
\rr{We measure the distribution shift as agents change their qualifications} by \(|Q_g(\S) - Q_g(\R)|\). When
$\beta^{+}_{g}$ is close to $\beta^{-}_{g}$, the policy looks like a random
classifier, and a label shift has limited effect on statistical group disparity.
When $|\beta^{+}_{g} - \beta^{-}_{g}|$ is large, indicating high classifier
accuracy, the effect \rr{on supremal disparity} is larger. Our bound thus exposes a direct trade-off between accuracy and fairness
transferability guarantees.

\vspace{-0.1in}
\section{Comparisons to Synthetic Distribution Shifts (Demographic Parity)} \label{sec:syntheticcomparison}

To further interpret our results, in this section, we consider specific and popular agent models to characterize distribution shift and instantiate our bounds \rr{for particular forms of \(\mathbf{D}\), \(\mathbf{B}\), and \(\disp\)}. 

\subsection{Covariate Shift via Strategic Response} 
\label{sec:strategic-response} Let us consider a specific example of covariate
shift (\cref{eq:def-covariate-shift}) caused by a deterministic,
group-independent model of \emph{strategic response} in which agents react to a binary classification policy \(\pi\) characterized by
group-specific feature thresholds:
\begin{equation}
  \hat{Y} \sim \pi(x, g) = \begin{cases}
  1 &\text{ with probability } 1 \text{ if } x \geq \tau_{g} \\
  0 &\text{ with probability } 1 \text{ otherwise }
\end{cases}
\end{equation}


For simplicity, we assume the feature domain \(\mathcal{X} = [0, 1]\).
In response to threshold \(\tau_g\), agents in each group \(g\) may modify their feature \(x\) to \(x'\) by incurring a cost $c_g(x, x')\geq 0$. Similar to
\cite{hardt2016strategic}, we define the \emph{utility} \(u_g\) for agents in group \(g\) to be
\begin{align} \label{eq:strategic-utility}
  u_g(x, x') &\define \beta_g(x') - \beta_{g}(x) - c_g(x, x'); \quad \beta_{g}(x) \define \Pr(\hat{Y} {=} 1 \mid X {=} x, G {=} g),   \forall g.
\end{align}
Contrary to the standard strategic classification setting, we do not assume that
feature updates represent false reports, but that such updates may correspond to actual changes underlying the true qualification \(Y\)
of each agent. This assumption has been made in a recent line of research in
incentivizing improvement from human agents subject to such classification
\cite{chen2021linear}.

Next, we assume all agents are rational utility maximizers (\cref{eq:strategic-utility}).
For a given threshold $\tau_g$ and manipulation budget $m_{g}$, the best response
of an agent with original feature $x$ is
\begin{equation}
  x' = \argmax_{z}\ u_g(x, z), \quad \text{ such that }\ c_g(x, z) \leq m_{g}
\end{equation}
To make the problem tractable, we make additional assumptions about the agents' best responses. 
\begin{assumption}
    An agent's original feature $x$ is sampled as $X\sim \mathcal{U}_{[0,1]}$\footnote{where \(\mathcal{U}\) represents the uniform distribution.}.
\end{assumption}
\begin{assumption}
\label{assumption:cost-function}
The cost function $c_g(x, x')$ is  monotone in $|x - x'|$ as
\(
  c_g(x, x') =  |x' - x|
\).
\end{assumption}
Under \Cref{assumption:cost-function}, only those agents with features
$x \in [\tau_g - m_{g}, \tau_g)$ will \emph{attempt} to change their feature. We
also assume that feature updates are non-deterministic, such that agents with
features closer to the decision boundary \(\tau_{g}\) have a greater
\emph{chance} of updating their feature and each updated feature \(x'\) is
sampled from a uniform distribution depending on \(\tau_{g}\), \(m_{g}\), and \(x\):

\begin{assumption} 
\label{assumption:nondeterministic-feature-update}
For agents who \emph{attempt} to update their features, the probability of
a successful feature update is
\(
    \Pr(X \neq X') = 1 - \frac{|x - \tau_g|}{m_{g}}
\).
\end{assumption}

\begin{assumption}
\label{assumption:new-feature-uniform-distribution}
  An agent's updated feature \(x'\), given original feature \(x\), manipulation
  budget \(m_{g}\), and classification boundary \(\tau_{g}\), is sampled as
  \(
    X' \sim \mathcal{U}_{[\tau_g, \tau_g + m_{g} - x]}
  \).
\end{assumption}

With the above setting, we can specify the reweighting coefficient $\omega_g(x)$ for our setting (\cref{eq:strategic-omega} in \cref{sec:proof-strategic-response} and get the following bound for the strategic response setting\footnote{See \Cref{fig:sc-distribution-shift} in Appendix \ref{secA:additional-figures} for a demonstration of  \Cref{thm:dp-covariate}.}: 
\begin{proposition}
  \label{proposition:bound-strategic-response}
  For our assumed setting of strategic response involving \DP for two groups \(\{g, h\}\),
\cref{thm:dp-covariate} implies
\vspace{-0.1in}
\begin{equation}
    \dispdp(\pi, \R) 
    \leq \dispdp(\pi, \S) + \tau_g(1-\tau_g)\frac{2}{3}m_{g}  +
    \tau_h(1 - \tau_h)\frac{2}{3}m_{h}
\end{equation}
\end{proposition}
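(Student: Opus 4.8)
The plan is to read \cref{proposition:bound-strategic-response} as a direct specialization of \cref{thm:dp-covariate}. The strategic-response model of \cref{sec:strategic-response} only perturbs the feature distribution while the policy \(\pi\) is held fixed, so it is an instance of covariate shift and the right-hand side of \cref{thm:dp-covariate} applies verbatim. The whole task therefore reduces to evaluating, for each of the two groups \(g\) and \(h\), the two quantities that appear in that bound: the source acceptance rate \(\beta_g = \Pr_{\pi,\S}(\hat{Y}{=}1 \mid G{=}g)\) and the variance \(\var_{\S}[\omega_g(\R,\S,x)]\) of the reweighting coefficient.

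First I would compute \(\beta_g\). Because the source feature is \(X \sim \mathcal{U}_{[0,1]}\) and \(\pi\) accepts exactly the set \(\{x \ge \tau_g\}\), we get \(\beta_g = \Pr_\S(X \ge \tau_g \mid G{=}g) = 1 - \tau_g\), and hence \(\beta_g(1-\beta_g) = \tau_g(1-\tau_g)\). This already accounts for the \(\tau_g(1-\tau_g)\) prefactor in the claimed bound, leaving the variance of \(\omega_g\) as the remaining content.

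Next I would make the reweighting coefficient explicit---this is \cref{eq:strategic-omega}. Since the source density is identically \(1\) on \([0,1]\), \(\omega_g(x)\) equals the post-response target density \(\Pr_\R(X{=}x \mid G{=}g)\), which I would assemble from the response assumptions: under \cref{assumption:cost-function} only agents with \(x \in [\tau_g - m_g, \tau_g)\) attempt to move; \cref{assumption:nondeterministic-feature-update} assigns each a success probability growing linearly toward the threshold; and \cref{assumption:new-feature-uniform-distribution} spreads successful movers uniformly above \(\tau_g\). Tracking outflow and inflow yields a piecewise density equal to \(1\) away from the threshold, linearly depleted just below \(\tau_g\), and raised in a linearly-decaying bump just above \(\tau_g\); a convenient sanity check is that the depleted mass below \(\tau_g\) equals the added mass above it, so the density normalizes and \(\E_{\S}[\omega_g]=1\).

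Finally, I would evaluate \(\var_{\S}[\omega_g] = \int_0^1 (\omega_g(x)-1)^2\,dx\), which collapses to integrating the two triangular perturbations of base width \(m_g\) about the threshold. Substituting the resulting variance together with \(\beta_g(1-\beta_g)=\tau_g(1-\tau_g)\) into the summand \(\big(\beta_g(1-\beta_g)\,\var_{\S}[\omega_g]\big)^{1/2}\) of \cref{thm:dp-covariate}, and summing over \(g\) and \(h\), produces the stated per-group contributions \(\tau_g(1-\tau_g)\tfrac{2}{3}m_g\) and \(\tau_h(1-\tau_h)\tfrac{2}{3}m_h\). I expect the main obstacle to be the third step---correctly building the target density from the two independent stochastic assumptions, in particular the inflow term, which convolves the per-agent success probability with the uniform landing kernel over all source positions that can reach a given point above the threshold, and then verifying normalization. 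Once \(\omega_g\) is pinned down, the variance integral and the final substitution into \cref{thm:dp-covariate} are mechanical.
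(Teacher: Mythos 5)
Your proposal takes essentially the same route as the paper: its \cref{lemma:sr-w_g(x)} derives the same piecewise reweighting coefficient from the three response assumptions, computes \(\var_{\S}[\omega_g(\R,\S,x)]=\tfrac{2}{3}m_g\) from the two triangular perturbations around \(\tau_g\), and plugs this together with \(\beta_g = 1-\tau_g\) into \cref{thm:dp-covariate}. One caveat, which your final sentence inherits from the paper's own proof: substituting into the summand \(\big(\beta_g(1-\beta_g)\,\var_{\S}[\omega_g]\big)^{1/2}\) yields \(\big(\tau_g(1-\tau_g)\tfrac{2}{3}m_g\big)^{1/2}\), not the stated term \(\tau_g(1-\tau_g)\tfrac{2}{3}m_g\); since \(\tau_g(1-\tau_g)\tfrac{2}{3}m_g\le\tfrac{1}{6}<1\), the square-rooted quantity is the \emph{larger} one, so the proposition as written is strictly stronger than what the direct substitution proves, and neither your argument nor the paper's closes that gap.
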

The above result shows that two factors lead to a smaller difference between the source and target fairness violations: a less stochastic classifier (when the threshold $\tau_g$ is far away from $0.5$) and a smaller manipulation budget $m_g$ (diminishing agents' ability to adapt their feature). In this case, $B_g =\frac{2}{3} \tau_g (1 - \tau_g)$.
These factors lead to less potential manipulation and result in a tighter upper bound for the fairness violation on \(\R\).
\vspace{-0.1in}
\begin{figure}
    \subfigure[A stereoscopic (cross-eye view) comparison between the bound of \cref{sec:covariateDP} (gradated) and simulated results for the model of \cref{sec:strategic-response} (blue) in response to a \DP-fair classifier with different initial group-independent acceptance rates. \label{fig:DP-covariate-experimental-small}]{\includegraphics[width=0.48\linewidth]{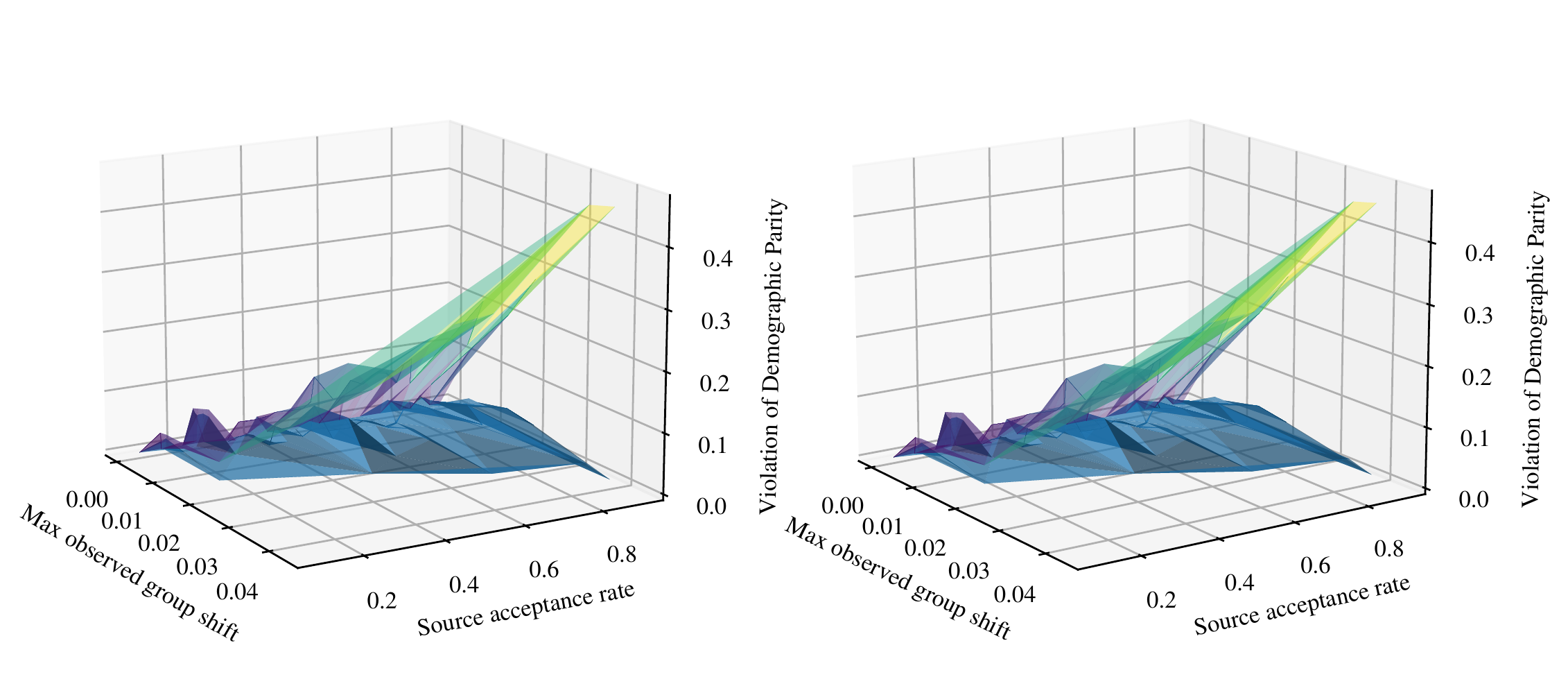}}
\hfill
\subfigure[\small{A policy satisfying \DP is subject to distribution shift prescribed by replicator dynamics (\cref{sec:replicator}). Realized disparity increases (blue) are compared to the theoretical bound (\cref{thm:dp-label}, gradated), which is tight when group have dissimilar qualification rates.}\label{fig:dp-replicator}  ]{\includegraphics[width=0.48\linewidth]{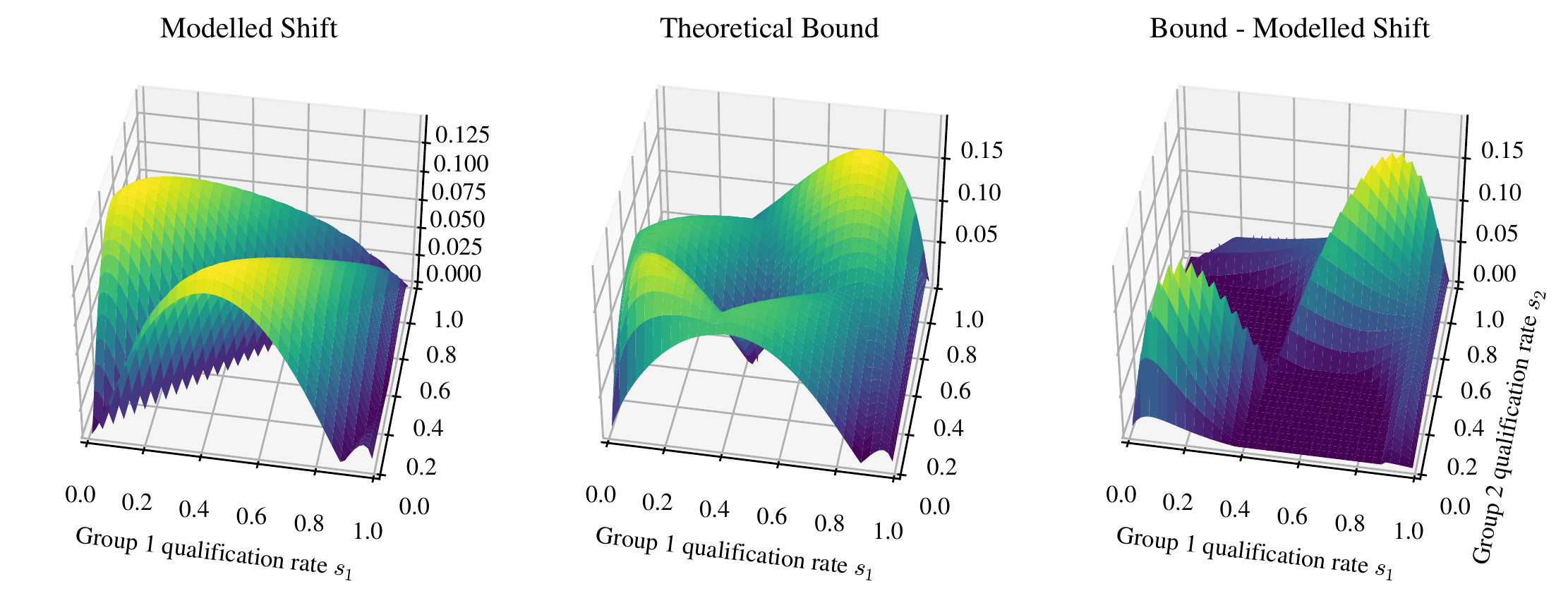}}

\caption{Comparisons to synthetic distribution. Larger versions are provided in \cref{sec:empirical-evals}.}
\end{figure}


\subsection{Label Shift via Replicator Dynamics} \label{sec:replicator}

We now evaluate our theoretical bound for demographic
parity subject to label shift (\cref{thm:dp-label}) on the replicator dynamics model of
\citet{raab2021unintended}. \rev{Briefly, replicator dynamics assumes that the proportion of agents in a population choosing one strategy over another grows in proportion to the ratio of average utilities realized by the two strategies.
} The cited model additionally assumes \(\mathcal{X} = \RR\),
\(\mathcal{Y} = \{\FP\}\), and a monotonicity condition for \(\S\) given by
\(
    \dv{}{x} \frac{\Pr_{\S}(X {=} x \mid Y {=} 1)}{\Pr_{\S}(X {=} x \mid Y {=} 0)} > 0
\).

Label shift under the discrete-time (\(t\)) replicator dynamics may be expressed
in terms of group \emph{qualification rates}
\(Q_{g} \define \Pr_{t}(Y {=} 1 \mid G {=} g)\) and agent utilities
(\ie, group- and feature-independent values \(U_{y,\hat{y}}\)) such
that, in each group, the popularity and average utility associated with a label determines
its frequency at the next time step \(t{+}1\).

Denote the \emph{fractions} of group-conditioned, feature-independent outcomes with the expression
\(\rho^{y,\hat{y}}_{g} \define \Pr_{t}(\hat{Y} {=} \hat{y}, Y {=} y \mid G {=} g)\)
and abbreviate the fraction-weighted utility as
\(u^{y,\hat{y}}_{g}(t) \define U_{y,\hat{y}} \cdot \rho^{y,\hat{y}}_{g}\). We may then
represent the replicator dynamics as
\begin{align} \label{eq:replicator-dynamics}
  Q_{g}[t + 1] &= \frac{u^{\TP}_{g}(t) + u^{\FN}_{g}(t) } {u^{\TP}_{g}(t) + u^{\FN}_{g}(t) +  u^{\TN}_{g}(t) + u^{\FP}_{g}(t) }
\end{align}
To apply \cref{thm:dp-label}, we also observe that
$
  | \beta^{+}_{g} - \beta^{-}_{g} | = \frac{| \rho^{\TP}_{g} - \rho^{\FP}_{g} |}{\rho^{\TP}_{g} + \rho^{\FP}_{g}},
$
where \(\beta^{+}_{g}\) and \(\beta^{-}_{g}\) represent the true positive rate
and false positive rate for group \(g\), respectively, and we use the change in
qualification rate as our measurement of label shift, \ie,
\(B_{g} = |Q_{g}[t + 1] - Q_{g}[t]|\).
When demographic parity is perfectly
satisfied, we note that the acceptance rate
(\(\rho^{\TP}_{g} + \rho^{\FP}_{g}\)) is group-independent.

\begin{theorem}
  \label{thm:label-replicator}
  For \DP subject to label replicator dynamics,
  \begin{equation} \label{eq:dp-replicator-bound}
  \begin{aligned}
\dispdp(\pi, \R) \leq \dispdp(\pi, \S) + \sum_g \Big|Q_{g}[t + 1] - Q_{g}[t]\Big| \frac{| \rho^{\TP}_{g} - \rho^{\FP}_{g} |}{\rho^{\TP}_{g} + \rho^{\FP}_{g}}
  \end{aligned}
\end{equation}
\end{theorem}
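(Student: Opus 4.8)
The plan is to recognize \cref{thm:label-replicator} as a direct specialization of the general label-shift bound \cref{thm:dp-label} to the replicator model, so that the proof reduces to (i) confirming that the evolution in \cref{eq:replicator-dynamics} is a genuine label shift, and (ii) substituting the two model-specific quantities---the per-group shift budget and the factor $|\beta^+_g - \beta^-_g|$---into the right-hand side of \cref{thm:dp-label}. Since this example treats a binary task over two groups, $|\mathcal{G}| = 2$ and the prefactor $(|\mathcal{G}| - 1)$ collapses to $1$, which already matches the structure of \cref{eq:dp-replicator-bound}.

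First I would verify the label-shift condition \cref{eq:def-label-shift}. Replicator dynamics only reallocates population mass between the two qualification classes within each group, updating the qualification rate $Q_g = \Pr(Y{=}1 \mid G{=}g)$ while leaving the class-conditional feature distribution $\Pr(X \mid Y, G)$ fixed from step $t$ to step $t+1$. Hence $\Pr_\R(X \mid Y, G) = \Pr_\S(X \mid Y, G)$, the shift is a label shift, and \cref{thm:dp-label} applies. This same invariance (already noted in the excerpt) keeps the group true- and false-positive rates $\beta^+_g, \beta^-_g$ constant under the dynamics, which is what lets us evaluate them at the source-time outcome fractions.

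With the theorem invoked, I would instantiate its inputs. The chosen measurement of group-specific label shift is $B_g = |Q_g[t+1] - Q_g[t]|$, so $\sum_g B_g\,|\beta^+_g - \beta^-_g|$ becomes $\sum_g |Q_g[t+1] - Q_g[t]|\,|\beta^+_g - \beta^-_g|$. It then remains to re-express $|\beta^+_g - \beta^-_g|$ through the feature-independent outcome fractions: writing $\beta^+_g = \rho^{\TP}_g / Q_g$ and $\beta^-_g = \rho^{\FP}_g / (1 - Q_g)$ and simplifying is intended to produce the observed identity $|\beta^+_g - \beta^-_g| = |\rho^{\TP}_g - \rho^{\FP}_g| / (\rho^{\TP}_g + \rho^{\FP}_g)$. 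Substituting this into the bound yields \cref{eq:dp-replicator-bound} directly.

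The main obstacle is precisely this last algebraic reduction. A naive expansion of $\rho^{\TP}_g / Q_g - \rho^{\FP}_g / (1 - Q_g)$ leaves an explicit dependence on $Q_g$, so obtaining the clean $Q_g$-free ratio $|\rho^{\TP}_g - \rho^{\FP}_g| / (\rho^{\TP}_g + \rho^{\FP}_g)$ is not automatic and appears to lean on the structural features emphasized in the surrounding text---notably the group-independence of the acceptance rate $\rho^{\TP}_g + \rho^{\FP}_g$ when demographic parity holds at the source. I would therefore spend the bulk of the effort pinning down exactly the conditions under which this identity is exact, since that is what makes the substitution into \cref{thm:dp-label} valid; the remainder is mechanical.
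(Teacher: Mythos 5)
Your proposal is correct and takes essentially the same route as the paper: the paper's own proof of \cref{thm:label-replicator} is a one-step substitution of \(|\mathcal{G}| = 2\), \(B_{g} = |Q_{g}[t+1] - Q_{g}[t]|\), and \(|\beta^{+}_{g} - \beta^{-}_{g}| = |\rho^{\TP}_{g} - \rho^{\FP}_{g}|/(\rho^{\TP}_{g} + \rho^{\FP}_{g})\) into \cref{thm:dp-label}, exactly as you describe. Your suspicion about the final identity is warranted---the paper itself only asserts it as an unproved ``observation'' in \cref{sec:replicator}, and it is indeed not an unconditional algebraic fact (\eg, \(\beta^{+}_{g} = 0.6\), \(\beta^{-}_{g} = 0.2\), \(Q_{g} = \nicefrac{1}{2}\) gives \(|\beta^{+}_{g} - \beta^{-}_{g}| = 0.4\) but \(|\rho^{\TP}_{g} - \rho^{\FP}_{g}|/(\rho^{\TP}_{g} + \rho^{\FP}_{g}) = 0.5\)), so your plan to pin down the structural conditions under which it holds is more careful than, not divergent from, the paper's argument.
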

\vspace{-0.1in}
In \cref{fig:dp-replicator}, we graphically represent all possible states of an initially
fair system (thus determining \(\beta\) and \(\rho\) as a result of
the monotonicity condition) by the tuple of qualification rates for each group. With
the dynamics prescribed by \cref{eq:replicator-dynamics}, we depict the \emph{rate of change} of disparity given a fixed, locally \DP-fair policy, and compare this to the theoretical bound when \(B_{g} = |Q_{g}[t + 1] - Q_{g}[t]|\).

Interpreting our results, we note that the bound lacks information about the relative directions of the change in acceptance rates for each group, and thus over-approximates possible fairness violations when group acceptance rates shift the same direction. When group acceptance rates move in opposing directions, however, the bound gives excellent agreement with the modelled replicator dynamics.
\vspace{-0.1in}
\section{Comparisons to Real-World Distribution Shifts} \label{sec:realcomparison}
\vspace{-0.1 in}
We now compare our special-case theoretical bounds \rev{(\ie, label/covariate shift)} to real-world distribution
shifts and hypothetical classifiers. We use American Community Survey (ACS) data
provided by the US Census Bureau \cite{IPUMS}. We adopt the sampling and
pre-processing approaches following the \texttt{Folktables} package provided by
\citet{ding2021retiring}\footnote{This package is available at
\url{https://github.com/zykls/folktables}.} to obtain 1,599,229 data points. The
data is partitioned by (1) all fifty US states and (2) years from 2014 to
2018. We use 10 features covering the demographic information used in the UCI
Adult dataset \cite{Blake1998UCIRO}, including age, occupation, education, \etc,
as \(X\) for our model, select sex as binary protected group, i.e.,
\(G \in \{g=\texttt{female}, h=\texttt{male}\}\). We set the label \(Y\) to whether an individual's annual income is greater than \$50K. 

\rev{To apply our label-shift or covariate-shift bounds, we first need to verify whether the two datasets satisfy either of these assumptions. }
We adopted a conditional independence test \cite{NEURIPS2020_e2d52448}, \rev{which takes data from source and target domains as input and returns a divergence score for each covariate and label variable, reflecting to what extent the variable is shifted between distributions.} We find that the likelihood that the covariates shift across US states is approximately two orders of magnitude higher than for labels.
 More specifically, there are 4 covariates, including class of worker (probabilistic divergence score of 2.67e-2), hours worker per week (3.56e-2), sex (3.56e-2) and race (2.55e-1), that are more likely to be shifted than the label variable (1.29e-4). For temporal shifts within states, \rev{we find that the label variable is more likely to be shifted (0.1) than all the other covariates (which are below 0.01), approximately two orders of magnitude in favor of label shift over covariate shift. We therefore compare the disparities of hypothetical policies on these distributions to bounds generated from the corresponding, approximately satisfied assumptions.}

On this data, we train a set of group-dependent, linear threshold
classifiers $\Pr_{\pi(x, g)}(\hat{Y} {=} 1) = \Indicator[\sigma(w \cdot x) > \tau_g]$,
for a range of thresholds \(\tau_{g}\) and \(\tau_{h}\) for each source
distribution. Here, $\sigma(\cdot)$ is the logistic function and $w$ denotes a
weight vector. We then consider two types of real-world distribution shift: (1)
\emph{geographic}, in which a model trained for one state is
evaluated on other US state in the same year, and (2) \emph{temporal}, in which a model trained for 2014 is evaluated on the same
state in 2018.


\vspace{-0.2in}
\begin{figure}[H]
    \centering
    \subfigure[CA $\longrightarrow$ IL]{\includegraphics[width=0.24\linewidth]{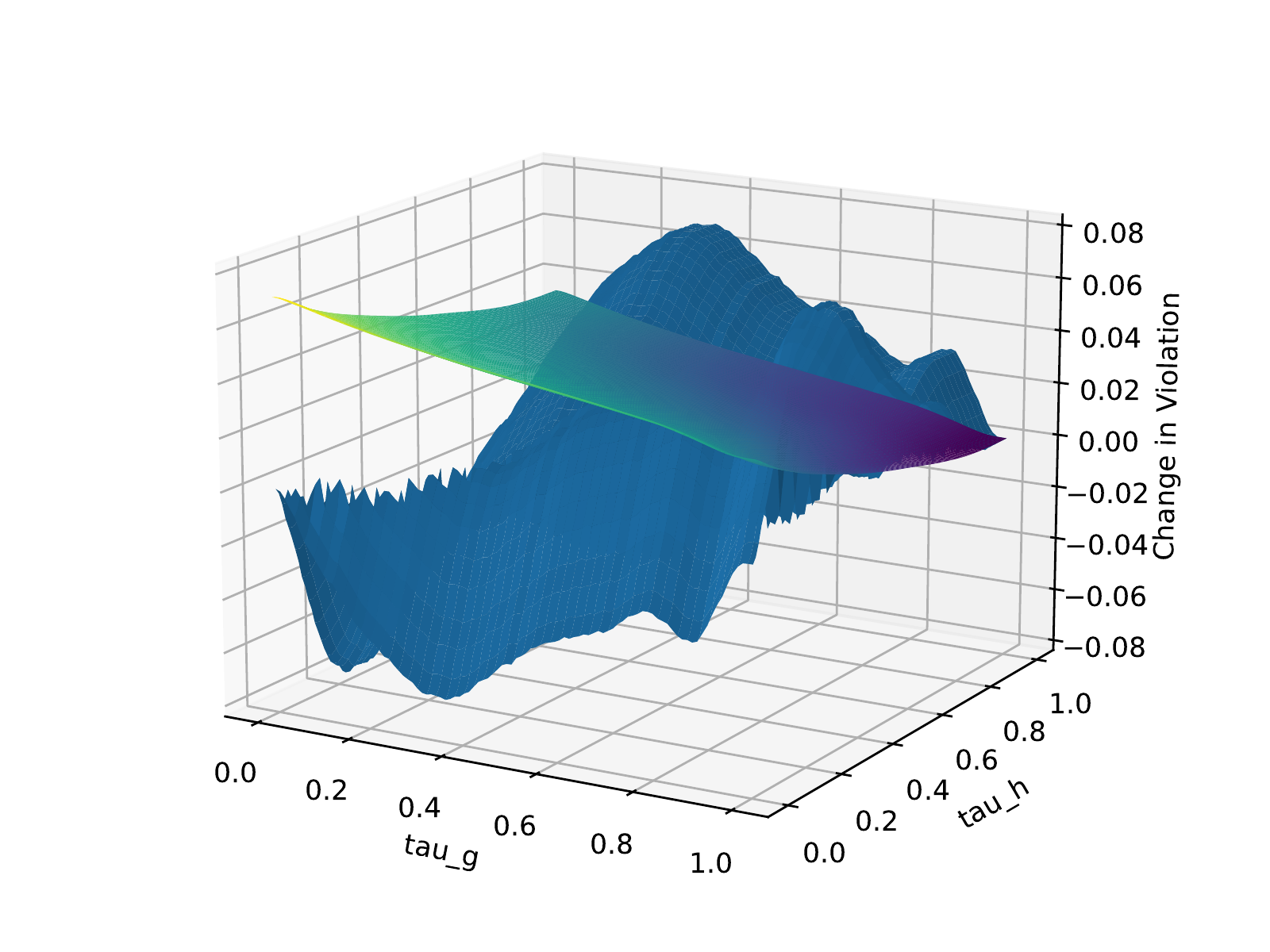}}
    \subfigure[CA $\longrightarrow$ NV]{\label{fig:simulation-state-ca-nv}\includegraphics[width=0.24\linewidth]{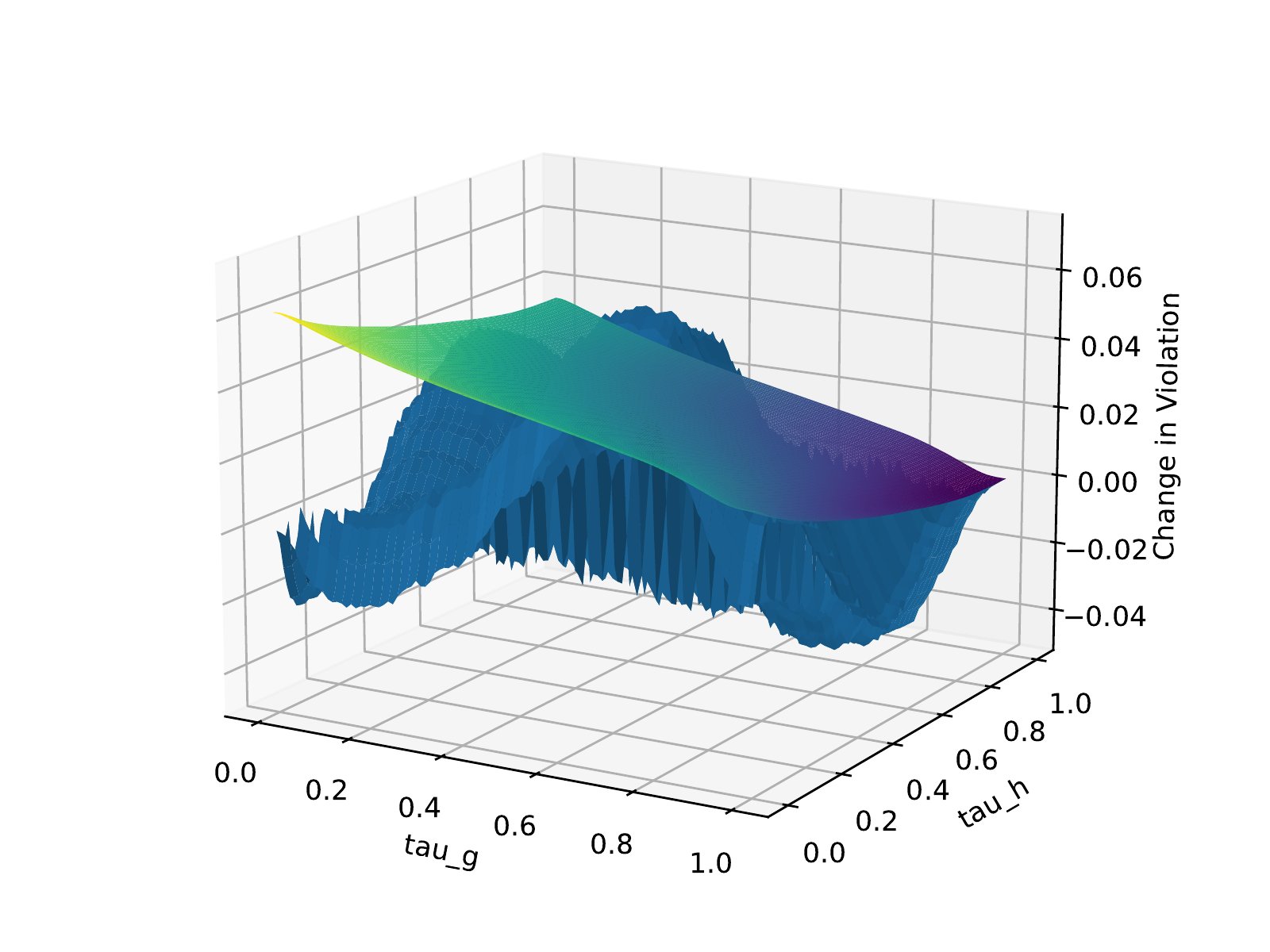}}
    \subfigure[CA: 2014 $\rightarrow$ 2018]{\label{fig:simulation-time-ca}\includegraphics[width=0.24\linewidth]{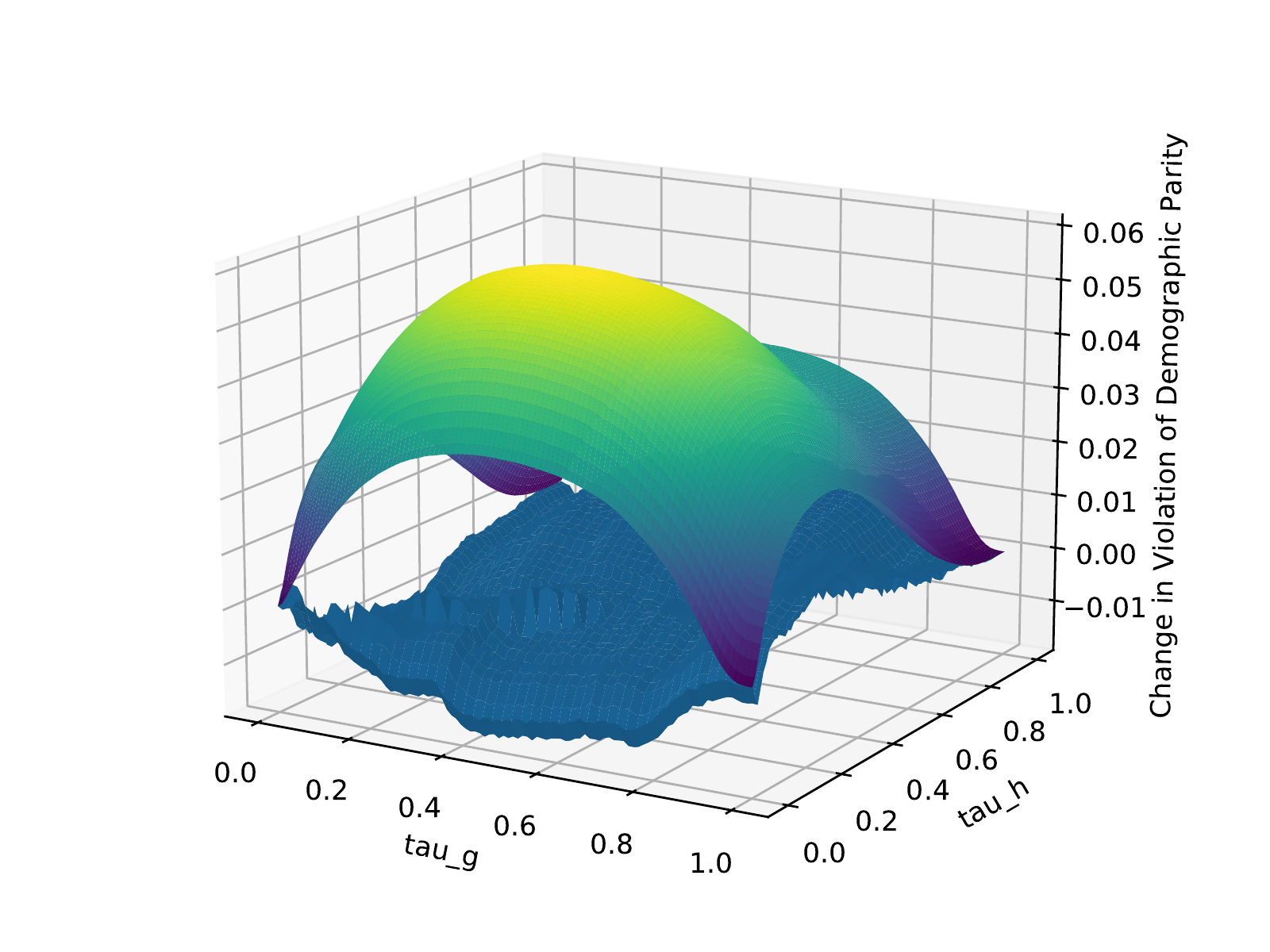}}
    \subfigure[TX: 2014 $\rightarrow$ 2018]{\label{fig:simulation-time-tx}\includegraphics[width=0.24\linewidth]{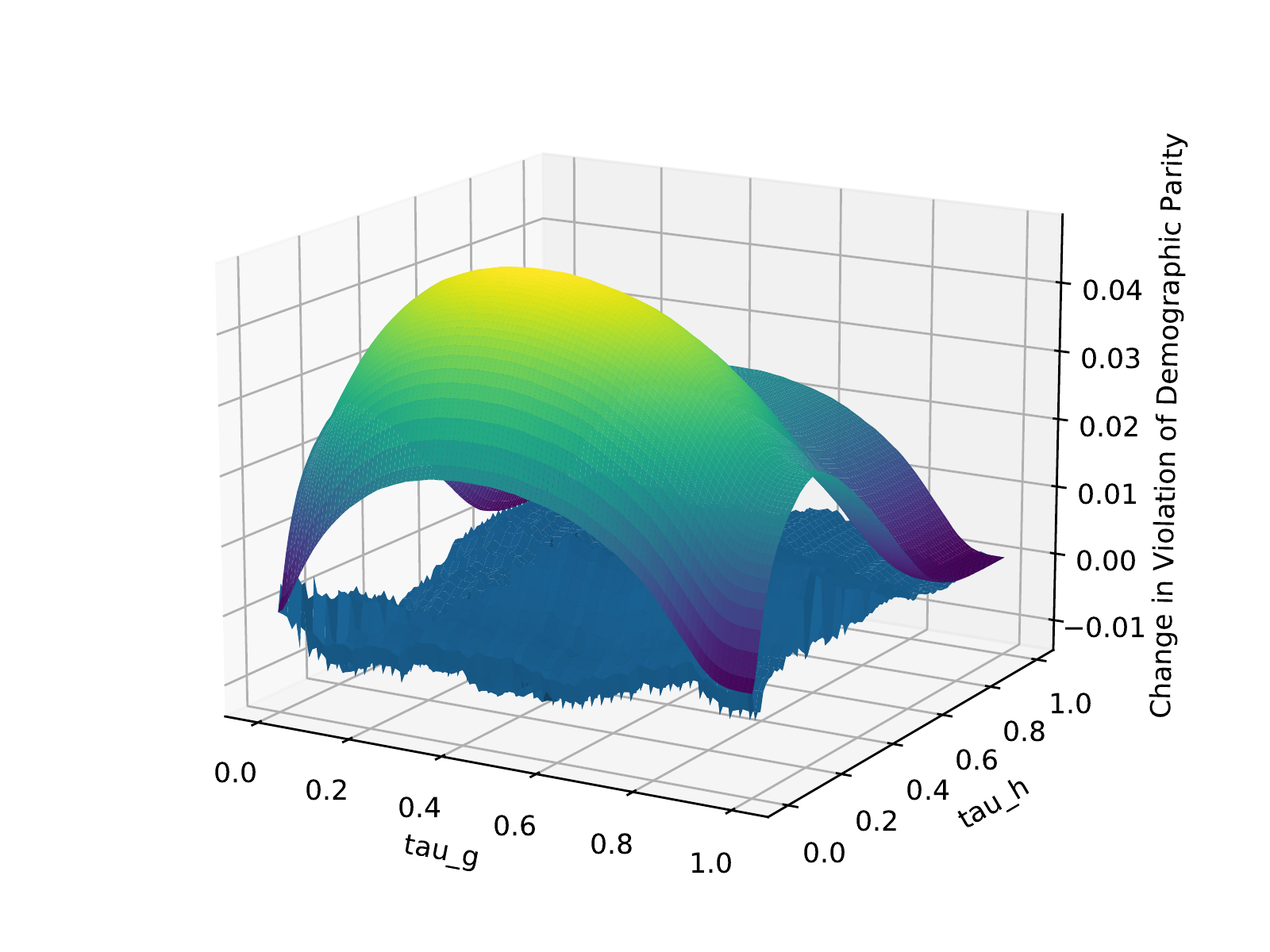}}
    \caption{Simulated change in \DP violation \rev{(blue mesh)} subject to geographic and temporal distribution shifts \vs direct application of bounds for approximately satisfied assumptions (respectively, \cref{thm:dp-covariate} and \cref{thm:dp-label}) \rev{(gradated mesh)}. The $x$-axis and $y$-axis of both figures represent the policy thresholds $\tau_g$ and $\tau_h$.}
    \label{fig:combined_simulation}
    \vspace{-0.1in}
\end{figure}
We graphically compare the theoretical bounds of
\cref{thm:dp-covariate} and \cref{thm:dp-label} for the increased violation of
\DP subject to covariate and label shift, respectively, to the simulated
violations for our model and data in \cref{fig:combined_simulation}. \rev{We provide additional examples and an evaluation of bounds for \EO subject to covariate shift (noting that label shift preserves \EO in theory) in \cref{app:real-world-data}. Despite the fact that geographic or
temporal distribution shifts only approximately satisfy the assumptions of
covariate or label shift, these comparisons demonstrate that our theoretical bounds are not vacuous, approximately bounding the change of fairness violation across real-world domain shifts. For geographic shifts, the covariate shift \EO bounds (\cref{app:real-world-data}) correctly overestimate disparity and tighten near accurate policies, while our \DP bounds are useful only for a subset of policy thresholds (Figure \ref{fig:simulation-state-ca-nv}). {add specific pointer. e.g, 4.a, that one is 4.b} For temporal shift, the label shift bound for \DP correctly overestimates the real change of DP violations but still remains at the same order of magnitude (Figure \ref{fig:simulation-time-ca} and \ref{fig:simulation-time-tx}).}




\vspace{-0.1in}
\section{Conclusion and Discussion}
\vspace{-0.1in}
In this paper, we have developed a unifying framework for bounding the violation
of statistical group fairness guarantees when the underlying distribution
shifts within presupposed bounds. We hope that this work can
generate meaningful discussion regarding the viability of fairness guarantees
subject to distribution shift, the bounds of adversarial attacks against
algorithmic fairness, and evaluations of robustness with respect to algorithmic
fairness.
We believe that, just as published empirical measurements are of limited use without
reported uncertainties, fairness guarantees must be accompanied by bounds on their robustness to
distribution shift.

Future work remains to apply our framework for to problem of fairness transferability in settings with more complicated distribution shift dynamics. For example, compound distribution shifts \cite{schrouff2022maintaining}, which compose covariate shifts and label shifts, cannot be treated by composing the theoretical bounds developed herein without additional information regarding intermediate distributions. Another potential future direction is to develop reasonable bounds on
anticipated distribution shift from models of human behavior and exogenous
pressures.

\vspace{-0.1in}
\paragraph{Acknowledgement} This work is supported by the National Science Foundation (NSF) under grants IIS-2143895, IIS-2040800 (FAI program in collaboration with Amazon), and CCF-2023495.
\bibliography{references}
\bibliographystyle{plainnat}

\newpage
\onecolumn
\appendix
\section{Notation}
\label{secA:notation}
\begin{table}[!ht]
    \centering
    \begin{tabular}{c l}
        \toprule
            Symbol & Usage \\
        \midrule
            $X$ & A random variable representing an example's \emph{features}. \\
            $\mathcal{X}$ & The domain of features $X$. \\
            $Y$ & A random variable representing an example's \emph{ground truth label}. \\
            $\mathcal{Y}$ & The domain of labels $Y$. \\ 
            $\hat{Y}$ & A random variable representing the \emph{predicted label} for an example. \\
            $\hat{\mathcal{Y}}$ & The domain of predicted labels $\hat{Y}$ (distinguished semantically from \(\mathcal{Y}\)). \\
            $G$ & A random variable representing an example's \emph{group} membership. \\
            $\mathcal{G}$ & The domain for group membership $G$. \\
            $\pi$ & A learned (non-deterministic) policy for predicting \(\hat{Y}\) from \(X\) and \(G\). \\
            $\Pr$ & A sample probability (density) according to a referenced distribution. \\
            $\mathcal{P}$ & The space of probability distributions over a given domain. \\
            $\DD$ & The space of distributions of \emph{examples} over \(\mathcal{X} \times \mathcal{Y} \times \mathcal{G}\). \\
            $\OO$ & The space of distributions of \emph{outcomes} over \(\mathcal{X} \times \mathcal{Y} \times \hat{\mathcal{Y}}\). \\
            $\GG$ & The space of distributions of \emph{group-conditioned examples} \(\mathcal{X} \times \mathcal{Y}\). \\
            $\S$ & The \emph{source} distribution in \(\DD\). \\
            $\R$ & The \emph{target} distribution in \(\DD\) to which \(\pi\) is now applied. \\
            $\mathbf{D}$ & A vectorized (by group) premetric for measuring shifts in \(\DD\). \\

            $\mathbf{B, a, b, c}$ & A vector of element-wise bounds for \(\mathbf{D}\). \\
            $\mathbf{e}_g$ & A group-specific basis vector. \\
            $\disp$ & A disparity function, measuring ``unfairness''. \\
            \(\Psi\) & A premetric function (see \cref{def:premetric}) for measuring shifts in \(\OO\). \\
            $v$ & Supremal disparity within bounded distribution shift. \\
            $\DP$ & Abbreviation for Demographic Parity. \\
            $\EO$ & Abbreviation for Equalized Odds. \\
            $\EOp$ & Abbreviation for Equal Opportunity. \\
        \bottomrule
    \end{tabular}
    \caption{Primary Notation}
    \label{tab:notation_table}
  \end{table}

\section{Extended Discussion of Related Work} \label{secA:related-work}

\textbf{Domain Adaptation:}  Prior work has considered the conditions under
which a classifier trained on a source distribution will perform well on a given
target distribution, for example, by deriving bounds on the number of training
examples from the target distribution needed to bound prediction error
\cite{ben-david2010domain, mansour2009domain}, or in conjunction
with the dynamic response of a population to classification
\cite{liu2021model}. We \rr{are interested in} a similar setting and concern, but address the
transferability of \emph{fairness guarantees}, rather than accuracy. In
considering covariate shift and label shift as special cases in this paper, our
work may be paired with studies that address the transferability of prediction
accuracy under such assumptions \cite{shimodaira2000improving,
sugiyama2008direct, zhang2013domain}.

\textbf{Algorithmic Fairness:} Many formulations of fairness have been
proposed for the analysis of machine learning policies. When it is appropriate
to ignore the specific social and dynamical context of a deployed policy, the statistical regularity of policy outcomes may be considered across individual
examples \cite{dwork2012fairness} and across groups \cite{zemel2013learning,
feldman2015certifying, corbett2017algorithmic,hardt2016equality, chouldechova2017fair}.
In our paper, we focus on such statistical definitions of fairness between
groups, and develop bounds for demographic parity \cite{chouldechova2017fair}
and equalized odds \cite{hardt2016equality} as specific examples.

\textbf{Dynamic Modeling:} When the dynamical context of a deployed policy must be accounted for, such as when the policy influences control over the future
trajectories of a distribution of features and labels, we benefit from modelling how populations respond to classification. \rev{Among this line of work, \cite{liu2018delayed} initiate the discussion of the long-term effect of imposing static fairness constraints on a dynamic social system, highlighting the importance of measurement and temporal modeling in the evaluation of fairness criteria.}
However, developing such models
remains a challenging problem
\cite{creager2020causal, Singh2021fairness,rezaei2021robust, d2020fairness,
zhang2020fair,wen2019fairness, liu2019disparate, coate1993will,
hu2018short,mouzannar2019fair, raab2021unintended}.
\rev{In particular, \cite{creager2020causal} discuss causal directed acyclic graphs (DAGs) as a unifying framework on fairness in dynamical systems.}
In this work, rather than relying precise models of distribution shift to
quantify the transferability of fairness guarantees in dynamical contexts, we
assume a bound on the difference between source and target distributions. We
thus develop bounds on realized statistical group disparity while remaining
agnostic to the specific dynamics of the system.

\section{Additional Figures} \label{secA:additional-figures}

 \begin{figure}[H]
   \centering
   \includegraphics[width=.5\linewidth]{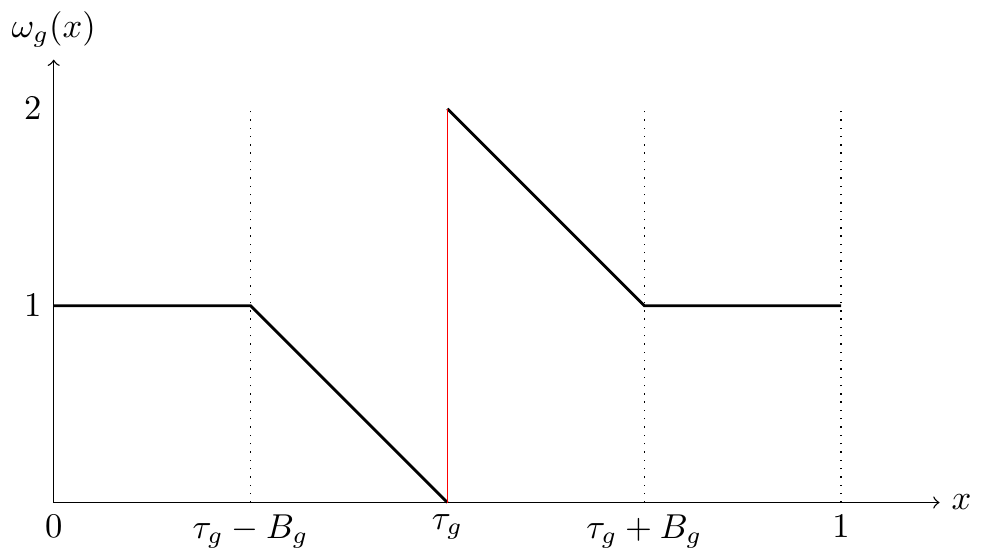}
   \caption{Distribution of the reweighting coefficient $w_g(x)$ for the setting of Covariate shift via Strategic Response.}
 \label{fig:sc-distribution-shift}
\end{figure}

\section{A Geometric Interpretation} \label{app:geometric}
In this extension of \cref{sec:covariateEO}, we fulfill the promise of
\cref{sec:metricbounds} and consider a case in which shared structure of between
\(\Psi \colon \OO^2 \to \RR \) and each \(D_{g} \colon \GG^2 \to \RR \) permits
a geometric interpretation of distribution shift for Equal Opportunity \(\EOp\), building on \cref{thm:cov-EO-superbound}. We continue to defer rigorous proof to \cref{app:proofs}.

We first recall the definition of the true positive rate of policy \(\pi\), for each group, on distribution \(\mathcal{T}\).
\begin{equation}
  \beta^{+}_{g} \define \Pr_{\pi, \R}( \hat{Y} {=} 1 \mid Y {=} 1, G {=} g )
\end{equation}
The true positive rate may be expressed as a ratio of inner products
defined over the space of square-integrable \(L^{2}\) functions on
\(\mathcal{X}\).\footnote{This precludes distributions with non-zero
probability mass concentrated at singular points.}
\begin{align}
\label{eq:ratio-inner-products}
  \beta^{+}_{g}[\R]
  &= \frac{\Pr_{\R}(\hat{Y}{=}1, Y{=}1 \mid G{=}g)}{\Pr_{\R}(Y{=}1 \mid G{=}g)}
  = \frac{\big\langle \mathsf{r}_{g}[\R], \mathsf{t}_{g} \big\rangle_g}{\big\langle \mathsf{r}_{g}[\R], \mathsf{1} \big\rangle_g}
  \\
  \langle a, b \rangle_g &\define \int_{\mathcal{X}} \label{eq:inner-product}
  a(x) b(x) \mathsf{s}_g(x) \dd{x}
\end{align}
where we use the shorthands
\begin{align}
    \mathsf{r}_{g}[\R](x) &\define \Pr_{\R}(X {=} x \mid G {=} g)
\\\mathsf{s}_{g}(x) &\define \Pr_{\S} (Y {=} 1 \mid X {=} x, G {=} g)
\\ \mathsf{1}(x) &\define 1
\\ \mathsf{t}_{g}(x) &\define \Pr_{\pi}(\hat{Y} {=} 1 \mid Y {=} 1, X=x, G=g)
\end{align}
and assume that \(\mathsf{s}_g(x) > 0\) for all \(x\) and \(g\).

We observe that the only degree of freedom in $\beta^{+}_{g}$ 
as \(\mathcal{\R}\) varies subject to covariate shift is \(\mathsf{r}_g\): by the covariate
assumption,
\(\mathsf{s}_g\) is fixed; \(\mathsf{t}\) meanwhile remains independent of
\(\R\) for fixed policy \(\pi\), since \(\pi\) is independent of \(Y\)
conditioned on \(X\) and \(G\).

\paragraph{Selection of \(\mathbf{D}\)}
We now select each \(D_{g}\) to be the standard metric for the inner product
defined by \cref{eq:inner-product}, where, for each group, distributions in
\(\GG\) are mapped to the corresponding vector \(\mathsf{r}_{g}\):
\begin{equation}
\begin{aligned}
  & D_{g}\big(
    \Pr_{\R}(X, Y \mid G {=} g)
    \parallel
    \Pr_{\S}(X, Y \mid G {=} g)
    \big) \\
  &\qquad\qquad\qquad \define
    \sqrt{
    \langle \mathsf{r}_{g}[\S], \mathsf{r}_{g}[\R] \rangle_{g}
    + \langle \mathsf{r}_{g}[\R], \mathsf{r}_{g}[\R] \rangle_{g}
    - 2 \langle \mathsf{r}_{g}[\S], \mathsf{r}_{g}[\R] \rangle_{g}
  }
\end{aligned}
\end{equation}
In this geometric picture, \(\mathbf{D}(\R \parallel \S) \preceq \mathbf{B}\) implies that all
possible values for \(\mathsf{r}_{g}[\R]\) lie within a ball of radius \(B_g\)
centered at \(\mathsf{r}_{g}[\S]\). By the normalization condition of a
probablity (density) function,
denoting \(\mathsf{s}_{g}^{-1}(x) \define (\mathsf{s}_{g}(x))^{-1}\),
the vector \(\mathsf{r}_{g}[\R]\) must also lie on the hyperplane
\begin{align}
\label{eq:normalized-hyperplane}
    \int_{\mathcal{X}} \mathsf{r}_g[\R] \dd{x} =
    \langle \mathsf{r}_g[\R], \mathsf{s}_g^{-1} \rangle_g = 1
\end{align}
Recalling \cref{eq:ratio-inner-products}, the group-specific true positive rate \(\beta^{+}_{g}[\R]\) for policy \(\pi\)
is given by a ratio of the projected distances of \(\mathsf{r}_{g}\) along the \(\mathsf{t}_g\) and
\(\mathsf{1}\) vectors.  Let us therefore denote the projection of \(\mathsf{r}_{g}[\R]\) onto the
\((\mathsf{1}, \mathsf{t}_{g})\)-plane as 
\(\mathsf{r}_{g}^{\perp}[\R]\). We may then consider
the possible values of \(\mathsf{r}_{g}^{\perp}[\R]\) as projections from the intersection of the \(\mathsf{r}_{g}[\S]\)-centered
hypersphere of radius \(B_{g}\) and the hyperplane of normalized distributions (\cref{eq:normalized-hyperplane}).
Using \(\angle(\cdot, \cdot)\) to denote the angle between vectors and
denoting
\(\phi'_g \define \angle(\mathsf{r}_g, \mathsf{t}_g)\),
\(\theta'_g \define \angle(\mathsf{r}_g, \mathsf{1})\),
\(\phi_g \define \angle(\mathsf{r}_g^{\perp}, \mathsf{t}_g)\), and
\(\theta_g \define \angle(\mathsf{r}_g^{\perp}, \mathsf{1})\), we appeal to the geometric relationship \(\langle a, b \rangle = \cos(\angle(a, b)) \|a\|\|b\|\) to write
\begin{align} \label{eq:tpr-cos-cos}
\beta^{+}_{g}
\frac{\|\mathsf{1}\|}{\|\mathsf{t}_g\|}
  =
  \frac{\cos{\phi'_g}}
       {\cos{\theta'_g}}
    =
  \frac{\cos{\phi_g}}
       {\cos{\theta_g}}
\end{align}
From
these observations, we need only bound the ratio between \(\cos(\phi_g)\) and
\(\cos(\theta_g)\) to bound \(\beta^{+}_{g}\). Relating these angles in the \((\mathsf{1}, \mathsf{t}_{g})\)-plane by
\(\phi_g = \xi_g - \theta_g\) where \(\xi_g \define \angle(\mathsf{t}_g, \mathsf{1})\),
we arrive at the following theorem:
\begin{theorem} \label{thm:geometric-beta}
The true positive rate \(\beta^{+}_{g}\) is bounded over the domain of covariate
shift \(\DD_{\text{cov}}[\mathbf{B}]\), which we define by the bound
\(\mathbf{D}(\R \parallel \S) \preceq \mathbf{B}\), and the invariance of
\(\Pr(Y {=} 1 \mid X {=} x, G {=} g)\) for all \(x, g\), as
\begin{equation}\label{eq:bound-TPR-cos}
  \frac{\cos(\phi_g^u)}{\cos(\xi_g - \phi_g^u)} \leq
\frac{\|\mathsf{1}\|}{\|\mathsf{t}_g\|}
  \beta^{+}_{g}(\pi, \R) \leq \frac{\cos (\phi_g^l)}{\cos(\xi_g - \phi_g^l)}
\end{equation}
with upper (\(\phi_g^u\)) and lower (\(\phi_g^l\)) bounds for \(\phi_g\) represented as
\begin{equation}
\phi_g^l \define \min_{\R \in \DD_\text{cov}[\mathbf{B}]} \phi_g; \quad
\phi_g^u \define \max_{\R \in \DD_\text{cov}[\mathbf{B}]} \phi_g
\end{equation}
\end{theorem}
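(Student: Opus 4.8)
The plan is to treat the bound as a one-dimensional optimization of \(\beta^{+}_{g}\) in the single angular parameter \(\phi_g\), exploiting the reduction already recorded in \cref{eq:tpr-cos-cos}. Under covariate shift the only quantity that varies with \(\R\) is \(\mathsf{r}_g[\R]\), and both inner products \(\langle \mathsf{r}_g, \mathsf{t}_g\rangle_g\) and \(\langle\mathsf{r}_g,\mathsf{1}\rangle_g\) depend only on the component of \(\mathsf{r}_g[\R]\) in the \((\mathsf{1},\mathsf{t}_g)\)-plane, i.e.\ on \(\mathsf{r}_g^{\perp}[\R]\). First I would record that \(\beta^{+}_{g}\) is a function of \(\phi_g\) alone: substituting \(\theta_g = \xi_g - \phi_g\) into \cref{eq:tpr-cos-cos} gives \(\frac{\|\mathsf{1}\|}{\|\mathsf{t}_g\|}\beta^{+}_{g} = f(\phi_g)\), where \(f(\phi) \define \frac{\cos\phi}{\cos(\xi_g - \phi)}\) and the magnitude \(\|\mathsf{r}_g^{\perp}\|\) cancels in the ratio.

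The heart of the argument is the monotonicity of \(f\). Differentiating and collapsing the numerator via the angle-addition identity yields
\[
f'(\phi) = \frac{-\sin\xi_g}{\cos^2(\xi_g - \phi)}.
\]
Since \(\mathsf{t}_g\) and \(\mathsf{1}\) are non-negative and the weight \(\mathsf{s}_g\) is positive, \(\langle \mathsf{t}_g, \mathsf{1}\rangle_g \ge 0\), so \(\xi_g \in [0, \tfrac{\pi}{2}]\) and \(\sin\xi_g \ge 0\); hence \(f' \le 0\) and \(f\) is non-increasing wherever the denominator is positive. That positivity holds because \(\langle \mathsf{r}_g,\mathsf{1}\rangle_g = \Pr_\R(Y{=}1\mid G{=}g) > 0\) forces \(\theta_g = \xi_g - \phi_g \in [0,\tfrac{\pi}{2})\), so \(\cos(\xi_g - \phi_g) > 0\).

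With monotonicity in hand, I would conclude by noting that every feasible target satisfies \(\phi_g \in [\phi_g^l, \phi_g^u]\) by the very definition of \(\phi_g^l, \phi_g^u\) as the minimum and maximum of \(\phi_g\) over \(\DD_\text{cov}[\mathbf{B}]\). Because \(f\) is non-increasing, the largest value of \(\frac{\|\mathsf{1}\|}{\|\mathsf{t}_g\|}\beta^{+}_{g} = f(\phi_g)\) occurs at the smallest angle \(\phi_g^l\) and the smallest value at the largest angle \(\phi_g^u\); substituting the definition of \(f\) gives exactly \cref{eq:bound-TPR-cos}. Observe that this requires only the containment \(\phi_g \in [\phi_g^l,\phi_g^u]\) and not surjectivity, so no convexity of the feasible region is needed for the bound to be valid.

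The main obstacle I anticipate is not the calculus but the bookkeeping that justifies the planar reduction: verifying that \(\mathsf{r}_g^{\perp}[\R]\) genuinely lands in the angular sector between \(\mathsf{1}\) and \(\mathsf{t}_g\), so that the signed decomposition \(\theta_g = \xi_g - \phi_g\) (rather than \(\phi_g + \xi_g\) or \(\phi_g - \xi_g\)) is the correct one, and confirming that the denominator stays positive across the whole feasible set. Both hinge on the sign constraints \(\langle\mathsf{r}_g,\mathsf{t}_g\rangle_g \ge 0\) and \(\langle\mathsf{r}_g,\mathsf{1}\rangle_g > 0\) arising from the probabilistic meaning of the inner products, which pin \(\phi_g\) and \(\theta_g\) to \([0,\tfrac{\pi}{2}]\) and make the orientation unambiguous.
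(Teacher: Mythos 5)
Your proposal is correct and takes essentially the same route as the paper's own proof: reduce to the \((\mathsf{1}, \mathsf{t}_g)\)-plane (the paper makes this explicit with a Gram--Schmidt construction verifying \(\langle \mathsf{r}_g^{\perp}, \mathsf{t}_g\rangle_g = \langle \mathsf{r}_g, \mathsf{t}_g\rangle_g\) and \(\langle \mathsf{r}_g^{\perp}, \mathsf{1}\rangle_g = \langle \mathsf{r}_g, \mathsf{1}\rangle_g\)), write \(\frac{\|\mathsf{1}\|}{\|\mathsf{t}_g\|}\beta^{+}_{g} = \frac{\cos\phi_g}{\cos(\xi_g - \phi_g)}\), differentiate to obtain \(-\frac{\sin\xi_g}{\cos^{2}(\xi_g - \phi_g)}\), and conclude by monotonicity that the extrema of \(\beta^{+}_{g}\) occur at the extremal angles \(\phi_g^l, \phi_g^u\). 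Your sign bookkeeping (\(\sin\xi_g \geq 0\) since \(\langle \mathsf{t}_g, \mathsf{1}\rangle_g \geq 0\), and positivity of the denominator from \(\langle \mathsf{r}_g, \mathsf{1}\rangle_g > 0\)) is a modest refinement over the paper, which simply asserts that the derivative is strictly negative and takes the orientation relation \(\theta_g = \xi_g - \phi_g\) as given from the text preceding the theorem.
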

\vspace{-0.1in}
We obtain a final bound on \(\disp_{\EOp}\) by substituting
\cref{eq:bound-TPR-cos} into \cref{eq:cov-eop-bound}.
We visualize the geometric bound on \(\beta^{+}_{g}\)
(\cref{thm:geometric-beta}) in \cref{fig:geometric}.
In \cref{sec:comparison-to-model}, we apply this bound to real-world credit score
data assuming the model of strategic manipulation given in
\cref{sec:strategic-response}. Although the result is not an easily interpretted
formula, it provides a demonstration of geometric reasoning applied to
statistical fairness guarantees.

Finally, we note that, in addition to the constraints considered above, each
vector \(\mathsf{r}_g\) is subject to the positivity condition,
\(\forall x \in \mathcal{X}, \mathsf{r}_{g}(x) \geq 0\).
The bound developed in this section, however, does not benefit from this additional
constraint; we leave this to potential future work.

\begin{figure}[H]
  \centering
    \subfigure[]{\includegraphics[width=0.3\textwidth]{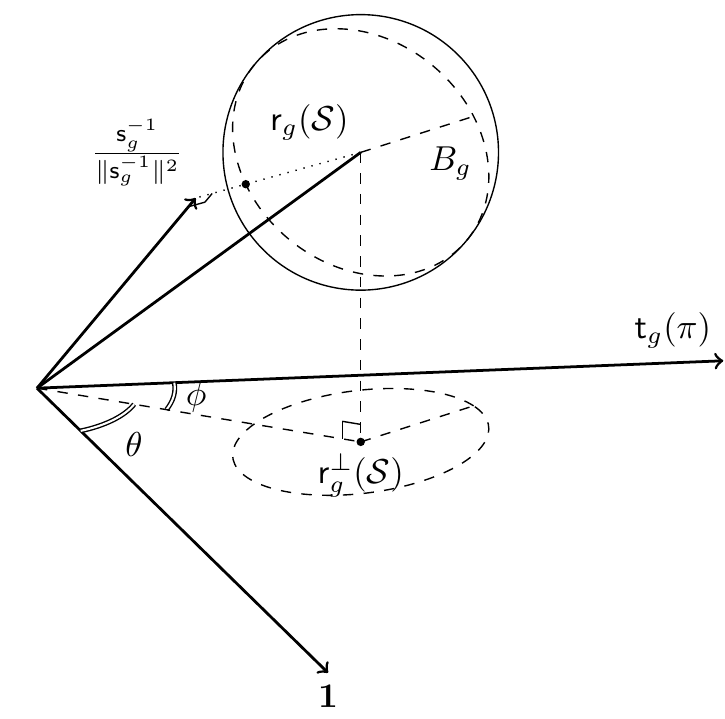}}
    \subfigure[]{\includegraphics[width=0.3\textwidth]{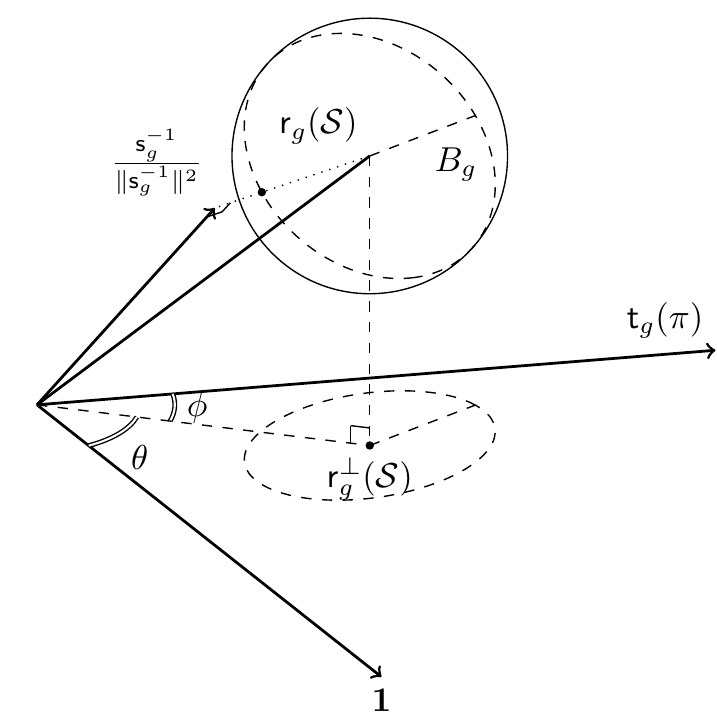}}
    \subfigure[]{\includegraphics[width=0.3\textwidth]{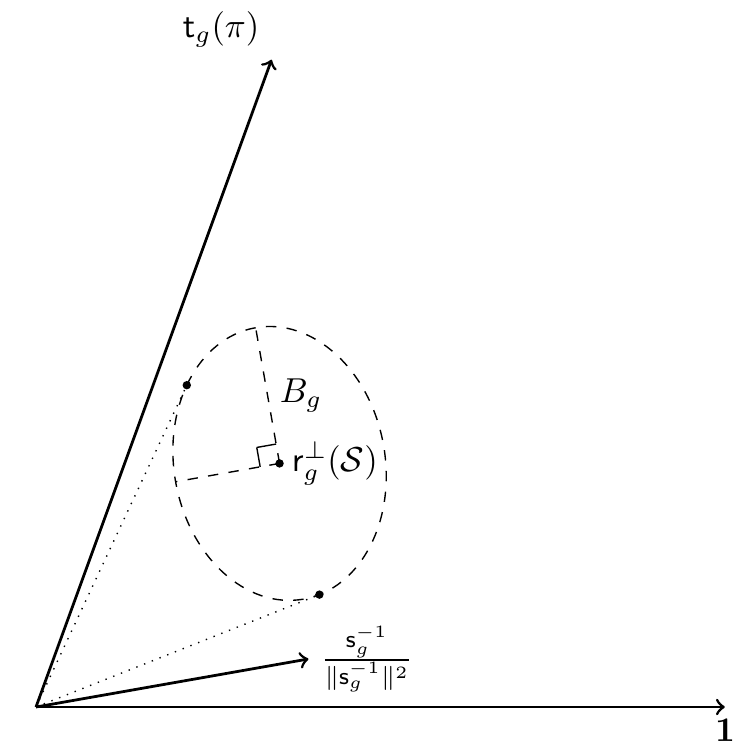}}
  \caption{A geometric bound in an infinite-dimensional vector space (\ie, a Hilbert space), represented with a stereoscopic (cross-eye) view in three dimensions (to provide intuition) and an examination of the \((\mathsf{t}_{g}, \mathsf{1})\)-plane. The extreme values of \(\beta^{+}_{g}\) correspond to
the extremal angles of \(\phi\) and \(\theta\). In this
figure, the vector displayed parallel to \(\mathsf{s}_g^{-1}\) from the origin
terminates on the hyperplane of normalized distributions.}
 \label{fig:geometric}
\end{figure}

\section{Empirical Evaluations of the Bounds}\label{sec:empirical-evals}

\subsection{Comparisons to Dynamic Models of Distribution Shift} \label{sec:comparison-to-model}

\begin{figure}[H]
  \centering
    \includegraphics[width=0.85\textwidth]{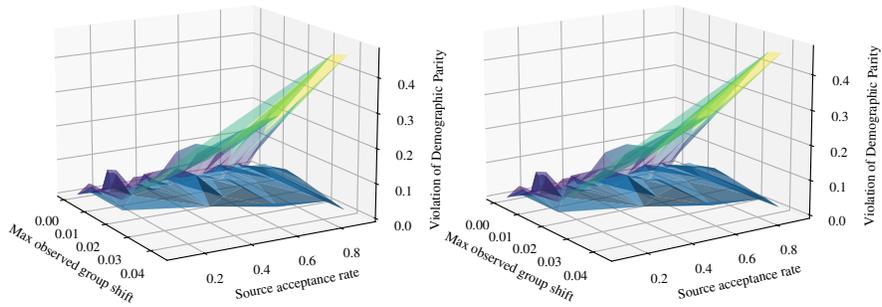}
  \caption{A stereoscopic (cross-eye view) comparison between the bound of \cref{sec:covariateDP} (gradated) and simulated results for the model of \cref{sec:strategic-response} (blue) in response to a \DP-fair classifier with different initial group-independent acceptance rates. The \(x\)-axis represents the maximum shift \(D_g\) over all groups \(g\) in response to the classifier.}
 \label{fig:DP-covariate-experimental}
\end{figure}
\vspace{-1em}

\begin{figure}[H]
  \centering
    \includegraphics[width=0.85\textwidth]{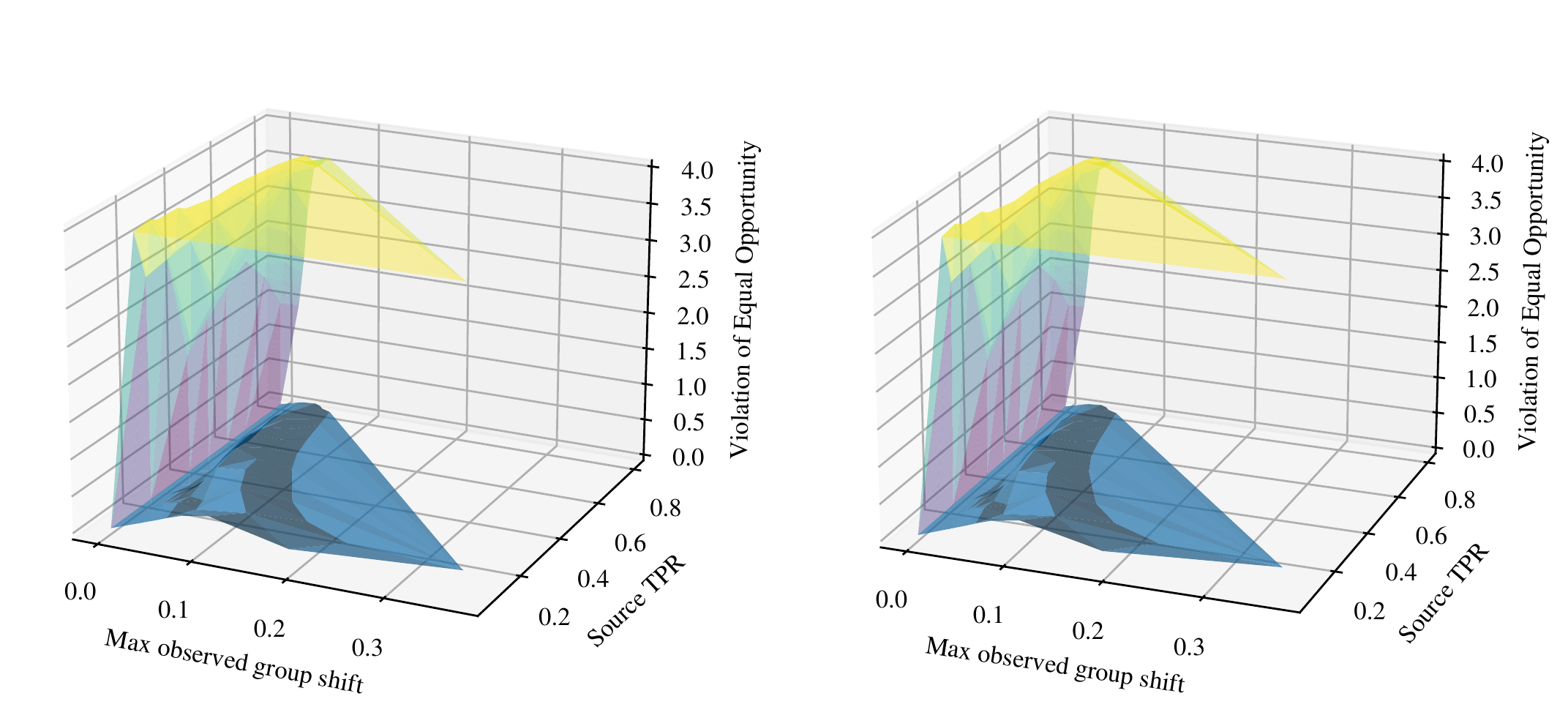}
  \caption{A stereoscopic (cross-eye view) comparison between the theoretical bound of \cref{sec:covariateEO} (gradated) and simulated results for the model of \cref{sec:strategic-response} (blue) in response to a \EOp-fair classifier with different initial group-independent true positive rates (TPR). The \(x\)-axis represents the maximum shift \(D_g\) over all groups \(g\) in response to the classifier. As \cref{cor:max-cov-EO} limits the maximum possible value of \EO violation, we include this limit as part of the bound.}
 \label{fig:geometric-experimental}
\end{figure}
\vspace{-1em}

\begin{figure}[H]
  \centering
      \includegraphics[width=0.85\textwidth]{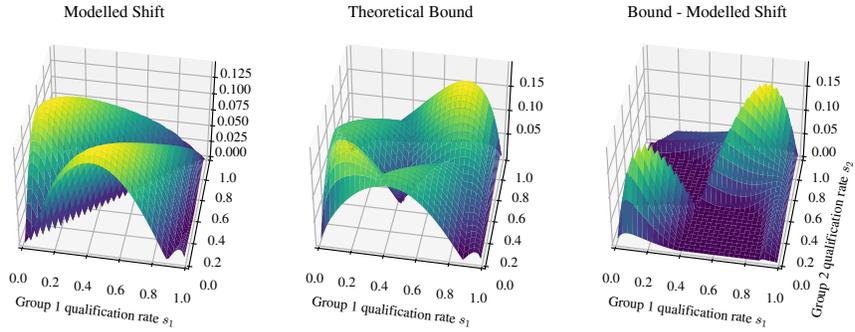}
\caption{A policy satisfying \DP is subject to distribution shift prescribed by replicator dynamics (\cref{sec:replicator}). Realized disparity increases (blue) are compared to the theoretical bound (\cref{thm:dp-label}, gradated), which is tight when group have dissimilar qualification rates.}
 \label{fig:replicator-transfer-large}
\end{figure}

\subsection{Comparisons to Real-World Data}\label{app:real-world-data}

We provide additional graphics comparing bounds on demographic parity or equal opportunity to real-world distribution shifts. \cref{fig:covariate_shift_CA_A} compares the covariate shift bound of \cref{thm:dp-covariate} to the violation of demographic parity for hypothetical policies trained on one US state and deployed in another. \cref{fig:label_shift_temporal_CA_A} compares the label shift bound of \cref{thm:dp-label}
to the violation of demographic parity for hypothetical policies trained for a US state in 2014 and deployed in 2018.
\cref{fig:covariate_EO_empirical} compares the covariate shift bound of \cref{thm:cov-EO-superbound} with \cref{thm:geometric-beta} to the violation of equal opportunity for hypothetical policies trained on one US state and deployed in another.

\begin{figure}[H]
    \centering
    \subfigure[CA $\longrightarrow$ AZ]{\includegraphics[width=0.23\linewidth]{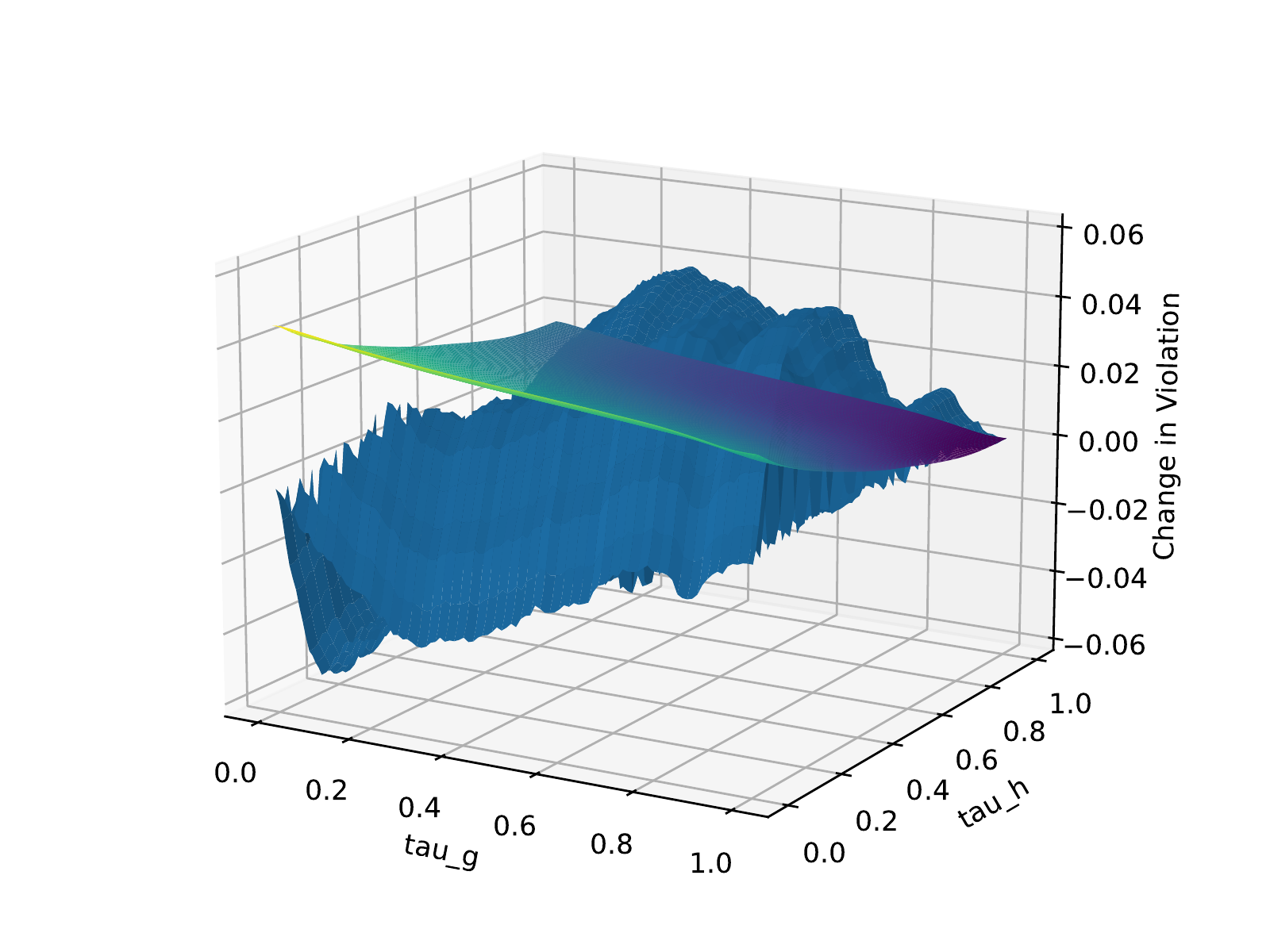}}
    \subfigure[CA $\longrightarrow$ CO]{\includegraphics[width=0.23\linewidth]{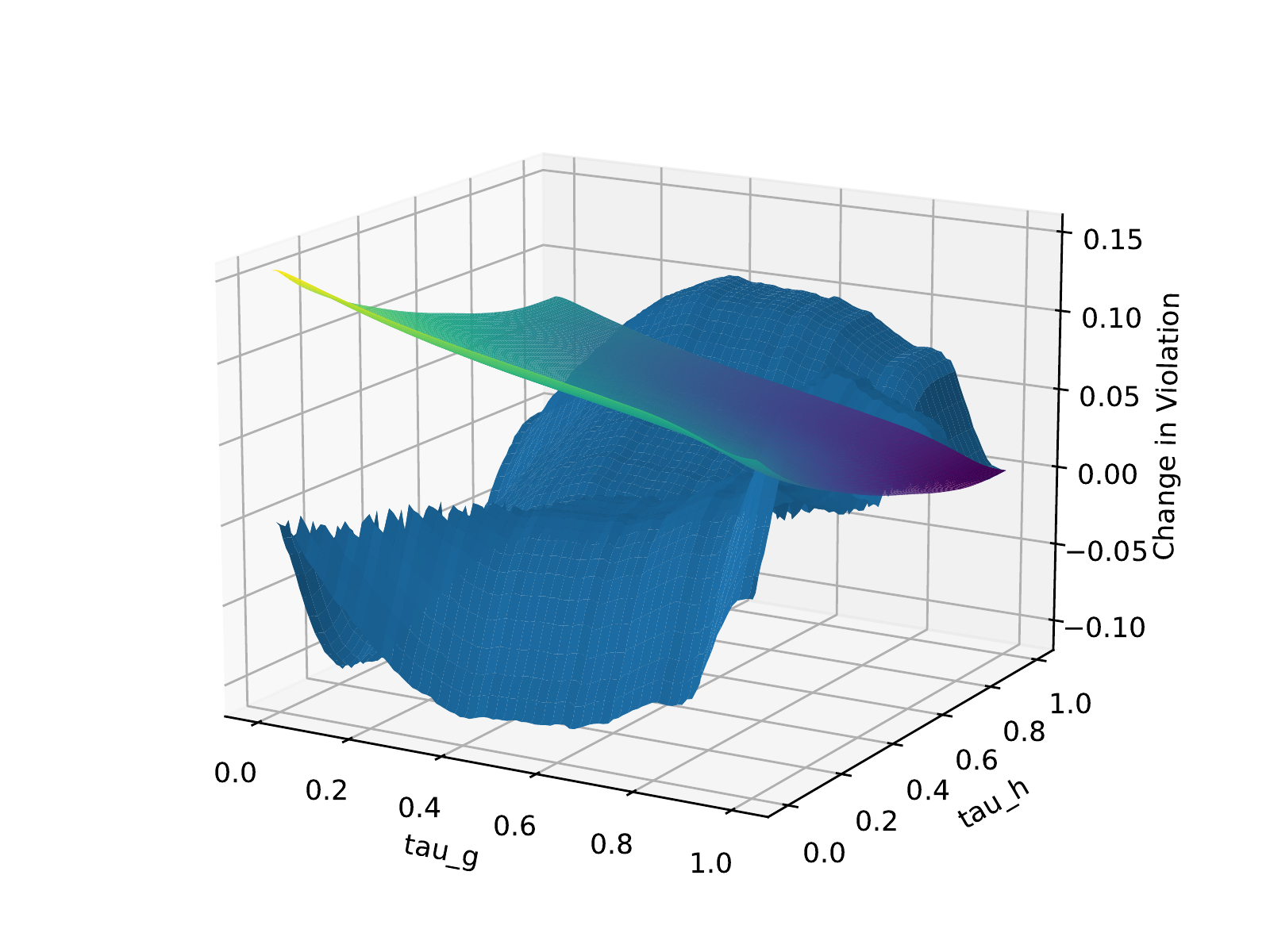}}
    \subfigure[CA $\longrightarrow$ FL]{\includegraphics[width=0.23\linewidth]{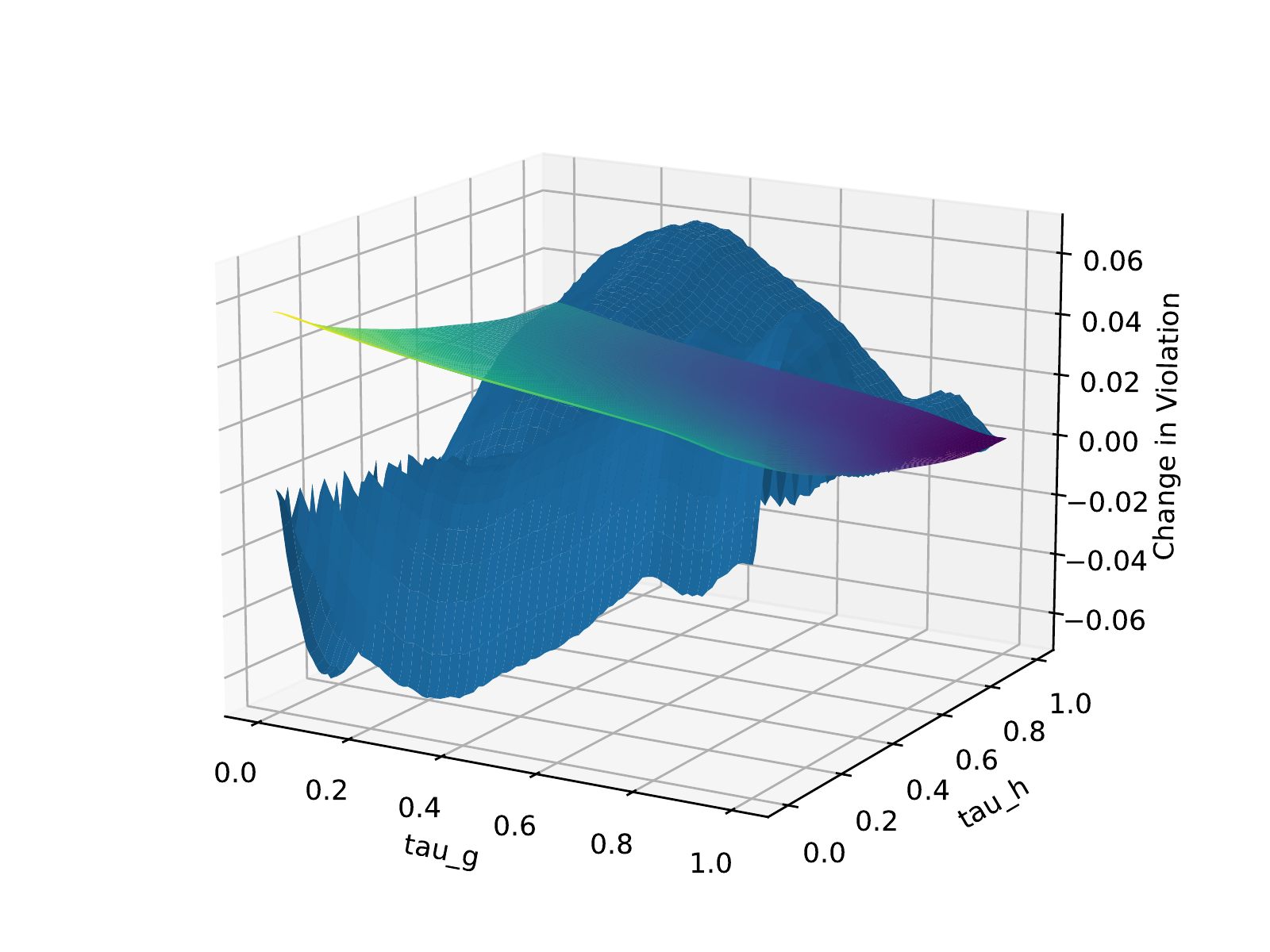}}
    \subfigure[CA $\longrightarrow$ MD]{\includegraphics[width=0.23\linewidth]{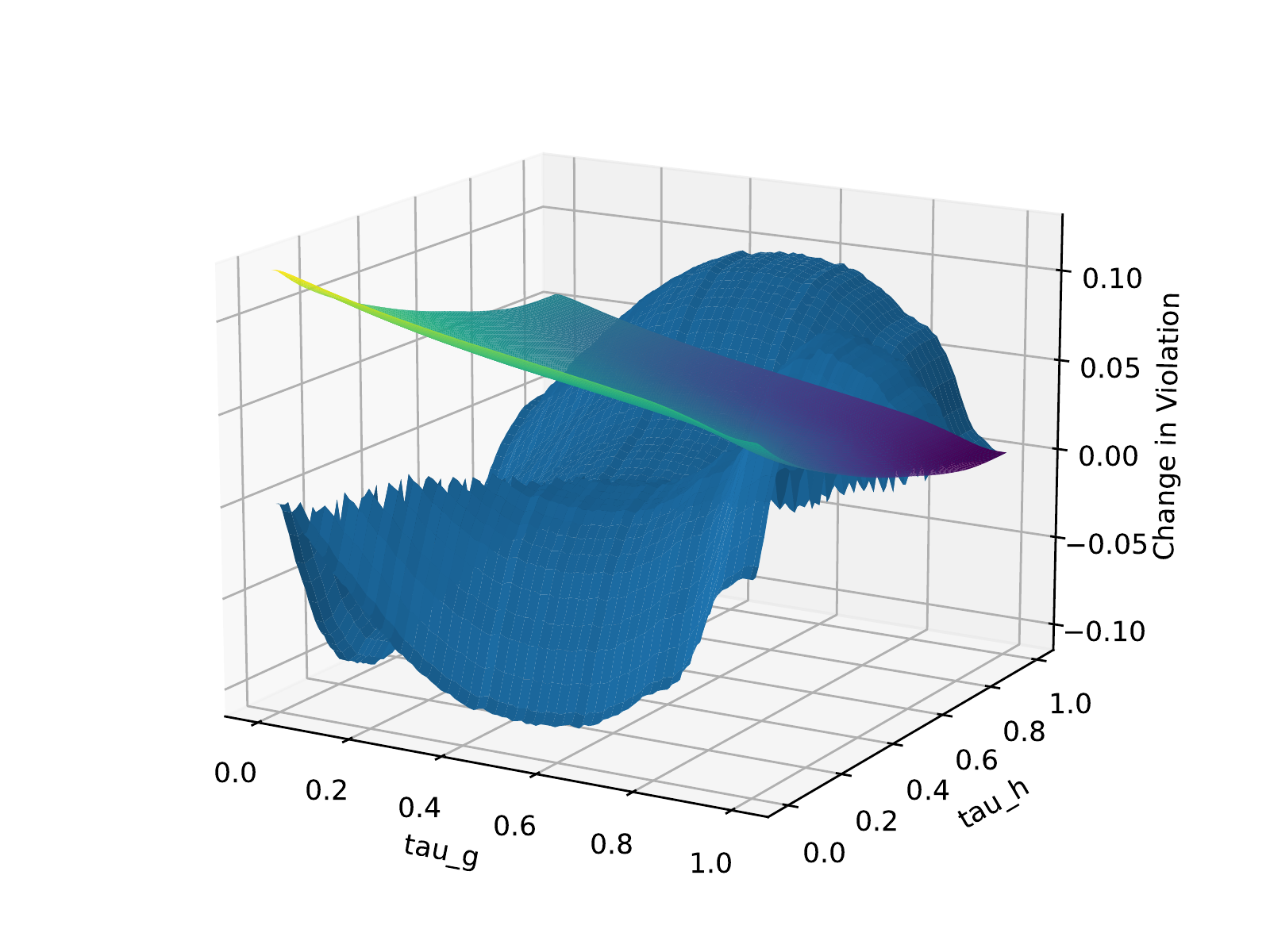}}
    \subfigure[CA $\longrightarrow$ OR]{\includegraphics[width=0.23\linewidth]{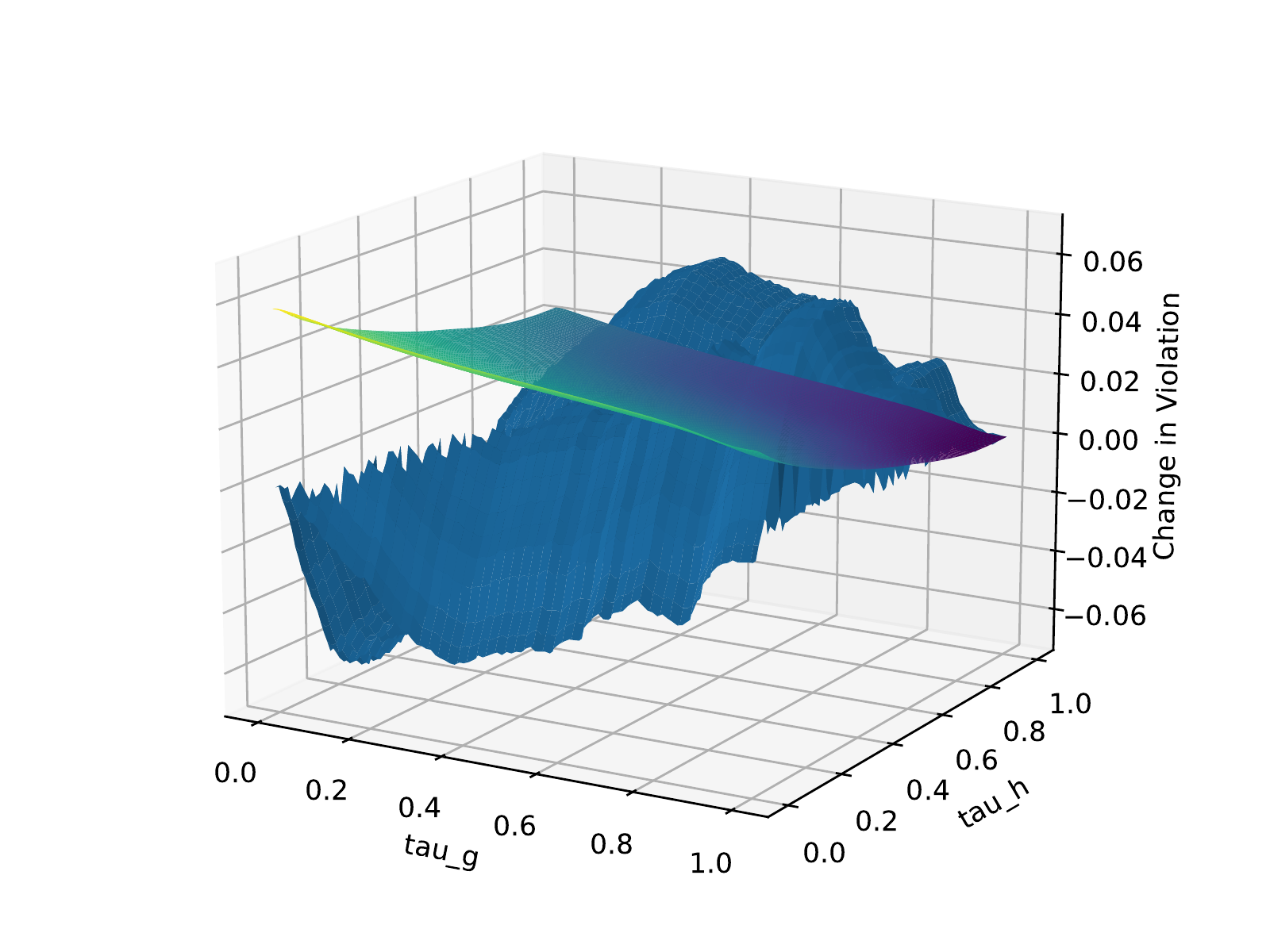}}
    \subfigure[CA $\longrightarrow$ UT]{\includegraphics[width=0.23\linewidth]{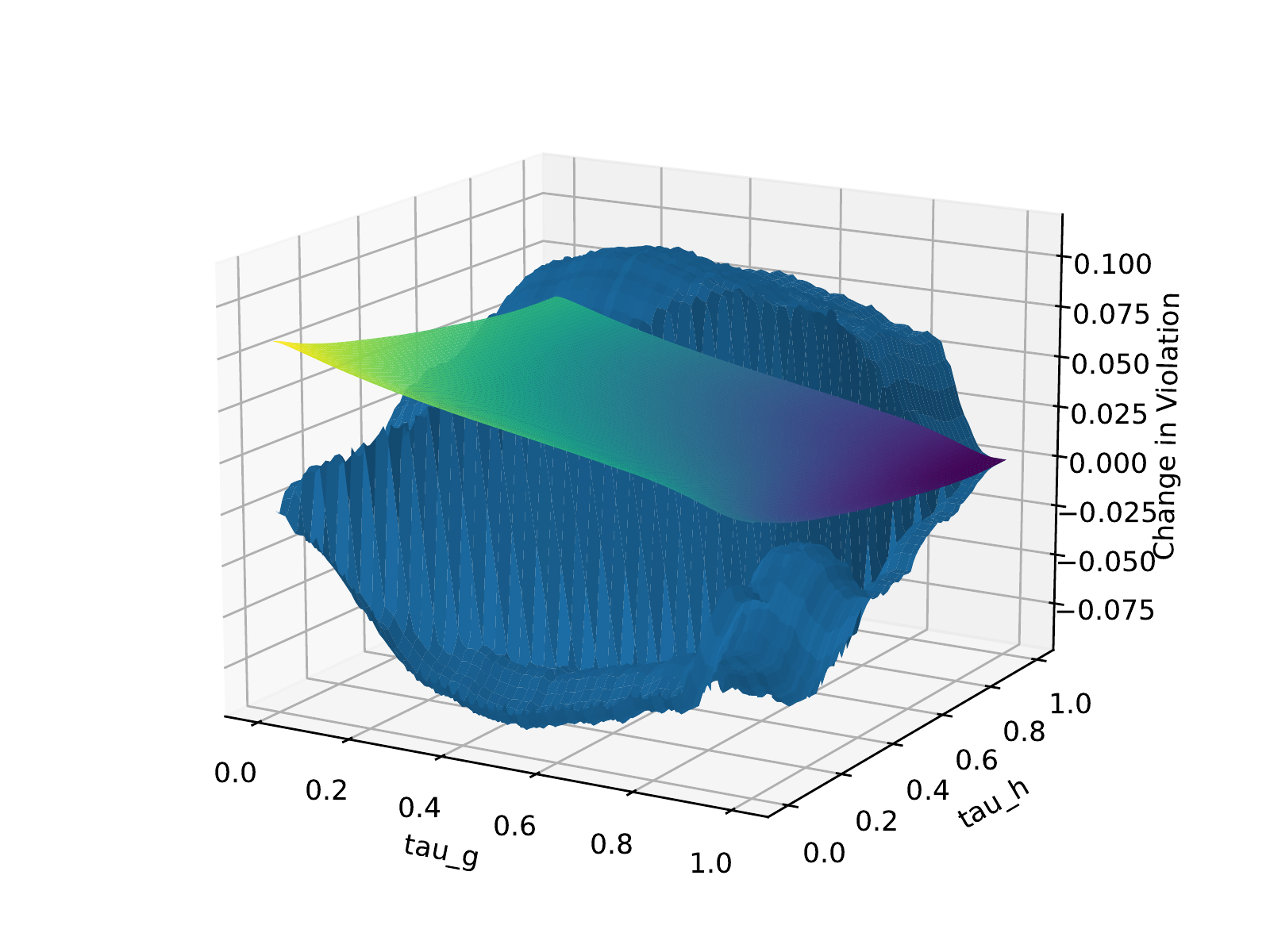}}
    \subfigure[CA $\longrightarrow$ VT]{\includegraphics[width=0.23\linewidth]{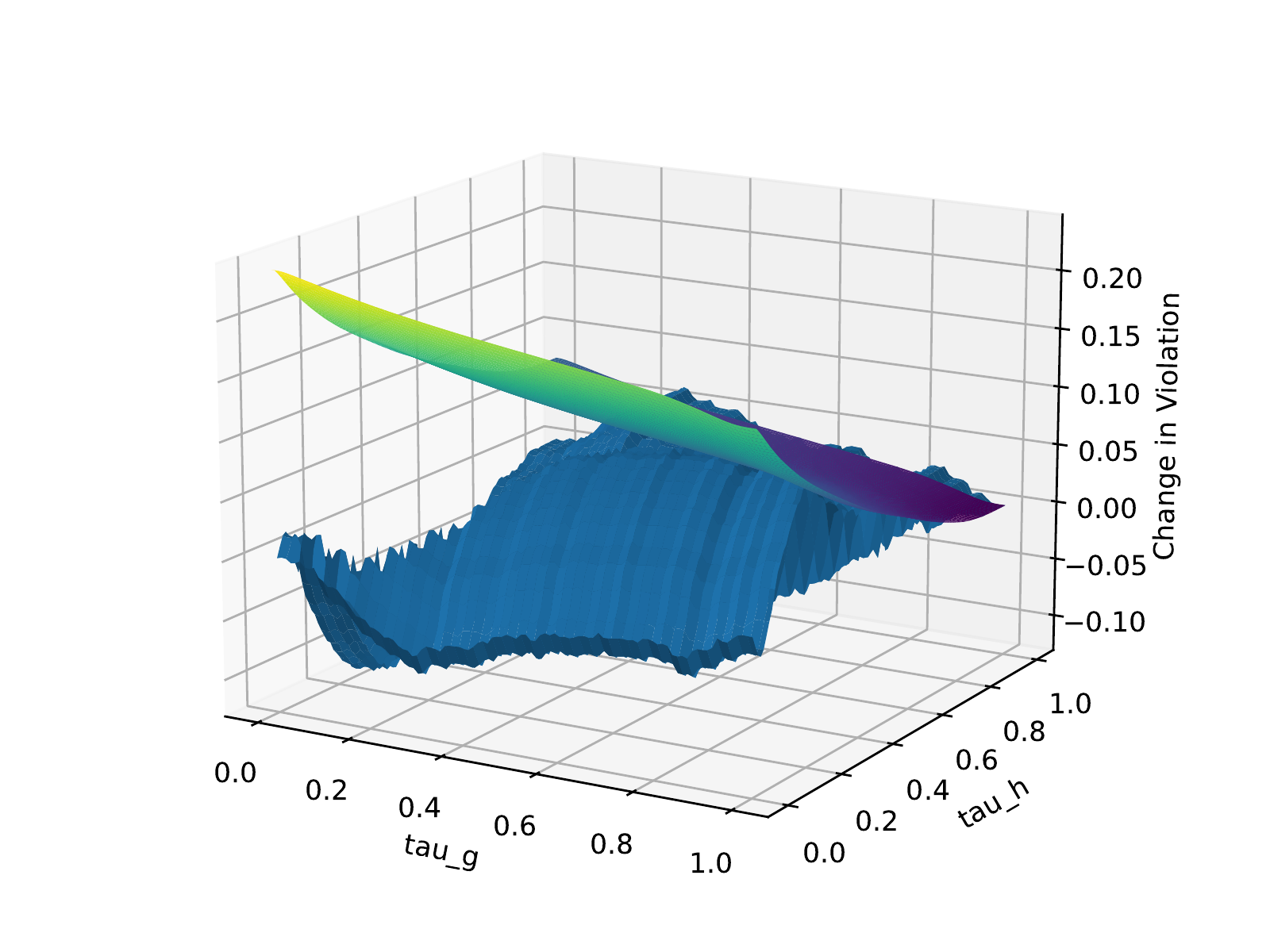}}
    \subfigure[CA $\longrightarrow$ WI]{\includegraphics[width=0.23\linewidth]{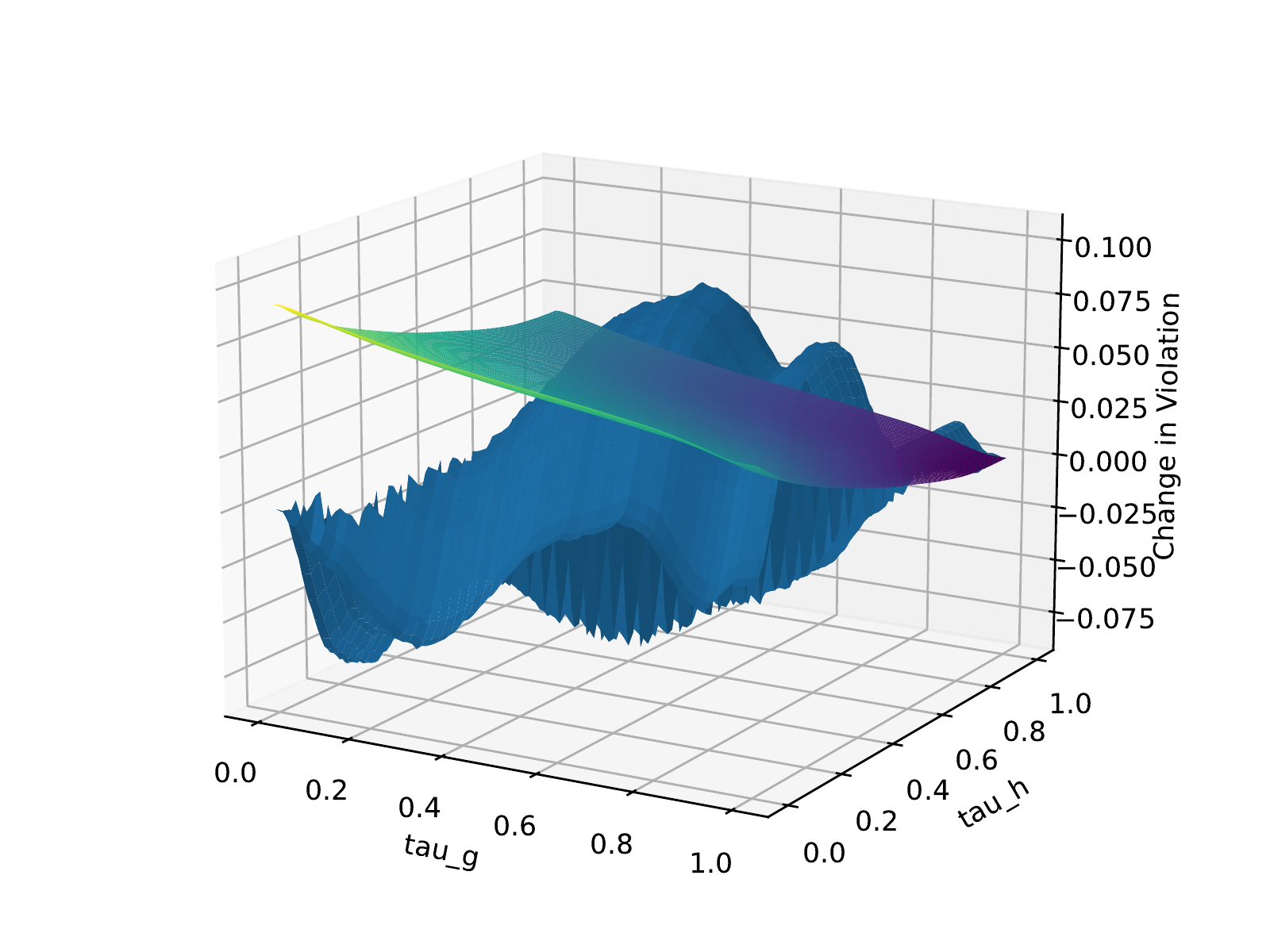}}
    \caption{Change in violation of demographic parity for hypothetical policies trained on one US state's data and reused for another state (blue) compared to covariate-shift bounds (\cref{thm:dp-covariate}, gradated). The $x$-axis and $y$-axis represent the thresholds $\tau_g$ and $\tau_h$, respectively.}
    \label{fig:covariate_shift_CA_A}
\end{figure}

\begin{figure}[H]
    \centering
    \subfigure[Arizona (AZ)]{\includegraphics[width=0.23\linewidth]{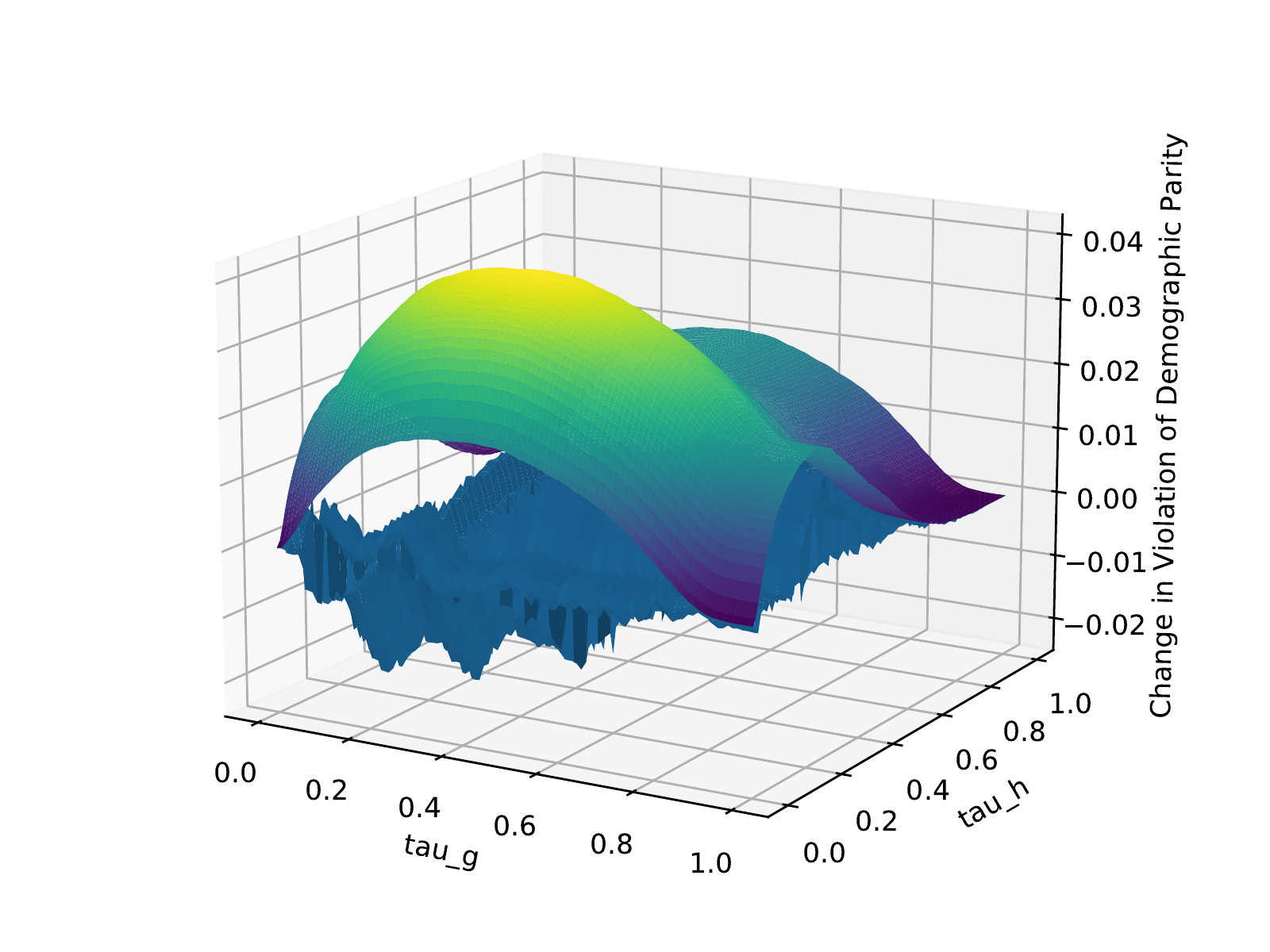}}
    \subfigure[Colorado (CO)]{\includegraphics[width=0.23\linewidth]{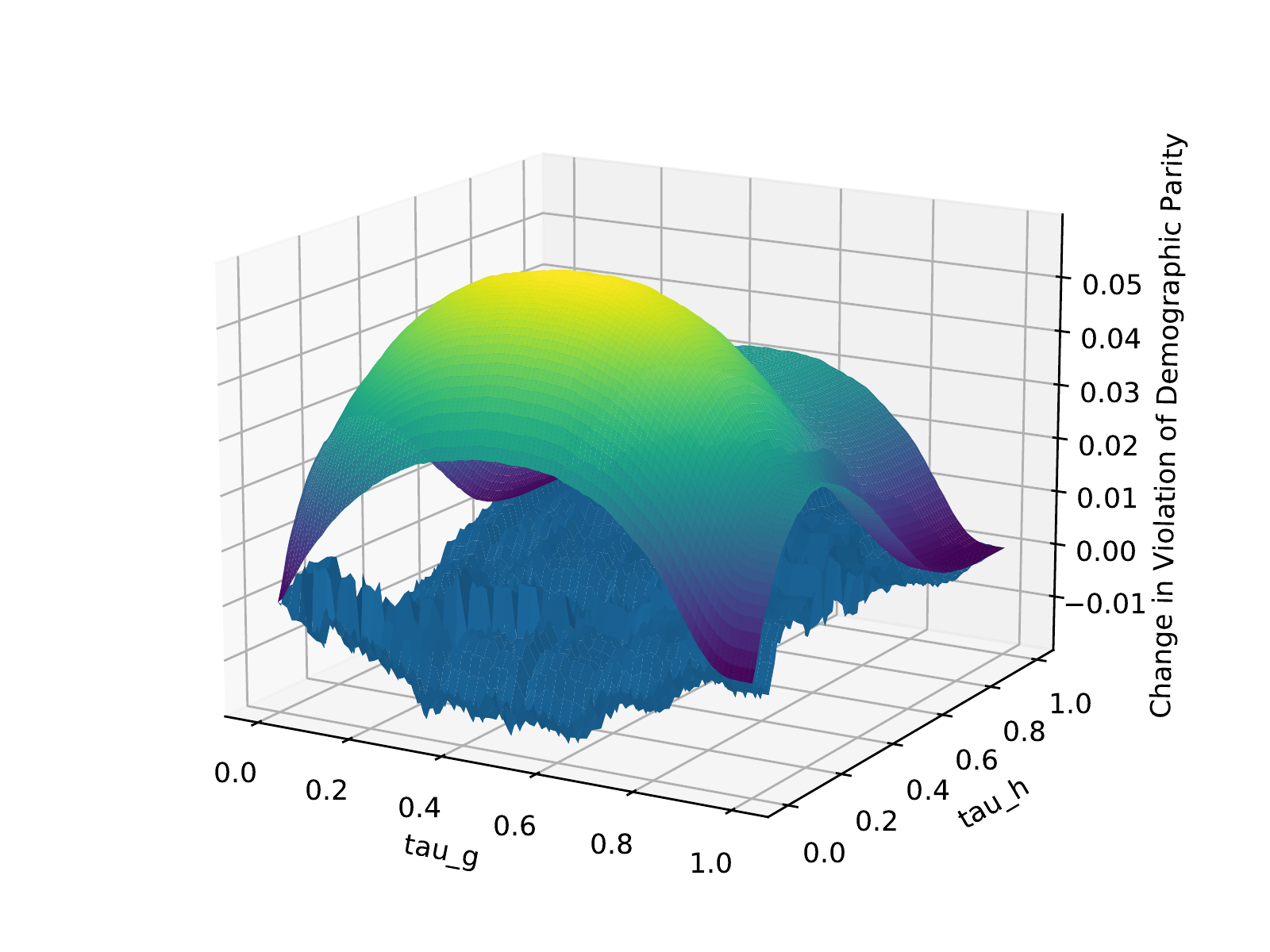}}
    \subfigure[Florida (FL)]{\includegraphics[width=0.23\linewidth]{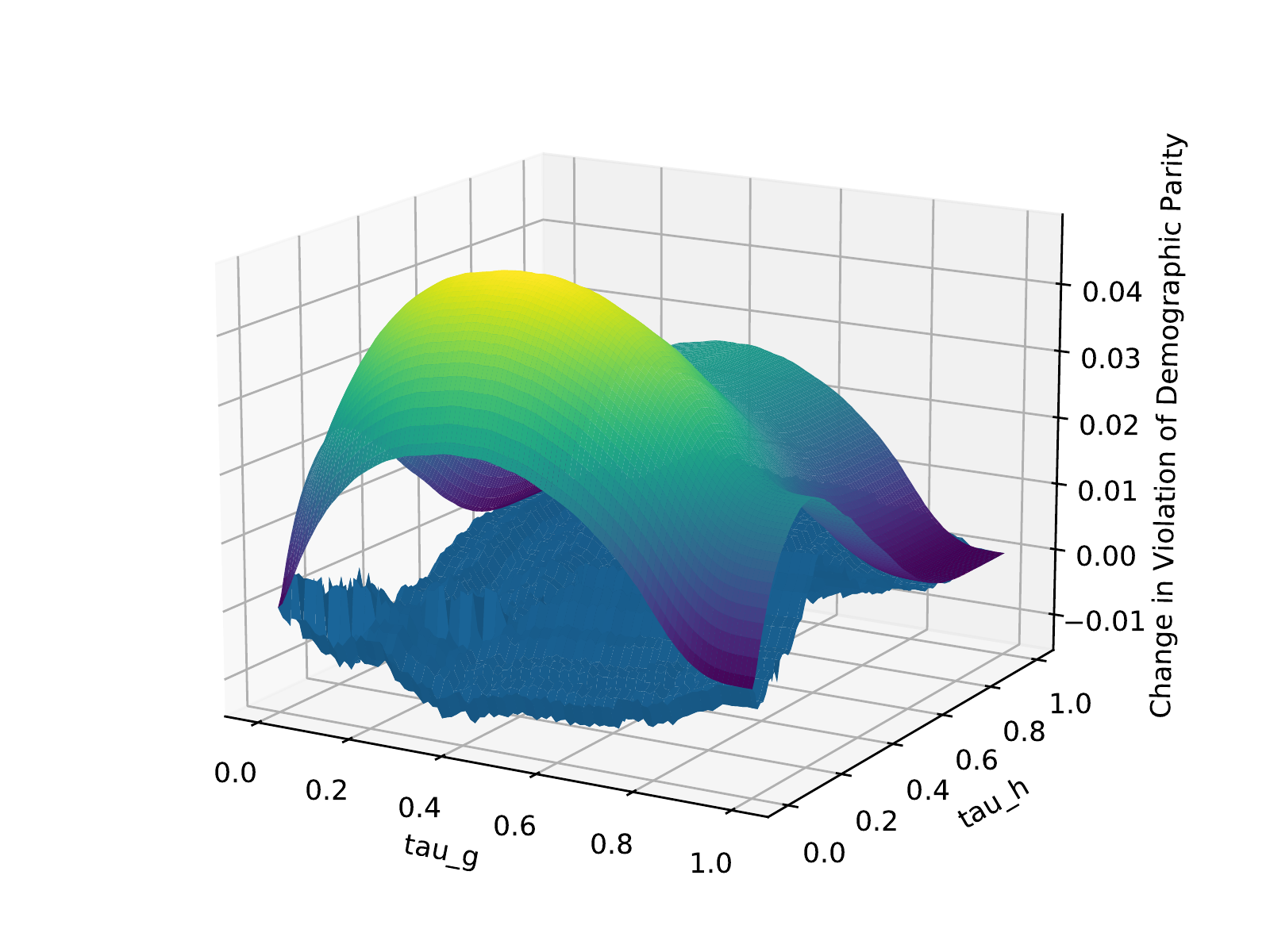}}
    \subfigure[Maryland (MD)]{\includegraphics[width=0.23\linewidth]{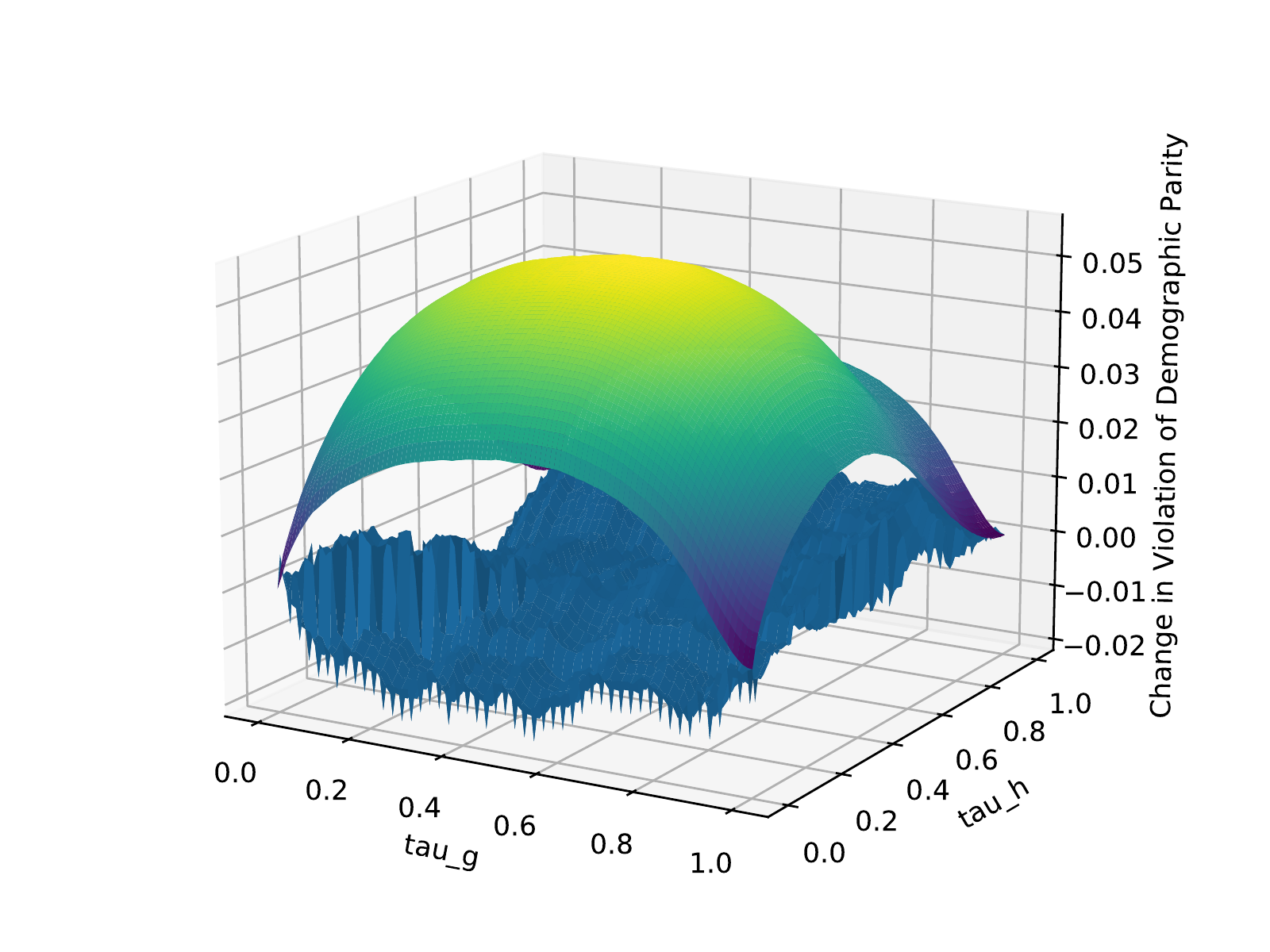}}
    \subfigure[Oregon (OR)]{\includegraphics[width=0.23\linewidth]{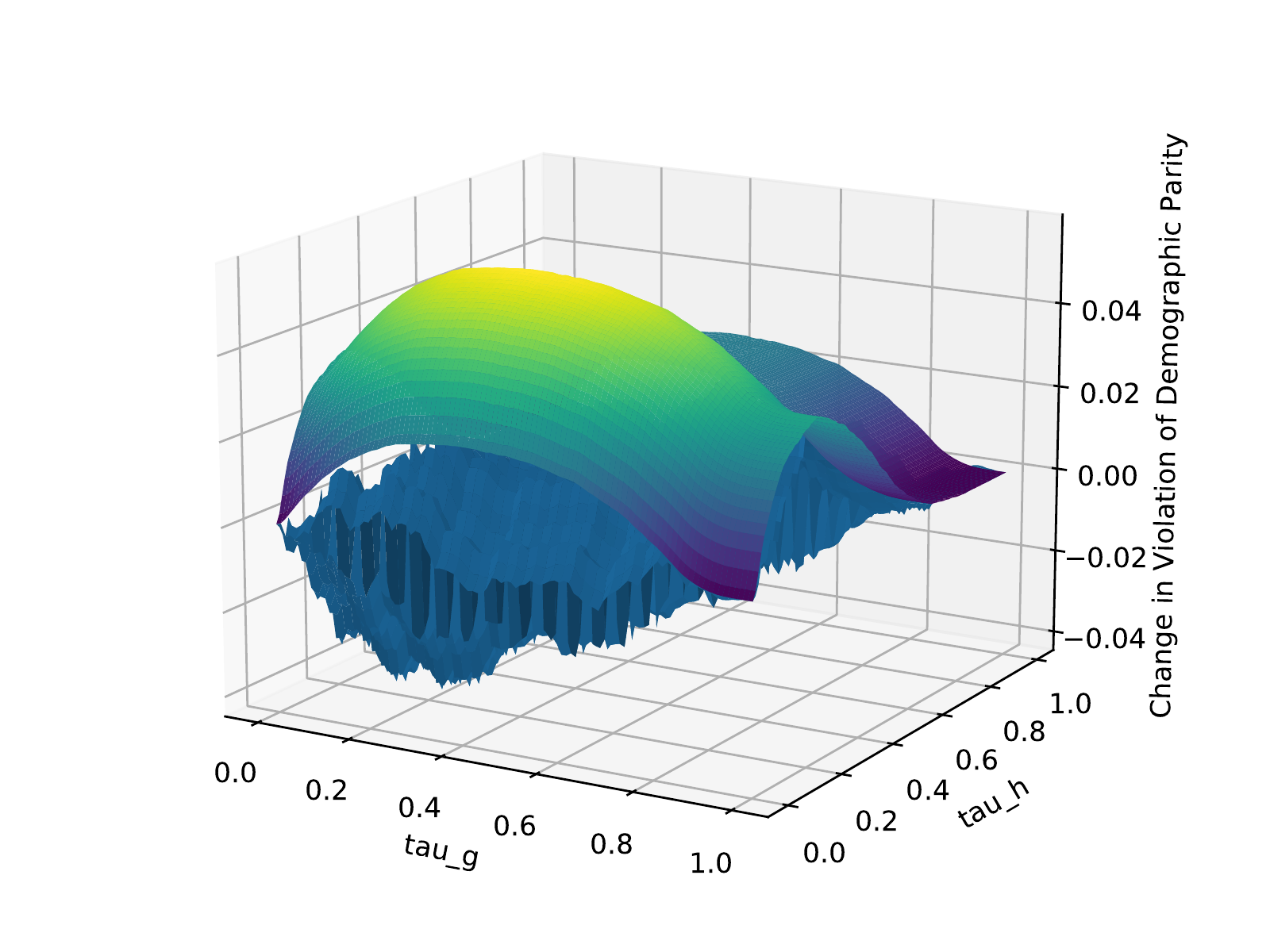}}
    \subfigure[Utah (UT)]{\includegraphics[width=0.23\linewidth]{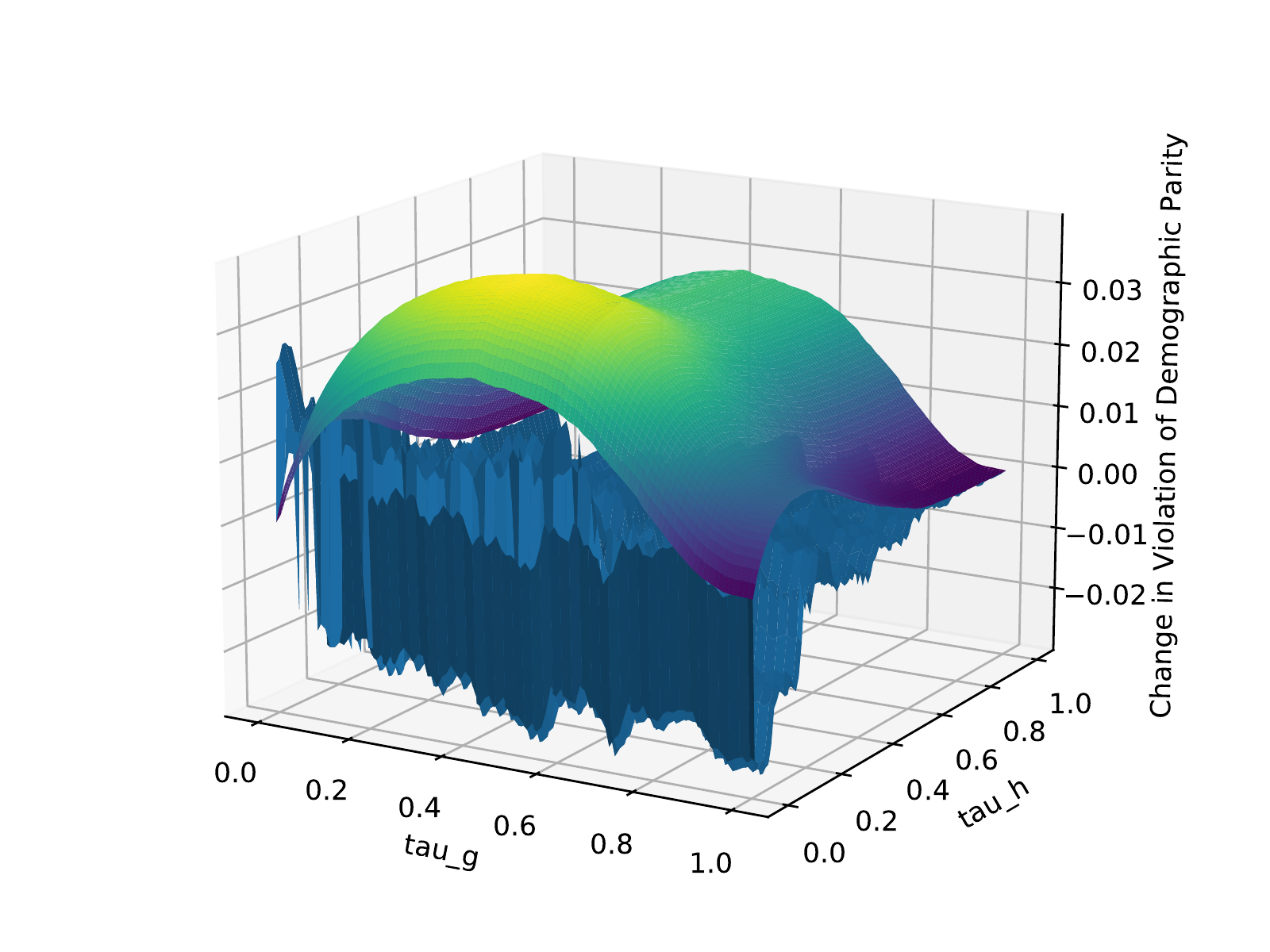}}
    \subfigure[Vermont (VT)]{\includegraphics[width=0.23\linewidth]{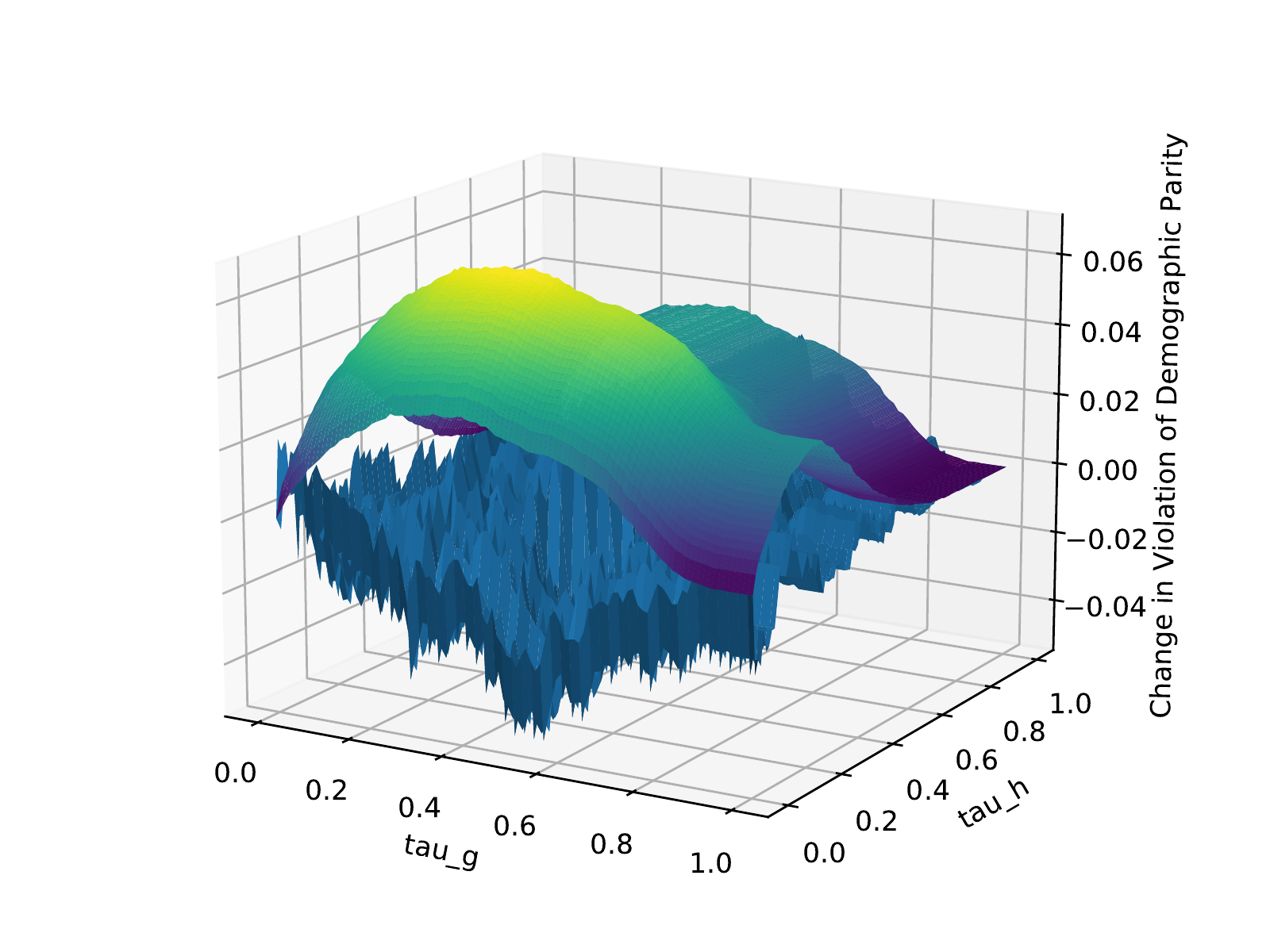}}
    \subfigure[Wisconsin (WI)]{\includegraphics[width=0.23\linewidth]{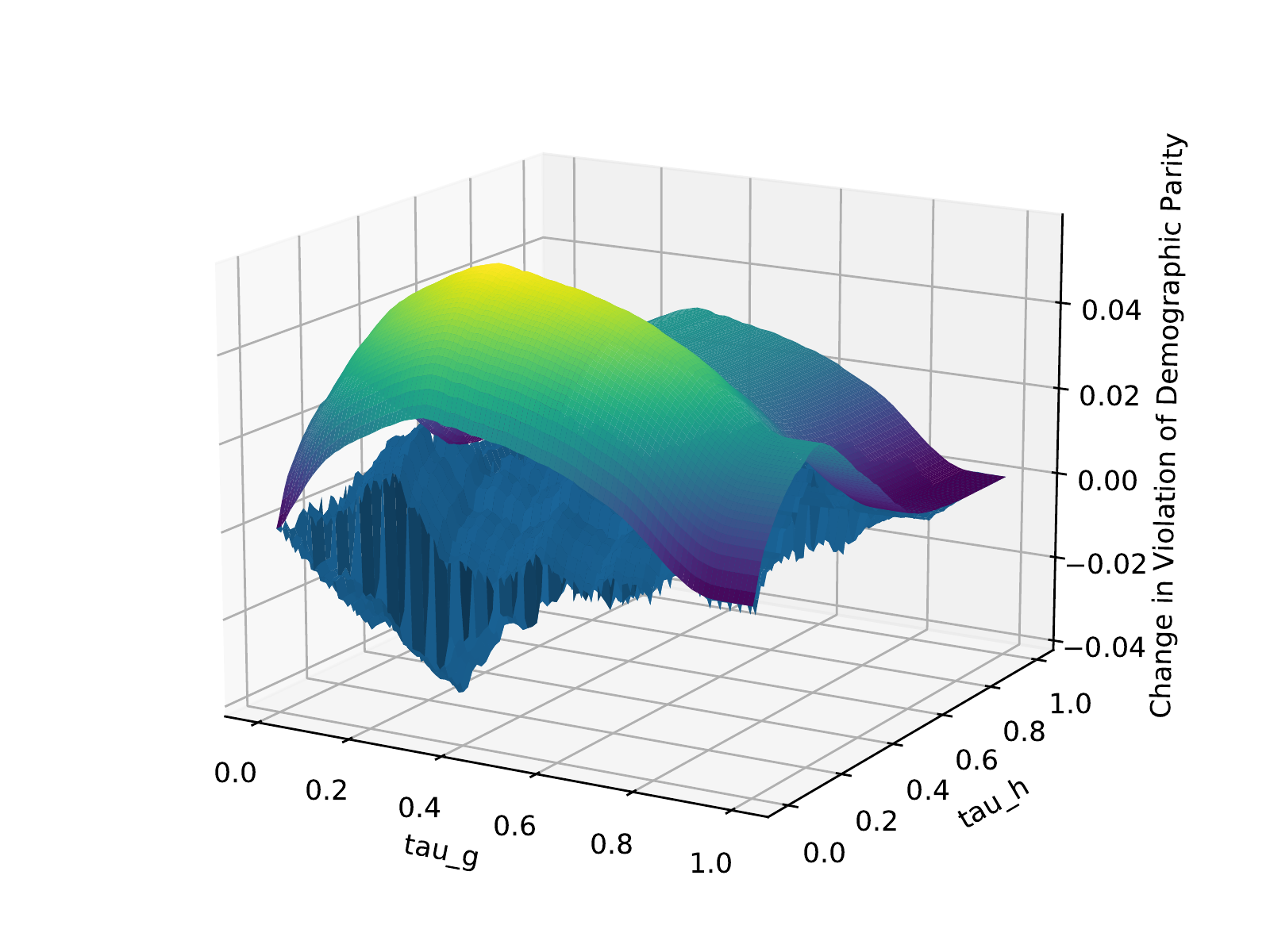}}
    \caption{Change in violation of demographic parity for hypothetical policies trained on 2014 data and reused for 2018 (blue) compared to label-shift bounds (\cref{thm:dp-label}, gradated). The $x$-axis and $y$-axis represent the thresholds $\tau_g$ and $\tau_h$, respectively.}
    \label{fig:label_shift_temporal_CA_A}
\end{figure}

\begin{figure}[H]
    \centering
    \subfigure[CA $\longrightarrow$ NV]{\includegraphics[width=0.23\linewidth]{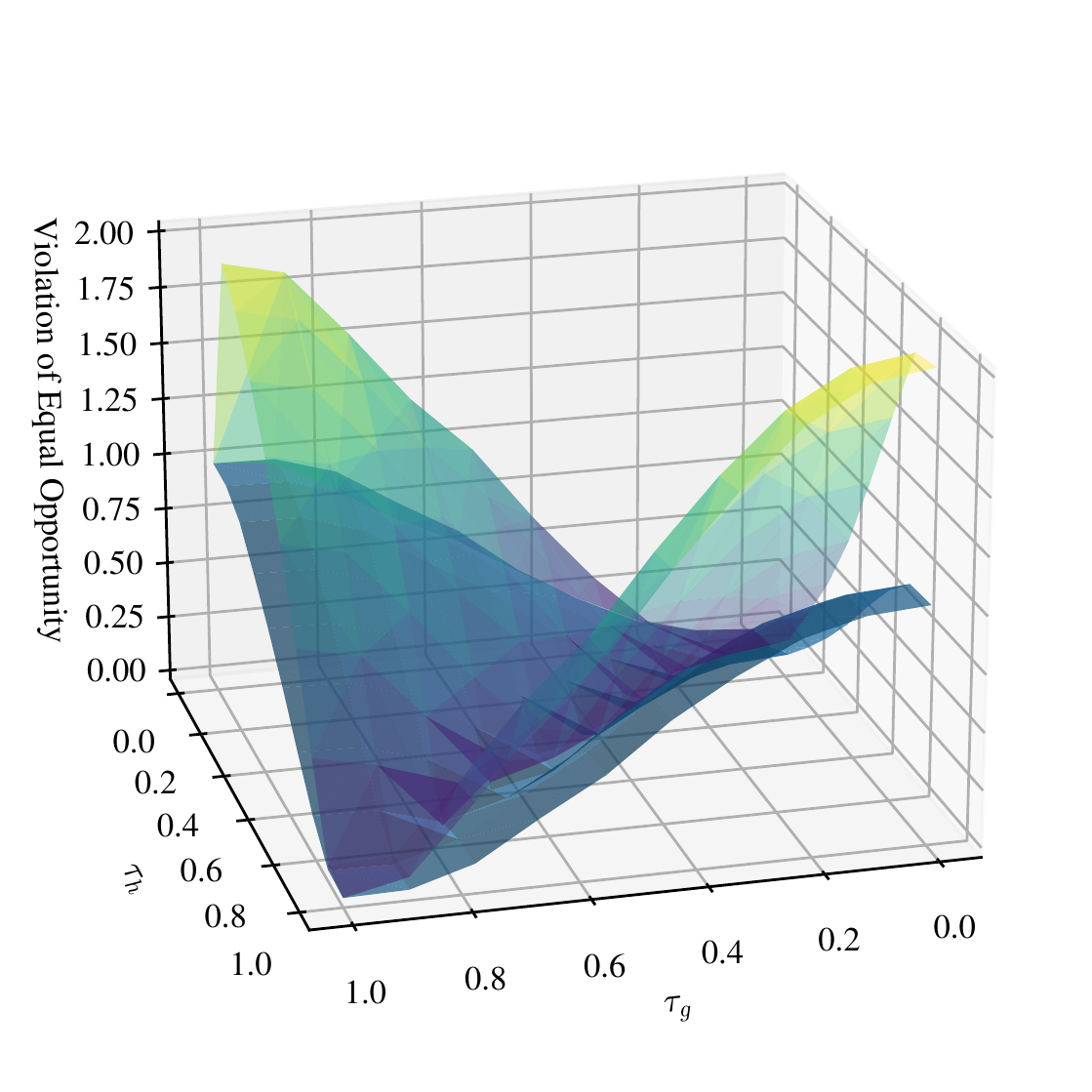}}
    \subfigure[CA $\longrightarrow$ IL]{\includegraphics[width=0.23\linewidth]{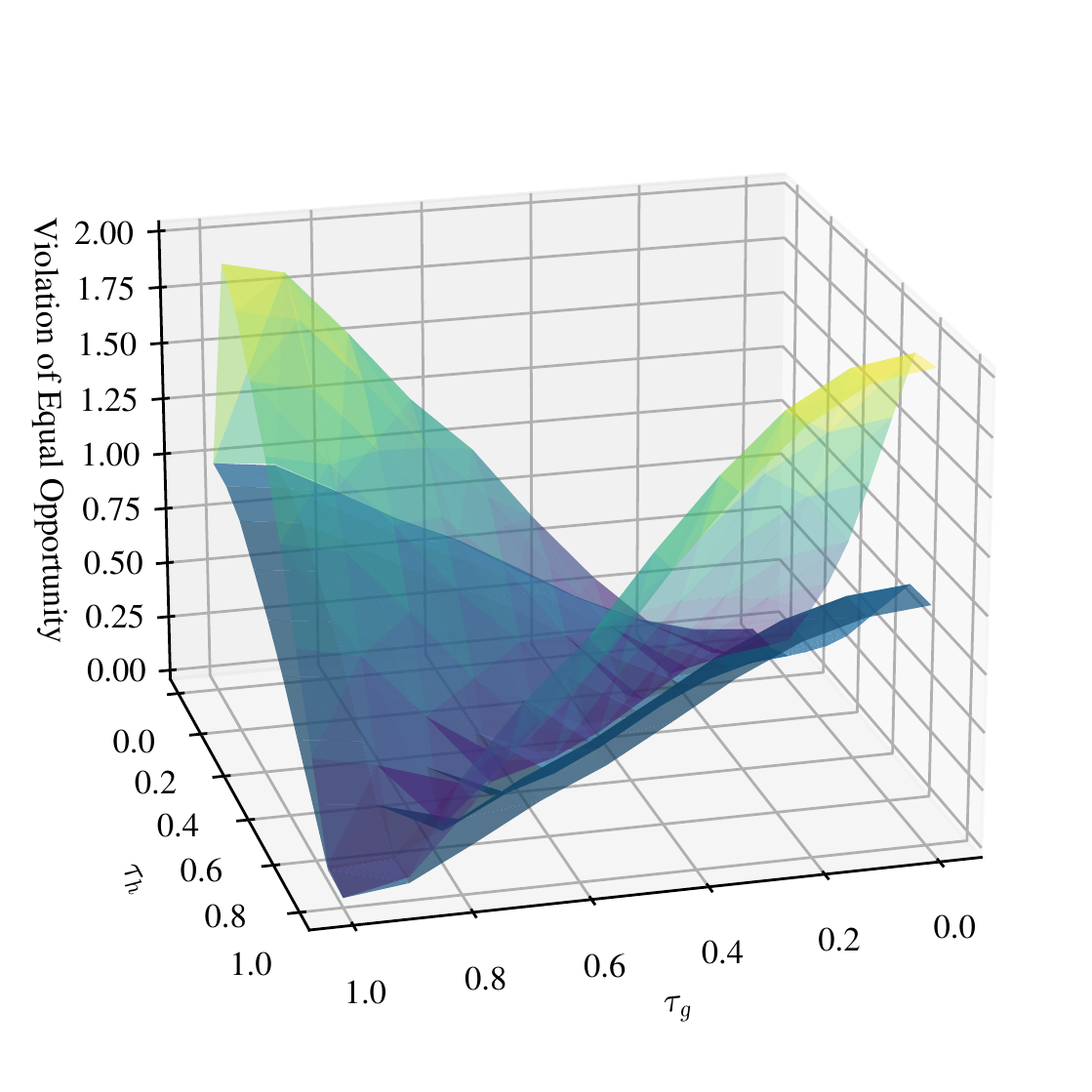}}
    \subfigure[NV $\longrightarrow$ IL]{\includegraphics[width=0.23\linewidth]{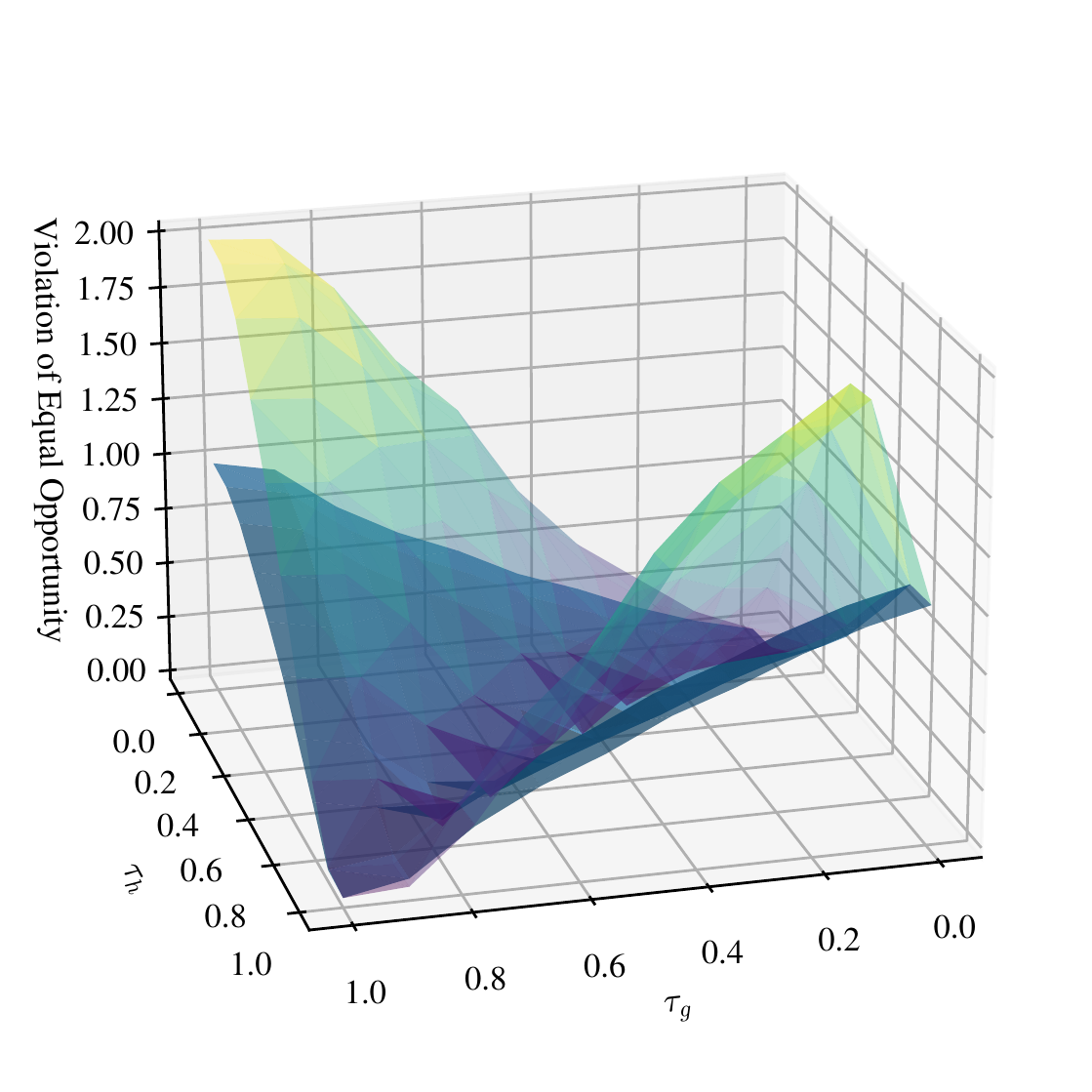}}
    \caption{Change in violation of equal opportunity for hypothetical policies trained on one US state's data and reused for another state (blue) compared to covariate-shift bounds (\cref{thm:cov-EO-superbound,,thm:geometric-beta}, gradated). The $x$-axis and $y$-axis represent the thresholds $\tau_g$ and $\tau_h$, respectively.}
\label{fig:covariate_EO_empirical} 
\end{figure}

\section{Omitted Proofs} \label{app:proofs}


\textbf{Proof of \cref{lem:v-of-B0-is-R}}:

\emph{Statement}:
  For all \(\pi\), \(\disp\), and \(\mathbf{D}\), when \(\mathbf{B} = 0\),
\(\disp(\pi, \S) = \disp(\pi, \R)\).

\begin{proof}
  By the definitions of group-vectorized shift (\cref{def:vector-divergence}) and divergence (\cref{def:divergence})
  together with the bounded distribution shift assumption (\cref{asm:budget}), we note
  \begin{equation}
        \mathbf{B} = 0 \implies \mathbf{D}(\R \parallel \S) = 0
    \end{equation}
and
\begin{equation}
  D_{g}(\R \parallel \S) = 0 \implies
\Pr_{\S}(X, Y \mid G {=} g) =
\Pr_{\R}(X, Y \mid G {=} g)
\end{equation}
Combining these implications and invoking the independence of
\(\hat{Y} \sim \pi\) and \(Y\) conditioned on \(X\) and \(G\)
(\cref{eq:pr-notation}), it follows that
  \begin{equation}
    \mathbf{B} = 0 \implies    \forall g, \quad
\Pr_{\pi, \S}(X, Y, \hat{Y} \mid G {=} g) =
\Pr_{\pi, \R}(X, Y, \hat{Y} \mid G {=} g)
\end{equation}
Consulting the definition of disparity (\cref{def:disparity}), it follows that \(\disp(\pi, \S)\) and
\(\disp(\pi, \R)\) are equal when \(\mathbf{B} =0\).
\end{proof}

\textbf{Proof of \cref{thm:lipshitz}}:

\emph{Statement}:
If there exists an \(\mathbf{L}\) such that
\(
   \nabla_{\mathbf{b}} v(\disp, \mathbf{D}, \pi, \S, \mathbf{b}) \preceq \mathbf{L}
\),
everywhere along some curve from \(0\) to \(\mathbf{B}\), then
\begin{equation}
  \disp(\pi, \R) \leq \disp(\pi, \S) + \mathbf{L} \cdot \mathbf{B}
\end{equation}

\begin{proof}
We reiterate that \(v(\disp, \mathbf{D}, \pi, \S, \mathbf{b})\) defines a
scalar field over the non-negative cone \(\mathbf{b} \in (\RR_{+} \cup 0)^{|\mathcal{G}|}\).
Treating \(v\) as a scalar potential, we may define the conservative vector field \(\mathbf{F}\):
    \begin{equation}
      \mathbf{F} = \nabla_{\mathbf{b}} v
    \end{equation}
    This formulation, in terms of a potential, ensures the path-independence of
the line integral of \(\mathbf{F}\) along any continuous curve \(C\) from \(0\) to
\(\mathbf{B}\). That is,
    \begin{equation}
    v(..., \mathbf{B}) - v(..., 0) = \int_C \mathbf{F}(\mathbf{b}) \cdot \dd{\mathbf{b}}
    \end{equation}
    Therefore, given a Lipshitz condition for \(\mathbf{F}\) along any curve \(C\)
    with endpoints \(0\) and \(\mathbf{B}\), \ie when there exists some finite
    \(\mathbf{L}\) such that
    \begin{align}
      \forall \mathbf{b} \in C,
      \quad
      \mathbf{F}(\mathbf{b}) \preceq \mathbf{L}
    \end{align}
    and therefore
    \begin{align}
      v(\disp, \mathbf{D}, \pi, \S, \mathbf{B}) &= v(..., 0) + \int_{C} \mathbf{F}(\mathbf{b}) \cdot \dd{\mathbf{b}} \\
      &\leq \disp(\pi, \S) + \mathbf{L} \cdot \mathbf{B}
    \end{align}
    By the bounded distribution shift assumption (\cref{asm:budget}), \cref{lem:v-of-B0-is-R}, and the definition of the supremum bound (\cref{def:v}), we conclude
    \begin{align}
      \disp(\pi, \R) \leq \disp(\pi, \S) + \mathbf{L} \cdot \mathbf{B}
    \end{align}
\end{proof}





\textbf{Proof of \cref{thm:subadditive}}:

\emph{Statement}:
 Suppose, in the region \(\mathbf{D}(\R \parallel \S) \preceq \mathbf{B}\), that
  \(w\) is subadditive in its last argument.
That is,
\(
    w(..., \mathbf{a}) + w(..., \mathbf{c})
    \geq w(..., \mathbf{a} + \mathbf{c})
    \) for \(\mathbf{a}, \mathbf{c} \succeq 0\) and \(\mathbf{a} + \mathbf{c} \preceq \mathbf{B}\).
Then, a local, first-order approximation of \(w(..., \mathbf{b})\) evaluated at
\(0\), \ie,
\begin{equation}
  \mathbf{L} =
  \nabla_{\mathbf{b}} w(..., \mathbf{b}) \big|_{\mathbf{b} = 0} =
  \nabla_{\mathbf{b}} v(..., \mathbf{b}) \big|_{\mathbf{b} = 0}
 \end{equation}
 provides an upper bound for \(v(..., \mathbf{B})\):
 \begin{equation}
   v(\disp, \mathbf{D}, \pi, \S, \mathbf{B})
   \leq
   \disp(\pi, \S) + \mathbf{L} \cdot \mathbf{B}
\end{equation}

\begin{proof}
  Represent
\begin{equation}
\mathbf{B} = \sum_g \mathbf{e}_g B_g
\end{equation}
Then, invoking the definition of the derivative as a Weierstrass limit from
elementary calculus, as well as \cref{lem:v-of-B0-is-R}, and by repeatedly
appealing to the assumed subadditivity condition within our domain, we find
\begin{subequations}
  \begin{align}
\mathbf{B} \cdot \mathbf{L}
    &=
\mathbf{B} \cdot \nabla_{\mathbf{b}} v(\pi, \S, \mathbf{b}) \big|_{\mathbf{b} = 0} \\
&= \sum_g B_g \frac{d}{d x} v (\pi, \S, x \mathbf{e}_g) \big|_{x = 0}\\
&= \sum_g B_g \lim_{N \to \infty} N \Big(v(\pi, \S, \frac{1}{N} \mathbf{e}_g) - v(\pi, \S, 0) \Big) \\
&= \sum_g B_g \lim_{N \to \infty} N \Big( w(\pi, \S, \frac{1}{N} \mathbf{e}_g) \Big) \\
&\geq \sum_g B_g ~ w(\pi, \S, \mathbf{e}_g) \\
&\geq \sum_g w(\pi, \S, B_g \mathbf{e}_g) \label{eq:subadditive2}\\
&\geq w(\pi, \S, \mathbf{B}) \label{eq:subadditive1}
\end{align}
\end{subequations}
Also recall (\cref{def:w})
\begin{align}
  w(\pi, \S, \mathbf{B}) &\define v(\pi, \S, \mathbf{B}) - \disp(\pi,\S)
\end{align}
Therefore, we obtain
\begin{equation}
v(\pi, \S, \mathbf{B}) \leq \disp(\pi, \S) + \mathbf{B} \cdot \mathbf{L}
\end{equation}
\end{proof}



\begin{lemma}
\label{lem:cs-positive-prediction-rate-difference}
For each group \(g\in \mathcal{G}\), under covariate shift,
\begin{equation}
  \begin{aligned}
\Pr_{\pi,\R}&\Big(\hat{Y} {=} 1 \mid G {=} g\Big) - \Pr_{\pi,\S}\Big(\hat{Y} {=} 1 \mid G {=} g \Big) = \cov_{\pi,\S}\Big[\omega_g(\R, \S, X), \Pr_{\pi(X, g)}(\hat{Y} {=} 1)\Big]
\end{aligned}
\end{equation}
\end{lemma}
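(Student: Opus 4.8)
The plan is to express each group-conditioned acceptance rate as an expectation of the policy's per-example acceptance probability under the relevant feature marginal, to convert the target expectation into a source expectation through the reweighting coefficient \(\omega_g\), and then to recognize the difference as a covariance by exploiting the fact that \(\omega_g\) has unit mean under \(\S\). Throughout, write \(a_g(x) \define \Pr_{\pi(x, g)}(\hat{Y} {=} 1)\) for the per-example acceptance probability. Since \(\pi\) is a fixed policy and, by \cref{eq:pr-notation}, \(\hat{Y}\) depends on an example only through \((X, G)\), this quantity is the same whether examples are drawn from \(\S\) or \(\R\); the covariate-shift assumption only reinforces that the outcome-distribution shift relevant to \(\hat{Y}\) is fully captured by the shift in the feature marginal.

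First I would marginalize over \(X\). Using the factorization of \cref{eq:pr-notation},
\begin{equation}
  \Pr_{\pi, \R}(\hat{Y} {=} 1 \mid G {=} g) = \int_{\mathcal{X}} a_g(x)\, \Pr_{\R}(X {=} x \mid G {=} g)\, \dd{x},
\end{equation}
and identically with \(\S\) in place of \(\R\). Next I would substitute the defining relation \(\Pr_{\R}(X {=} x \mid G {=} g) = \omega_g(\R, \S, x)\, \Pr_{\S}(X {=} x \mid G {=} g)\) into the target expression to rewrite it as a source expectation,
\begin{equation}
  \Pr_{\pi, \R}(\hat{Y} {=} 1 \mid G {=} g) = \E_{\S}\big[\, \omega_g(\R, \S, X)\, a_g(X) \mid G {=} g \,\big].
\end{equation}

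The crucial step — and the only one requiring any care — is the normalization identity \(\E_{\S}[\omega_g(\R, \S, X) \mid G {=} g] = \int_{\mathcal{X}} \Pr_{\R}(X {=} x \mid G {=} g)\,\dd{x} = 1\), which holds because \(\Pr_{\R}(\cdot \mid G {=} g)\) is a probability (density). Subtracting the source acceptance rate \(\E_{\S}[a_g(X) \mid G {=} g]\) and inserting this unit-mean factor converts the difference of expectations into a centered product,
\begin{equation}
  \Pr_{\pi, \R}(\hat{Y} {=} 1 \mid G {=} g) - \Pr_{\pi, \S}(\hat{Y} {=} 1 \mid G {=} g) = \E_{\S}[\omega_g a_g] - \E_{\S}[\omega_g]\,\E_{\S}[a_g] = \cov_{\pi, \S}\big[\omega_g(\R, \S, X),\, a_g(X)\big],
\end{equation}
where every expectation and the covariance are taken over \(X \sim \Pr_{\S}(X \mid G {=} g)\), matching the claimed right-hand side. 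I do not anticipate a genuine obstacle here; the proof is essentially a change of measure followed by the observation that \(\E_{\S}[\omega_g] = 1\). The points worth stating explicitly are this normalization identity (which is what licenses writing a plain product difference as a covariance) and the implicit requirement that \(\Pr_{\R}(\cdot \mid G {=} g)\) be absolutely continuous with respect to \(\Pr_{\S}(\cdot \mid G {=} g)\) so that \(\omega_g\) is well defined.
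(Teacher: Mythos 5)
Your proposal is correct and follows essentially the same route as the paper's proof: both establish the normalization identity \(\E_{\S}[\omega_g(\R,\S,X) \mid G{=}g] = 1\), perform the change of measure from \(\R\) to \(\S\) via \(\omega_g\), and then identify the resulting centered expectation as the covariance (the paper centers both factors explicitly, while you invoke \(\cov[A,B]=\E[AB]-\E[A]\E[B]\) directly, which is the same computation). Your added remark about absolute continuity of \(\Pr_{\R}(\cdot \mid G{=}g)\) with respect to \(\Pr_{\S}(\cdot \mid G{=}g)\) is a sound observation that the paper leaves implicit.
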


\begin{proof}
First, note that $\E_{\S}[\omega_g(\R, \S, x)] = 1$, since
\begin{align*}
   \E_{\S}[\omega_g(\R, \S, x)]
   &= \int_\mathcal{X}  \omega_g(\R, \S, x)\Pr_{\S}(X = x \mid  G{=}g) \dd{x}
   \\
   &= \int_\mathcal{X} \frac{\Pr_{\R}(X {=} x \mid  G{=}g)}{\Pr_{\S}(X {=} x \mid  G=g)} \Pr_{\S}(X {=} x \mid  G{=}g) \dd{x} \\
   &= \int_\mathcal{X} {\Pr_{\R}(X {=} x \mid  G{=}g)}  \dd{x}
   = 1
\end{align*}

Then, adopting the shorthand \(\omega_g(x) = \omega_g(\R, \S, x)\), we have:
\begin{align}
   &\Pr_{\pi,\R}\Big(\hat{Y} {=} 1 \mid G {=} g\Big) - \Pr_{\pi,\S}\Big(\hat{Y} {=} 1 \mid G {=} g \Big) \\
   =&\int_{\mathcal{X}} \Pr_{\pi(x, g)}(\hat{Y} {=} 1) \Pr_{\R}(X {=} x \mid  G {=} g) \dd{x}
    - \int_{\mathcal{X}} \Pr_{\pi(x, g)}(\hat{Y} {=} 1) \Pr_{\S}(X {=} x \mid  G {=} g) \dd{x}\\
   =& \int_{\mathcal{X}} \Pr_{\pi(x, g)}(\hat{Y} {=} 1) \Big( \omega_g(x) - 1 \Big) \Pr_{\S}(X {=} x \mid  G {=} g) \dd{x} \\
   =&\  \E_{\S} \Big[\Pr_{\pi(x, g)}(\hat{Y} {=} 1)  (\omega_g(x) - 1) \mid G {=} g \Big]\\
    =& \   \E_{\S} \Big[\Pr_{\pi(x, g)}(\hat{Y} {=} 1)  (\omega_g(x) - \E_{\S}[\omega_g(x) ]) \mid G {=} g  \Big] \tag{since $\E_{\S}[\omega_g(x)] = 1$}\\
    =& \   \E_{\S} \Big[(\Pr_{\pi(x, g)}(\hat{Y} {=} 1)- \E_{\S} [\Pr_{\pi(x, g)}(\hat{Y} {=} 1)] (\omega_g(x) - \E_{\S}[\omega_g(x) ]) \mid G {=} g  \Big] \tag{$\E[f(x) - \E[f(x)]] = 0$}\\
    =& \ \cov_{\pi, \S}\Big[w_g(\R, \S, X), \Pr_{\pi(x, g)}(\hat{Y} {=} 1)\Big]
 \end{align}
\end{proof}

\begin{lemma}
\label{lemma:01rv-variance-expecation-bound}
If $X$ is a random variable and $X \in [0,1]$, then $\var(X)\leq \E[X](1-\E(X))$.
\end{lemma}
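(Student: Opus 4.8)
The plan is to reduce the claimed inequality to the elementary pointwise fact that $x^2 \le x$ on the unit interval. First I would expand the variance using the standard identity $\var(X) = \E[X^2] - (\E[X])^2$. Substituting this into the desired inequality $\var(X) \le \E[X](1 - \E[X]) = \E[X] - (\E[X])^2$, the terms $-(\E[X])^2$ cancel on both sides, so the whole statement collapses to the single claim
\begin{equation}
  \E[X^2] \le \E[X].
\end{equation}

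Next I would establish this remaining inequality by a pointwise bound. Since $X$ takes values in $[0,1]$, for every realized value $x$ we have $x^2 \le x$ (equivalently $x(1-x) \ge 0$). By monotonicity of expectation, taking expectations of both sides of $X^2 \le X$ preserves the inequality, yielding $\E[X^2] \le \E[X]$ directly. Combining this with the algebraic reduction above completes the argument.

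There is essentially no serious obstacle here; the only point requiring a moment's care is justifying the reduction cleanly (confirming that the quadratic terms in $\E[X]$ cancel exactly) and invoking monotonicity of expectation on the pointwise inequality $X^2 \le X$, which holds almost surely because $X$ is supported on $[0,1]$. The result is a one-line bound once the variance is expanded, and it is exactly the estimate needed to convert the covariance expression of \cref{lem:cs-positive-prediction-rate-difference} into a variance-based bound on the reweighting coefficient in the proof of \cref{thm:dp-covariate}.
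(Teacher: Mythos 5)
Your proof is correct and follows exactly the paper's argument: expand $\var(X) = \E[X^2] - (\E[X])^2$ and use the pointwise bound $X^2 \le X$ on $[0,1]$ to get $\E[X^2] \le \E[X]$. No differences worth noting.
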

\begin{proof}
\begin{align*}
    \var[X] &= \E[(X - \E[X])^2]\\
    &= \E[X^2] - (\E[X])^2\\
    &\leq \E[X]- (\E[X])^2 \tag{$X\in[0,1]$}\\
    & = \E[X](1-\E(X))
\end{align*}
\end{proof}

\textbf{Proof of \cref{thm:dp-covariate}}:

\emph{Statement}:
For demographic parity between two groups under covariate shift (denoting, for
each \(g\), \(\beta_{g} \define \Pr_{\pi,\S}(\hat{Y} {=} 1 \mid G {=} g)\)),
  \begin{equation}
    \disp_{\DP}(\pi, \R)  \leq \disp_{\DP}(\pi, \S) + \sum_g \Big( \beta_{g}(1 - \beta_{g}) 
    B_g \Big)^{\nicefrac{1}{2}}
  \end{equation}
\begin{proof}
Again adopting the shorthand \(\omega_g(x) = \omega_g(\R, \S, x)\),
\begin{align}
& \disp(\pi, \R) \\&=
    \Big|\Pr_{\pi,\R}\Big(\hat{Y} {=} 1 \mid G {=} g\Big) - \Pr_{\pi,\R}\Big(\hat{Y} {=} 1 \mid G {=} h \Big)\Big|\\
    &=  \Big|\Pr_{\pi,\R}\Big(\hat{Y} {=} 1 \mid G {=} g\Big) - \Pr_{\pi, \S}\Big(\hat{Y} {=} 1 \mid G {=} g\Big)\\
    & \qquad + \Pr_{\pi, \S}\Big(\hat{Y} {=} 1 \mid G {=} g\Big)
- \Pr_{\pi,\S}\Big(\hat{Y} {=} 1 \mid G {=} h \Big)\\
    & \qquad +  \Pr_{\pi,\S}\Big(\hat{Y} {=} 1 \mid G {=} h \Big)
- \Pr_{\pi,\R}\Big(\hat{Y} {=} 1 \mid G {=} h \Big)\Big|\\
    &\leq \disp(\pi, \S) + \cov_{\S}\Big[\omega_g(x), \Pr_{\pi(x, g)}(\hat{Y} {=} 1)\Big] + \cov_{\S}\Big[\omega_h(x), \Pr_{\pi(x, h)}(\hat{Y} {=} 1)\Big]
    \tag{By \Cref{lem:cs-positive-prediction-rate-difference}} \\
    & \leq  \disp(\pi, \S) + \sqrt{\var_{\S}[\omega_g(x)]}\cdot\sqrt{\var_{\S}[\Pr_{\pi(x, g)}(\hat{Y} {=} 1)]} + \sqrt{\var_{\S}[\omega_h]}\cdot\sqrt{\var_{\S} [\Pr_{\pi(x, h)}(\hat{Y} {=} 1)]} \tag{$\big|\cov[a,b]\big| \leq \sqrt{\var[a]}\cdot \sqrt{\var[b]}$} \\
    & \leq \disp(\pi, \S) + \sqrt{\var_{\S}[\omega_g(x)]} \cdot \sqrt{\E_{\S}[\Pr_{\pi(x, g)}(\hat{Y} {=} 1)](1 - \E_{\S} [\Pr_{\pi(x, g)}(\hat{Y} {=} 1)])} \\
    & \qquad \qquad \quad
    + \sqrt{\var_{\S}(\omega_h(x))}\cdot \sqrt{\E_{\S} [\Pr_{\pi(x, h)}(\hat{Y} {=} 1)](1- \E_{\S} [\Pr_{\pi(x, h)}(\hat{Y} {=} 1))]} \tag{$\hat{Y}\in\{0,1\}$, and \Cref{lemma:01rv-variance-expecation-bound}}\\
    \label{eq:demongraphic-parity-covariate-shift-bound}
    & = \disp(\pi, \S) +  \sqrt{\var_{\S}[\omega_g(x)]}\cdot\sqrt{\beta_g(1-\beta_g)} + \sqrt{\var_{\S}[\omega_h(x)]}\cdot\sqrt{\beta_h(1-\beta_h)}
    \tag{$\beta_g = \Pr_{\pi, \S}(\hat{Y} {=}1 | G {=} g) = \E_{\S} [\rr{\mathbbm{1}}_{\pi(x, g)}(\hat{Y} {=} 1)]$}\\
    & = \disp(\pi, \S) + \sum_g \Big( \beta_{g}(1 - \beta_{g}) \var_{\S}[\omega_g(\R, \S, x)] \Big)^{\nicefrac{1}{2}}
    \end{align}
\end{proof}

\textbf{Proof of \cref{cor:dp-covariate-multi}}:
\emph{Statement}:
\cref{thm:dp-covariate} \cref{thm:dp-covariate} may be generalized to multiple classes
\(\mathcal{Y} = \{1, 2, ..., m\}\)
 and multiple groups \(\mathcal{G}\in\{1, 2, ..., n\}\),
 \begin{equation}
  \dispdp(\pi, \R) \define \sum_{y \in \mathcal{Y}} \sum_{g,h \in \mathcal{G}} \Big|
  \Pr_{\pi, \R}(\hat{Y} {=} y \mid G {=} g) -
  \Pr_{\pi, \R}(\hat{Y} {=} y \mid G {=} h)
  \Big|
\end{equation}
\begin{equation}
  \disp_{\DP}(\pi, \R) \leq \disp_{\DP}(\pi, \S) + \sum_{y \in \mathcal{Y}} \sum_{g,h \in \mathcal{G}}
  \big(\beta_{g,y}(1 - \beta_{g,y}) B_g\big)^{\nicefrac{1}{2}}
\end{equation}
where \(\beta_{g,y} =\Pr(\hat{Y}{=}y \mid G{=}g)\), and assuming $\var_{\S}[w_g(\S, \R, X)] \leq B_g$.

\begin{proof}
We again adopt the shorthand \(\omega_g(x) = \omega_g(\R, \S, x)\).
We first generalize \cref{lem:cs-positive-prediction-rate-difference}.
For each group \(g\in \mathcal{G}\), under covariate shift, for all \(y \in \mathcal{Y}\),
\begin{equation}
  \begin{aligned}
\Pr_{\pi,\R}&\Big(\hat{Y} {=} y \mid G {=} g\Big) - \Pr_{\pi,\S}\Big(\hat{Y} {=} y \mid G {=} g \Big) = \cov_{\pi,\S}\Big[\omega_g(\S, \R, X), \Pr_{\pi(X, g)}(\hat{Y} {=} y)\Big]
\end{aligned}
\end{equation}
Retracing the logic of \cref{thm:dp-covariate}, for \(\var_{\S}[\omega_g(\R, \S, x)] \leq B_g\), it follows that
\begin{align}
  \dispdp(\pi, \R)
  &\define \sum_{y \in \mathcal{Y}} \sum_{g,h \in \mathcal{G}} \Big|
  \Pr_{\pi, \R}(\hat{Y} {=} y \mid G {=} g) -
  \Pr_{\pi, \R}(\hat{Y} {=} y \mid G {=} h)
  \Big|  \\
  &\leq \dispdp(\pi, \S) + \sum_{y \in \mathcal{Y}} \sum_{g,h \in \mathcal{G}}
  \sqrt{\big(\beta_{g,y}(1 - \beta_{g,y}) \var_{\S}[\omega_g(x)]\big)}\\
    &\leq \dispdp(\pi, \S) + \sum_{y \in \mathcal{Y}} \sum_{g,h \in \mathcal{G}}
  \sqrt{\big(\beta_{g,y}(1 - \beta_{g,y}) B_g\big)}\\
  & = \dispdp(\pi, \S) + \sum_{y \in \mathcal{Y}} \sum_{g,h \in \mathcal{G}}
  \big(\beta_{g,y}(1 - \beta_{g,y}) B_g\big)^{\nicefrac{1}{2}}
\end{align}
\end{proof}

\textbf{Proof of \cref{thm:cov-EO-superbound}}

\emph{Statement}:
 Subject to covariate shift and any given \(\mathbf{D}, \mathbf{B}\), assume extremal values for \(\beta_{g}^{+}\), \ie,
\begin{equation}
\forall g, ~~
D_{g}(\R \parallel \S) < B_{g} \implies
l_g \leq \beta^{+}_{g}(\pi, \R) < u_g
\end{equation}
then, for \(v\) corresponding to \(\disp_{\EOp}\),
\begin{equation}
v(\disp_{\EOp}, \mathbf{D}, \pi, \S, \mathbf{B}) \leq \max_{x_g \in \{l_g, u_g\}} \sum_{g, h}\Big|
  x_g - x_h
\Big|
\end{equation}

\begin{proof}
  Recall that, for this setting,
  \begin{equation}
    v(\disp_{\EOp}, \mathbf{D}, \pi, \S, \mathbf{B}) = \sup_{\mathbf{D}(\R \parallel \S) \preceq \mathbf{B}} \disp_{\EOp}
  \end{equation}
 and
 \begin{equation}
   \disp_{\EOp} = \sum_{g,h} |\beta_{g}^{+} - \beta_{h}^{+} |
 \end{equation}
  This latter expression is convex in each \(\beta_{g}^{+}\). Therefore, \(\disp_{\EOp}\)
  is maximized on the boundary of its domain, \ie \(\beta^{+}_{g} \in \{l_{g}, u_{g}\}\)
  for each \(g\), given the assumption of the theorem.
\end{proof}

\textbf{Proof of \cref{cor:max-cov-EO}}

\emph{Statement}:
\rr{The disparity measurement} \(\disp_\EOp\) cannot exceed \(\frac{|\mathcal{G}|^2}{4}\).

\begin{proof}
  We note that each \(\beta^{+}_{g}\) is ultimately confined to the interval
\([0, 1]\).  Building on our proof for \cref{thm:cov-EO-superbound}, to maximize
\(\disp_{\EOp}\), we must consider the boundary of this domain, where, for each
\(g\), \(\beta^{+}_{g} \in \{0, 1\}\). Because the only terms that contribute to
\(\disp_{\EOp}\) are those in which \(\beta_{g}^{+} = 1\) and
\(\beta_{h}^{+} = 0\) (as opposed to \(\beta_{g}^{+} = \beta_{h}^{+}\)), we seek to
maximize the number of such terms. This occurs when as close to half of the
groups as possible have one extremal true positive rate (\eg, without loss of
generality, \(\beta^{+}_{g} = 1\)) and the remaining groups have the other (\eg,
\(\beta^{+}_{g} = 0\)). In such cases, \(\disp_{\EOp}\) is given by
\begin{equation}
\max \disp_{\EOp} = \floor{\frac{\mathcal{G}}{2}} \ceil{\frac{\mathcal{G}}{2}} \leq \frac{|\mathcal{G}|^{2}}{4}
\end{equation}
\end{proof}

\textbf{Proof of \cref{thm:yangsthm}}:

\emph{Statement}:
A Lipshitz condition bounds \(\nabla_\mathbf{b} v(\disp_\DP, \mathbf{D}, \pi, \S, \mathbf{b})\) when
\begin{equation}
    D_{g}(\R \parallel \S) \define \Big| Q_{g}(\mathcal{S}) - Q_{g}(\mathcal{\R}) \Big| \leq B_{g}
\end{equation}
Specifically,
\begin{equation}
  \pdv{}{b_{g}} v(\disp_\DP, \mathbf{D}, \pi, \S, \mathbf{b}) \leq
  (|\mathcal{G}| - 1)
  \Big| \beta^{+}_{g} - \beta^{-}_{g} \Big|
\end{equation}
for true positive rates \(\beta^{+}_{g}\) and false positive rates \(\beta^{-}_{g}\):
\begin{equation}
    \beta^{+}_{g}
\define \Pr_{\pi}( \hat{Y} {=} 1 {\mid} Y {=} 1, G {=} g ) \quad ; \quad \beta^{-}_{g} \define \Pr_{\pi}( \hat{Y} {=} 1 {\mid} Y {=} 0, G {=} g )
\end{equation} 
\begin{proof}
We first establish that \(D_g^{(\DP)}(\R \parallel \S) = |Q_{g}(\S) - Q_{g}(\R)|\), where
\begin{equation}
Q_{g}(\R) \define \Pr_{\mathcal{T}}(Y {=} 1 \mid G {=} g)
\end{equation}
is an appropriate measure of
group-conditioned distribution shift (\cref{def:vector-divergence}).
That \(\mathbf{D}\) satisfies
the axioms of a divergence on group-conditioned distributions subject to the label shift assumption  (\(\Pr_\R(X \mid Y, G) = \Pr_\S(X \mid Y, G)\)) and unchanging group sizes is easily verified:
\begin{align}
  \forall \S, \R, \quad D_g(\R \parallel \S) = |Q_{g}(\S) - Q_{g}(\R)| &\geq 0 \\
 D_g(\R\parallel \S) = |Q_{g}(\R) - Q_{g}(\R)| &= 0
 \end{align}
 and
 \begin{align}
 \forall g, \quad D_g(\R \parallel \S) = 0 &\implies \Pr_\R(Y\mid  G) = \Pr_\S(Y \mid G) \\
 & \implies \Pr_\R(Y, X \mid G) = \Pr_\S(Y, X \mid G)
\end{align}
We next show that
\((|\mathcal{G}| - 1)|\beta_{g}^{+} - \beta_{g}^{-}|\) is the corresponding Lipshitz bound for the slope of
\(v\) with respect to \(B_g\), where we recall
\begin{align}
  \forall g, \quad  \beta^{+}_{g} \define \Pr_{\pi, \R}\Big( \hat{Y} {=} 1 \mid Y {=} 1, G {=} g \Big)  \\
  \forall g, \quad  \beta^{-}_{g} \define \Pr_{\pi, \R}\Big( \hat{Y} {=} 1 \mid Y {=} {-}1, G {=} g \Big)
\end{align}
That is, we wish to show
\begin{equation}
  \pdv{}{b_{g}} v(\dispdp, \mathbf{D}, \pi, \S, \mathbf{b}) \leq (|\mathcal{G}| - 1)| \beta_{g}^{+} - \beta_{g}^{-} |
\end{equation}
This follows directly from recognition that \(\dispdp\) is locally always affine in the
acceptance rate for each group, with slope bounded by one less than the number of groups.
\begin{equation}
  \dispdp = \sum_{g,h \in \mathcal{G}} | \beta_{g} - \beta_{h} | \implies
  \pdv{}{\beta_{g}} \dispdp \leq |\mathcal{G}| - 1
\end{equation}
By the definition of conditional probability,
\begin{align}
  \beta_{g} &\define \Pr(\hat{Y} {=} 1) = \beta_{g}^{+} Q_{g} + \beta_{g}^{-}(1 - Q_{g}) \\
  \pdv{}{Q_{g}} \beta_{g} &= \beta_{g}^{+} - \beta_{g}^{-}
\end{align}
It follows by the chain rule that, for all \(\R\) mutated from \(\S\) subject to label shift,
\begin{equation}
  \pdv{}{Q_{g}(\R)} \dispdp(\pi, \R) \leq (|\mathcal{G}| - 1) |\beta_{g}^{+} - \beta_{g}^{-}|
\end{equation}
By the linearity of derivatives, for fixed \(\S\), this implies that for all \(\R\) attainable via label shift,
\begin{align}
    \pdv{}{|Q_{g}(\R) - Q_{g}(\S)|}
  \dispdp(\pi, \R)
    \leq
  (|\mathcal{G}| - 1) |\beta_{g}^{+} - \beta_{g}^{-}|
\end{align}
Since this equation holds for all \(\R\), it must also hold when evaluated at \(v\), the supremum of \(\disp\). It follows that
\begin{equation} \label{eqA:dp-label-target}
  \pdv{}{B_{g}} v(\dispdp, \mathbf{D}^{(\DP)}, \pi, \S, \mathbf{B}) \leq (|\mathcal{G}| - 1) |\beta_{g}^{+} - \beta_{g}^{-}|
\end{equation}

\end{proof}

\textbf{Proof of \cref{thm:dp-label}}:

\emph{Statement}:
For \DP under the bounded label-shift assumption $\forall g,  | Q_{g}(\S) - Q_{g}(\R) | \leq B_{g}$,
\begin{equation}
\begin{aligned}
 \dispdp(\pi, \R) \leq \dispdp(\pi, \S) + (|\mathcal{G}| - 1)
\sum_g B_{g} \ \Big|\beta^{+}_g - \beta^{-}_g \Big|
\end{aligned}
\end{equation}
\begin{proof}

This follows from the Lipshitz property implied by \cref{thm:yangsthm} (\cref{eqA:dp-label-target}) and
\cref{thm:lipshitz}.
\end{proof}

\subsection{Omitted details for \Cref{sec:strategic-response}}
\label{sec:proof-strategic-response}

\begin{lemma}
  \label{lemma:sr-w_g(x)}
    Recall the covariate shift reweighting coefficient \(\omega_{g}(x)\), defined in
    \cref{sec:covariateDP}.
    \begin{equation}
         \omega_g(x) \define \frac{\Pr_{\R}(X {=} x \mid G {=} g)}{\Pr_{\S}(X {=} x \mid  G {=} g)}
    \end{equation}
    For our assumed setting,
    \begin{align}\label{eq:strategic-omega}
   \omega_g(x) = 
    \begin{cases}
    1, & x\in [0, \tau_g - {m_{g}}) \\
    \frac{\tau_g - x}{m_{g}}, & x\in [\tau_g - {m_{g}} ,\tau_g)\\
    \frac{1}{m_{g}}(- x + \tau_g + 2m_{g}), & x\in [\tau_g, \tau_g + m_{g})\\
    1, & x \in [\tau_g + m_{g}, 1]
    \end{cases}
\end{align}
\end{lemma}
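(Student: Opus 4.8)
The plan is to use that the source is uniform: since $\Pr_\S(X{=}x\mid G{=}g)=1$ on $[0,1]$, the reweighting coefficient $\omega_g(x)$ is exactly the target density $\Pr_\R(X{=}x\mid G{=}g)$. I would therefore compute this density as the pushforward of the uniform source under the stochastic best-response described by the stated manipulation assumptions, partitioning $[0,1]$ according to where probability mass originates.

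First I would classify agents by incentive and reachability. Because $\pi$ is a threshold rule, $\beta_g(x)=\Pr(\hat{Y}{=}1\mid X{=}x,G{=}g)=\mathbf{1}_{x\ge\tau_g}$, so an agent already above $\tau_g$ gains nothing by moving, while an agent with $x<\tau_g-m_g$ cannot reach $\tau_g$ within budget $m_g$ under cost $c_g(x,x')=|x'-x|$. Hence only agents with $x\in[\tau_g-m_g,\tau_g)$ attempt to move; each succeeds with probability $1-\frac{\tau_g-x}{m_g}$ and, on success, lands uniformly on the window above $\tau_g$ of length equal to the leftover budget $m_g-(\tau_g-x)$, i.e. on $[\tau_g,x+m_g)$. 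Consequently mass can leave only $[\tau_g-m_g,\tau_g)$ and can arrive only in $[\tau_g,\tau_g+m_g)$. This at once settles the two trivial regions: on $[0,\tau_g-m_g)$ and on $[\tau_g+m_g,1]$ there is neither inflow nor outflow, so $\omega_g(x)=1$; and on $[\tau_g-m_g,\tau_g)$ no mass arrives, so the surviving density is the source density times the stay-probability, $\omega_g(x)=\frac{\tau_g-x}{m_g}$.

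The only substantive computation is the inflow region $[\tau_g,\tau_g+m_g)$, where I add the retained above-threshold mass (density $1$) to the inflow from all source positions $x_0$ that can reach $x$, namely $x_0\in[x-m_g,\tau_g)$. The step I expect to carry the argument is that the success probability $\frac{x_0+m_g-\tau_g}{m_g}$ exactly cancels the normalization $\frac{1}{x_0+m_g-\tau_g}$ of the uniform landing density, collapsing the integrand to a constant:
\[
\int_{x-m_g}^{\tau_g}\Big(1-\frac{\tau_g-x_0}{m_g}\Big)\frac{1}{x_0+m_g-\tau_g}\,dx_0=\int_{x-m_g}^{\tau_g}\frac{1}{m_g}\,dx_0=\frac{\tau_g+m_g-x}{m_g}.
\]
Adding the retained density $1$ gives $\omega_g(x)=1+\frac{\tau_g+m_g-x}{m_g}=\frac{-x+\tau_g+2m_g}{m_g}$, matching the claim.

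Finally I would run a normalization check: the mass lost from $[\tau_g-m_g,\tau_g)$ and the mass gained on $[\tau_g,\tau_g+m_g)$ both integrate to $m_g/2$, so $\int_0^1\omega_g(x)\,dx=1$ and the piecewise expression is a genuine density. The main obstacle is bookkeeping rather than analysis: correctly identifying, for each target point $x$, the set of originating features $x_0$ and the leftover-budget window length; once the integrand simplifies to $1/m_g$ the remaining steps are elementary.
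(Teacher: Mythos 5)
Your proposal is correct and takes essentially the same approach as the paper's proof: a four-region case analysis whose only substantive computation is the inflow region $[\tau_g,\tau_g+m_g)$, where the success probability $1-\frac{\tau_g-x_0}{m_g}$ cancels the uniform-landing normalization $\frac{1}{x_0+m_g-\tau_g}$ to leave the constant integrand $\frac{1}{m_g}$, exactly as in the paper. Your reading of the landing window as $[\tau_g, x_0+m_g)$ is the one the paper's own integral actually uses (the interval as literally written in the landing assumption appears to be a typo), and your closing normalization check is a sound extra sanity test.
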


Proof for \Cref{lemma:sr-w_g(x)}:
\begin{proof}
We discuss the target distribution by cases:
\begin{itemize}
    \item For the target distribution between $[0, \tau_g - M_g]$: since we assume the agents are rational, under assumption \ref{assumption:cost-function}, agents with feature that is smaller than $[0, \tau_g - M_g]$ will not perform any kinds of adaptations, and no other agents will adapt their features to this range of features either, so the distribution between $[0, \tau_g - M_g]$ will remain the same as before. 
    \item For target distribution between $[\tau_g - M_g, \tau_g]$, it can be directly calculated from assumption \ref{assumption:nondeterministic-feature-update}.
    \item For distribution between $[\tau_g, \tau_g + M_g]$, 
consider a particular feature $x^\star\in [\tau_g, \tau_g + M_g]$, under \Cref{assumption:new-feature-uniform-distribution}, we know its new distribution becomes:
\begin{align*}
    \Pr_\R (x = x^\star) &= 1 + \int_{x^\star - M_g}^{\tau_g} \frac{1 - \frac{\tau_g - z}{M_g}}{M_g - \tau_g + z} dz\\
    &= 1 +  \int_{x^\star - M_g}^{\tau_g} \frac{1}{M_g}d z\\
    & = \frac{1}{M_g} (-x^\star + \tau_g + 2M_g)
\end{align*}
\item For the target distribution between $[\tau_g + M_g, 1]$: under assumption \ref{assumption:cost-function} and \ref{assumption:new-feature-uniform-distribution}, we know that no agents will change their feature to this feature region. So the distribution between $[\tau_g + M_g, 1]$ remains the same as the source distribution.
\end{itemize}

Thus, the new feature distribution of $x^{(M_g)}_{\tau_g}$ after agents from group $g$ strategic responding becomes:
\begin{align}
   \Pr_\R(x) = \Pr(x^{(M_g)}_{\tau_g}) = 
    \begin{cases}
    1, \ \ & x\in [0, \tau_g - {M_g}) ~\text{and}~  x\in [\tau_g + M_g, 1]\\
    \frac{\tau_g - x}{M_g}, & x\in [\tau_g - {M_g} ,\tau_g)\\
    \frac{1}{M_g}(- x + \tau_g + 2M_g), & x\in [\tau_g, \tau_g + M_g)\\
     0,  & \text{otherwise}
    \end{cases}
\end{align}
\end{proof}

\textbf{Proof of \Cref{proposition:bound-strategic-response}}:

\emph{Statement}:
  For our assumed setting of strategic response involving \DP for two groups \(\{g, h\}\),
\cref{thm:dp-covariate} implies
\vspace{-0.1in}
\begin{equation}
    \dispdp(\pi, \R) 
    \leq \dispdp(\pi, \S) + \tau_g(1-\tau_g)\frac{2}{3}m_{g}  +
    \tau_h(1 - \tau_h)\frac{2}{3}m_{h}
\end{equation}

\begin{proof}
According to \Cref{lemma:sr-w_g(x)}, we can compute the variance of $w_g(x)$: $\var(w_g(x)) = \E\Big[\big(w_g(x)- \E[w_g(x)]\big)^2\Big] = \frac{2}{3}M_g$. Then by plugging it to the general bound for \Cref{thm:dp-covariate} gives us the result.
\end{proof}

\textbf{Proof of \cref{thm:label-replicator}}:

\emph{Statement}:
  For \DP subject to label replicator dynamics,
  \begin{equation}
  \begin{aligned}
\dispdp(\pi, \R) \leq \dispdp(\pi, \S) + \sum_g \Big|Q_{g}[t + 1] - Q_{g}[t]\Big| \frac{| \rho^{\TP}_{g} - \rho^{\FP}_{g} |}{\rho^{\TP}_{g} + \rho^{\FP}_{g}}
  \end{aligned}
\end{equation}

\begin{proof}
  We may directly substitute
  \begin{align*}
    &|\mathcal{G}| = 2 \\
    &B_{g} = \Big|Q_{g}[t + 1] - Q_{g}[t]\Big| \\
 &\Big|\beta^{+}_g - \beta^{-}_g \Big|
  =
\frac{| \rho^{\TP}_{g} - \rho^{\FP}_{g} |}{\rho^{\TP}_{g} + \rho^{\FP}_{g}}
  \end{align*}
  into \cref{thm:dp-label}.
\end{proof}

\textbf{Proof of \cref{thm:geometric-beta}}:

\emph{Statement}:
The true positive rate \(\beta^{+}_{g}\) is bounded over the domain of covariate
shift \(\DD_{\text{cov}}[\mathbf{B}]\), which we define by the bound
\(\mathbf{D}(\R \parallel \S) \preceq \mathbf{B}\), and the invariance of
\(\Pr(Y {=} 1 \mid X {=} x, G {=} g)\) for all \(x, g\), as
\begin{equation}
\frac{\cos(\phi_g^u)}{\cos(\xi_g - \phi_g^u)} \leq \beta^{+}_{g}(\pi, \R) \leq \frac{\cos (\phi_g^l)}{\cos(\xi_g - \phi_g^l)}
\end{equation}
where
\begin{equation}
\phi_g^l \define \min_{\D \in \DD_\text{cov}[\mathbf{B}]} \phi_g[\D]; \quad
\phi_g^u \define \max_{\D \in \DD_\text{cov}[\mathbf{B}]} \phi_g[\D]
\end{equation}

\begin{proof}
To be rigorous, we may give
an explicit expression for \(\mathsf{r}_{g}^{\perp}\) by implicitly forming a basis in the
\((\mathsf{1}, \mathsf{t}_{g})\)-plane via the Gram-Schmidt process.
\begin{align}
  \mathsf{r}_{g}^{\perp}
  &\define
    \langle \mathsf{r}_{g}, \mathsf{t}_{g} \rangle_{g}
    \frac{\mathsf{t}_{g}}{\|\mathsf{t}_{g}\|^{2}}
    +
    \langle \mathsf{r}_{g}, \mathsf{u}_{g} \rangle_{g}
    \frac{\mathsf{u}_{g}}{\|\mathsf{u}_{g}\|^{2}} \\
  \mathsf{u}_{g}
  &\define
    \mathsf{1} -
 \langle \mathsf{1}, \mathsf{t}_{g} \rangle \frac{\mathsf{t}_{g}}{\|\mathsf{t}_{g}\|^{2}} \\
\end{align}
From which we may verify that
\begin{align}
 \langle \mathsf{u}_{g}, \mathsf{t}_{g} \rangle  &= 0 \\
  \langle \mathsf{r}_{g}^{\perp}, \mathsf{t}_{g} \rangle_{g}
  &=\langle \mathsf{r}_{g}, \mathsf{t}_{g} \rangle_{g} \\
  \langle \mathsf{r}_{g}^{\perp}, \mathsf{u}_{g} \rangle_{g}
  &=\langle \mathsf{r}_{g}, \mathsf{u}_{g} \rangle_{g} \\
  \langle \mathsf{r}_{g}^{\perp}, \mathsf{1} \rangle_{g} &=
\langle \mathsf{r}_{g}, \mathsf{1} \rangle_{g}
\end{align}
Recalling the relationship between the cosine of an angle between two vectors and
inner products:
\begin{align}
\cos(\angle(a, b)) = \frac{\langle a, b \rangle}{\|a\| \|b\|}
\end{align}
It follows from \cref{eq:inner-product} that, defining \(\xi_{g} \define \angle(\mathsf{t}_{g}, \mathsf{1})\),
\begin{equation}\label{eq:monotonic92}
  \beta^{+}_{g}
\frac{\|\mathsf{1}\|}{\|\mathsf{t}_g\|}
    =
  \frac{\cos(\angle(\mathsf{r}_{g}, \mathsf{t}_{g}))}
       {\cos(\angle(\mathsf{r}_{g}, \mathsf{1}))}
=
  \frac{\cos(\angle(\mathsf{r}_{g}^{\perp}, \mathsf{t}_{g}))}
  {\cos(\angle(\mathsf{r}_{g}^{\perp}, \mathsf{1}))}
  =
    \frac{\cos(\phi_{g})}{\cos(\xi_{g} - \phi_{g})}
\end{equation}
By the monotonicity of the final expression above with respect to \(\phi_{g}\), for fixed \(\xi_{g}\):
\begin{equation}\label{eq:monotonic93}
\dv{}{x} \frac{\cos(x)}{\cos(\xi - x)} = -\frac{\sin(x)\cos(\xi - x) + \cos(x)\sin(\xi - x)}{\cos^{2}(\xi - x)} = -\frac{\sin(\xi)}{\cos^{2}(\xi - x)}
\end{equation}
We note that \cref{eq:monotonic93} is strictly negative, thus the expression in
\cref{eq:monotonic92} must be monotonic for fixed \(\xi\). We may conclude that
\(\beta^{+}_{g}\) is extremized with extremal values of \(\phi_{g}\), denoted as
\(\phi_{g}^{u}\) and \(\phi_{g}^{l}\).
\end{proof}

\end{document}